\newcommand{\alglinelabel}{%
  \addtocounter{ALC@line}{-1}
  \refstepcounter{ALC@line}
  \label
}
\newcommand{\indep}{\perp \!\!\! \perp}
\newcommand{\blue}{\textcolor{blue}}
\newcommand{\h}{\mathcal{H}}
\newcommand{\x}{\mathbf{x}}
\newcommand{\y}{\mathbf{y}}
\newcommand{\X}{\mathbf{X}}
\newcommand{\Oc}{\mathcal{O}}
\newcommand{\Xc}{\mathcal{X}}
\newcommand{\Xp}{{\hat{\mathbf{X}}}}
\newcommand{\xp}{{\hat{\mathbf{x}}}}
\newcommand{\Gp}{{\hat{{G}}}}
\newcommand{\Y}{\mathbf{Y}}
\newcommand{\V}{\mathbf{V}}
\newcommand{\vb}{\mathbf{v}}
\newcommand{\mbf}{\mathbf}
\newcommand{\Do}{\text{do}}
\newcommand{\data}{\mathcal{D}}
\newcommand{\idgen}{\text{ID-GEN}\xspace}
\newcommand{\idcdag}{\text{IDC-GEN}\xspace}
\newcommand{\condgm}{\texttt{ConditionalGMs(.)}\xspace}
\newcommand{\idmerge}{\texttt{MergeNetwork(.)}\xspace}
\newcommand{\stepsev}{\texttt{Update(.)}\xspace}
\newcommand{\idgenpar}{\idgen$(\mathbf{Y}, \mathbf{X},  G,  \data, {\Xp}, \Gp)$\xspace}
\newcommand{\ydox}{P_{\mathbf{x}}(\mathbf{y})\xspace}
\newcommand{\WL}{\mathrm{WearingLipstick}\xspace}
\newcommand{\AT}{\mathrm{Attractive}\xspace}
\newcommand{\YO}{\mathrm{Young}\xspace}
\newcommand{\MA}{\mathrm{Male}\xspace}
\newcommand{\FE}{\mathrm{Female}\xspace}
\newcommand{\HM}{\mathrm{HeavyMakeup}\xspace}
\theoremstyle{plain}
\newtheorem{theorem}{Theorem}[section]
\newtheorem{proposition}[theorem]{Proposition}
\newtheorem{lemma}[theorem]{Lemma}
\theoremstyle{definition}
\newtheorem{definition}[theorem]{Definition}
\newtheorem{example}[theorem]{Example}
\newtheorem{assumption}[theorem]{Assumption}
\theoremstyle{remark}
\tikzset{
	-Latex,auto,node distance =1 cm and 1 cm,semithick,
	state/.style ={ellipse, draw, minimum width = 0.7 cm},
	point/.style = {circle, draw, inner sep=0.04cm,fill,node contents={}},
  	nnh/.style={
		 rectangle, draw,thick,minimum width=1.5cm,minimum height=1.0cm
	},
	 nnv/.style={
    rectangle, draw, very thick, fill=gray!28, inner sep=0.04cm, minimum width=1.2cm, minimum height=1.2cm, rounded corners=0.05cm
  },
   nnvsm/.style={
    rectangle, draw, very thick, fill=gray!28, inner sep=0.0cm, minimum width=1.0cm, minimum height=1.0cm, rounded corners=0.05cm
  },
   outer/.style={ inner sep=3pt, fill=blue!15
  },
  outer1/.style={ inner sep=3pt, fill=green!15
  },
  outer1t/.style={ inner sep=0pt, fill=green!15
  },
  louter/.style={ inner sep=5pt, fill=blue!15
  },
	XOR/.style={draw,circle,append after command={
			[shorten >=\pgflinewidth, shorten <=\pgflinewidth,]
			(\tikzlastnode.north) edge (\tikzlastnode.south)
			(\tikzlastnode.east) edge (\tikzlastnode.west)
		}
	},
	bidirected/.style={Latex-Latex,dashed},
	el/.style = {inner sep=2pt, align=left, sloped},
	cross/.style={cross out, draw=black, minimum size=2*(#1-\pgflinewidth), inner sep=0pt, outer sep=0pt},
	cross/.default={1pt}
}
\title{Conditional Generative Models are Sufficient to \\Sample from Any Causal 
Effect Estimand}
\author{%
Md Musfiqur Rahman
\thanks{Equal contribution. Correspondence to rahman89@purdue.edu}
\\
Purdue University\\
\And
Matt Jordan $^{*}$\\
  University of Texas at Austin \\
    \And
  Murat Kocaoglu \\
  Purdue University \\
}
\begin{document}

\maketitle

\begin{abstract}
Causal inference from observational data plays critical role in many applications in trustworthy machine learning.
While sound and complete algorithms exist to compute causal effects, many of them assume access to conditional likelihoods,
 which is difficult to estimate for high-dimensional (particularly image) data. Researchers have alleviated this issue by simulating causal relations with neural models.
However, when we have high-dimensional variables in the causal graph along with some unobserved confounders, no existing work can effectively sample from the un/conditional interventional distributions. 
In this work, we show how to sample from any identifiable interventional distribution given an arbitrary causal graph through a sequence of push-forward computations of conditional generative models, such as diffusion models. Our proposed algorithm follows the recursive steps of the existing likelihood-based identification algorithms to train a set of feed-forward models, and connect them in a specific way to sample from the desired distribution. 
We conduct experiments on a Colored MNIST dataset having both the treatment ($X$) and the target variables ($Y$) as images and sample from $P(y|\Do(x))$. Our algorithm also enables us to conduct a causal analysis to evaluate spurious correlations among input features of generative models pre-trained on the CelebA dataset. Finally, we generate high-dimensional interventional samples from the MIMIC-CXR dataset involving text and image variables.
	\end{abstract}

\section{Introduction}
\label{sec:introduction}
%
%
\par
Causal inference has recently attracted significant attention in machine learning (ML) due to its application in 
fairness, invariant prediction, and explainability~\citep{xin2022vision,zhang2020causal,subbaswamy2021evaluating}.
Even though existing ML models show notable predictive performance by optimizing the likelihood of the training data, they are prone to failure when the covariate distribution changes in the test domain. 
Consider the medical scenario in Fig.~\ref{fig:intro-graph} with the causal order: Xray$(X)$$\rightarrow$Diagnosis$(S)$$\rightarrow$Report$(R)$ representing the true data-generating mechanisms. 
Suppose a practitioner observes only $X$ to make a high-level intermediate diagnosis $S$ that contains sufficient information about the patient. The prescription report$(R)$ is written only based on the diagnosis (thus $X \not\rightarrow R$).
Since data are collected from different hospitals locations ($H$), $H$ acts as an unobserved common cause for both $X$ and $R$, i.e., $X\leftrightarrow R$ (ex: correlation between x-ray artifacts and report writing style).
The task is "x-ray to report" generation. One might train an ML model to directly learn a mapping $f: \mathcal{X}\rightarrow \mathcal{R}$, with maximum likelihood estimation (MLE) ~\citep{boecking2022making,chambon2022roentgen} mimicking the conditional distribution $P(r|x)$.
However, since $H$ is an unobserved common cause between $X$ and $R$; $H$ has some influence on $P(r|x)$~\citep{catala2021bias}. 
Thus, if the model is deployed in a new location, its MLE-based prediction accuracy may drop since $P(r|x)$ shifts in that location.
On the other hand, if we can remove the location bias $X\leftrightarrow R$ with an intervention on the x-ray variable ($\Do(x)$), the  x-ray to report generation would be invariant to domain shifts.
Thus, to obtain such generalization, we need to perform causal interventions in high-dimension.
\begin{figure}[t!]
\vspace{-5mm}
    \centering
%
\begin{subfigure}{1\textwidth}
 \begin{subfigure}{0.30\textwidth}
     \begin{tikzpicture}[scale=0.7, transform shape]
     \begin{scope}[auto, every node/.style={minimum size=2em,inner sep=1},node distance=2cm]
  \node  (w2) [align=center] {Diagnosis \\$(S)$};

\node [right =0.6cm of w2, align=center] (y) {Prescription \\ Report  $(R)$};
   \node  [text=red,left =0.6cm of w2, align=center] (x) {Xray \\Scan   $(X)$};
    \node[below right=0.4cm and -1.1cm of w2, outer, minimum width = 1.5 cm] {};
        \node[below right=0.4cm and -1.9cm of y, outer, minimum width = 1.9 cm, fill=orange!15] {};
   \node  [below =0.4cm of w2, align=center] (eq) {$P(r|do(x))= \sum_{s} P(s|do(x)) P(r|do(x,s))$};

 \draw[ thick] (x) to  (w2); 
 \draw[ thick] (w2) to  (y); 
 \path[bidirected] (x) edge[bend left=35] node[pos=0.5,sloped,font=\small, align=center] {Hospital\\ Location ($H$)} (y);
 
  \end{scope}
  \begin{scope}[on background layer]
  \end{scope}
\end{tikzpicture}
\end{subfigure}
\begin{subfigure}{.34\textwidth}
\hspace{6mm}
 \begin{tikzpicture}[scale=0.7, transform shape]
 \centering
\node [nnvsm, draw] (nw4) {$M_{S}$};
\draw [<- ,thick] (nw4) -- +(-1cm, 0) node[left]{$X$};
   \node[below=0.1 cm of nw4] (fw4){$ P(s|\Do(x))= P(s|x)$};
  \node [nnvsm, right =2.0cm of nw4] (nw3) {$M_{R}$};
 \node [nnvsm, draw, above =0.4cm of nw3] (nw2) {$M_{X'}$};
\node[below=0.1 cm of nw3, align=center] (fw4){$P(r| \Do(x,s))$\\ $= \sum_{x'}P(x')P(r|x',s)$};

\draw [<- ,thick] (nw3) -- +(-1cm, 0) node[left]{$S$};
 \draw[ ->, thick] (nw2) to  node[] {$X'$}  (nw3);

    \begin{scope}[on background layer]
  \node[outer, fit=(nw3) (nw2), fill=orange!15] {};
  \node[outer,fit=(nw4)] {};
  \end{scope}
\end{tikzpicture}
\end{subfigure}
\begin{subfigure}{.34\textwidth}
\hspace{2mm}
\centering
\begin{tikzpicture}[scale=0.7, transform shape]
   \centering     
\node [nnvsm, draw] (nw4) {$M_{S}$};
\draw [<- ,thick] (nw4) -- +(-1cm, 0) node[left]{$do(X=x)$};
\node [nnvsm, right =0.8cm of nw4] (nw3) {$M_{R}$};
 \node [nnvsm, draw, above =0.4cm of nw3] (nw2) {$M_{X'}$};

 \draw[ ->, thick] (nw2) to  node[] {$X'$}  (nw3);
  \draw[ thick] (nw4) to  node[] {$S$}  (nw3);
  \node[below =0.1 cm of nw4] (fw4){$\sum_{s} P(s|x) \sum_{x'}P(x')P(r|x',s) $};
  
        \end{tikzpicture}
\end{subfigure}
\end{subfigure}
%
\begin{subfigure}{1\textwidth}
\vspace{2mm}
\centering
 \begin{subfigure}{0.34\textwidth}
     \centering

\begin{tikzpicture}[scale=0.7, transform shape]
     \begin{scope}[auto, every node/.style={minimum size=2em,inner sep=1},node distance=2cm]
  \node  (w2) [text=red, align=center] {Ventricle \\ Volume  $(V)$};

\node [right =0.6cm of w2, align=center] (y) {Brain \\MRI   $(I)$};
   \node  [above =0.6cm of w2, align=center] (x) {Brain \\ Volume $(B)$};
   \node  [left =0.6cm of x, align=center] (a) {Age $(A)$};
    \node[below right=0.4cm and -1.1cm of w2, outer, minimum width = 1.0 cm] {};
        \node[below right=0.4cm and -2.0cm of y, outer, minimum width = 2.3 cm, fill=orange!15] {};
   \node  [below =0.4cm of w2, align=center] (eq) {$P(i|do(v))= \sum_{a,b} P(a,b) P(i|do(v,a,b))$};

  \draw[ thick] (a) to  (w2); 
  \draw[ thick] (a) to  (x);   
 \draw[ thick] (x) to  (w2); 
 \draw[ thick] (w2) to  (y); 
 \path[bidirected] (x) edge[bend left=35] node[pos=0.6,sloped,font=\small, align=center] {Sex ($S$)} (y);
  \end{scope}
  \begin{scope}[on background layer]
  \end{scope}
\end{tikzpicture}
\caption{ML model failure scenario}
\label{fig:intro-graph}
    \end{subfigure}
 %
\begin{subfigure}{.33\textwidth}
\hspace{6mm}
  \begin{tikzpicture}[scale=0.7, transform shape]
\node [nnvsm] (nw4) {$M_{AB}$};
   \node[below=0.1 cm of nw4] (fw4){$ P(a,b)$};
   
   \node [nnvsm, draw, right =2.2   cm of nw4] (nw1) {$M_{I}$};
\node[below=0.1 cm of nw1, align=center] (fw1){$P(i|\Do(v,a,b)) $\\$= P(i|v,a,b)$};
  \draw [<- ,thick] (nw1) -- +(-1cm, 0) node[left]{$[A,B]$};
  \draw [<- ,thick] (nw1) -- +(-0cm, 1cm) node[above]{$V$};

      \begin{scope}[on background layer]
  \node[outer,fit=(nw4)] {};
  \node[outer,fit=(nw1), , fill=orange!15] {};
  \end{scope}
  
\end{tikzpicture}
\caption{Train conditional models.}
\label{fig:train-net}
\end{subfigure}
\begin{subfigure}{.31\textwidth}
\hspace{12mm}
\begin{tikzpicture}[scale=0.7, transform shape]

\node [nnvsm, draw] (nw4) {$M_{AB}$};
\node [nnvsm, draw, right =1.4 cm of nw4] (nw1) {$M_{I}$};
\node[below left=0.1 and -1.4 cm  of nw1] (fw1){$\sum_{a,b} P(a,b) P(i|v,a,b)$};
  \draw [<- ,thick] (nw1) -- +(-0cm, 1cm) node[above]{$do(V=v)$};
 \draw[ ->, thick] (nw4) to  node[] {$[A,B]$}  (nw1);

        \end{tikzpicture}
        \caption{Merge and sample ancestrally.
}
\label{fig:merge-net}
\end{subfigure}
\end{subfigure}

    \caption{
    (Top: x-ray to report generation task)
    (a) $\Do(X=x)$ removes the $X\leftrightarrow R$ bias and makes the generation of $R$ domain invariant.
 $P(r|\Do(x))$ is factorized into c-factors and (b) conditional models ($\{M_{V_k}\}_{k}$) are trained for each factor (shown as boxes).  (c) The intervened value $X=x$ is propagated through the merged network and samples from the $P(r|\Do(x))$ are generated.
}
    \label{fig:double_motiv}
    \vspace{-4.5mm}
\end{figure}
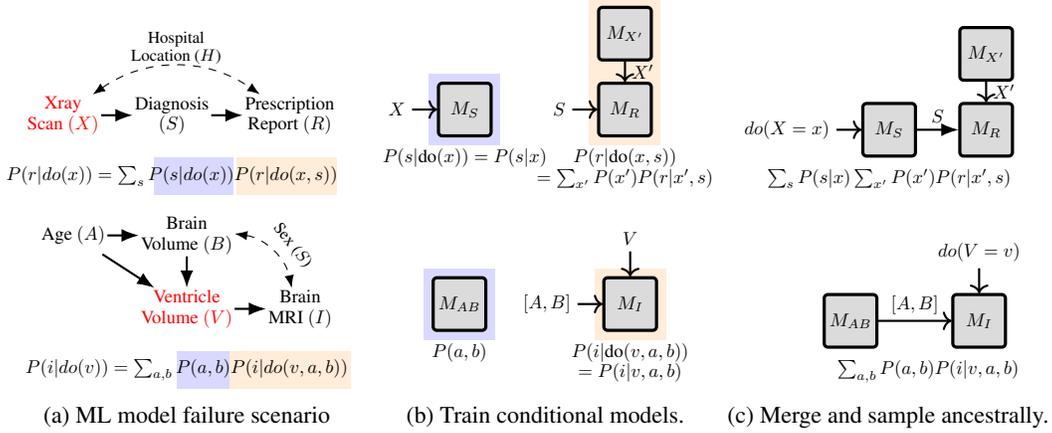
\par
%
Structural causal models (SCM)~\citep{pearl2009causality} enable a data-driven approach to estimate interventional distributions~\citep{shalit2017estimating,louizos2017causal}.
Given the qualitative causal relations, summarized in a \emph{causal graph}, we now have a complete understanding of which causal effects/queries (ex: $P(r|\Do(x))$) can be uniquely identified from the observational distribution and which require further assumptions or experimental data 
~\citep{
bareinboim2012causal, huang2006identifiability, lee2020general, pearl1995causal, shpitser2008complete, tian2002studies}.
More precisely, 
if all conditional probability tables are available,
sound and complete identification algorithms
~\citep{tian2002general,shpitser2008complete} can perform exact inference to estimate causal effects or 
~\cite{bhattacharyya2020learning,
bhattacharyya2022efficient,
jung2020learning}
can sample from the interventional distribution,
using a combination of marginalization and product operators applied to those conditional distributions. However, such approaches struggle to deal with high-dimensional variables.
In  Fig.~\ref{fig:intro-graph}, 
we could intervene on  x-ray $X$  and estimate its effect on the 
 report $R$ as,
$P(r|\Do(x))=\sum_{s} P(s|x) \sum_{x'}P(x')P(r|x',s) $, i.e., as functions of the observational distribution ($X,X'$: independent instances of the same variable). However, the second and third terms in this expression require marginalization over the ``\emph{X-ray}" variable.
Exact Bayesian inference methods 
used for calculating conditional distributions are infeasible for high-dimensional variables since marginalization over their non-parametric distributions is generally intractable~\citep{bishop2006pattern}.
%
%
\par
Deep generative models with variational inference methods approximate the intractable marginalization and 
can sample from such high-dimensional distributions~\citep{ho2020denoising, song2020denoising,croitoru2023diffusion}. 
Recent works such as ~\citet{xia2023neural,  chao2023interventional, rahman2024modular} employ deep generative models to match joint distribution of the system
by learning the conditional generation of each variable from its causal parents. 
Nonetheless, it is highly non-trivial for these works to mimic any arbitrary causal model with high-dimensional variables, specially when there are unobserved confounders in the (semi-Markovian) causal model. Consider the $X\leftrightarrow R$ relation in Fig.~\ref{fig:intro-graph} where $R$ and $X$ are correlated through unobserved hospital location. 
To learn the joint distribution $P(x,r)$, the above approaches need to synchronously train their generative models.
For that purpose,~\citet{xia2023neural, rahman2024modular} 
train two GAN networks concurrently by feeding the same prior noise. However, it is nontrivial to design a loss function for the joint distribution balancing multiple high-dimensional variables making it challenging for the discriminator to detect true/false sampled pairs.
Thus, the high-dimensional intervention problem still requires a more effective approach.
%
%
%
\par 
In this paper, we propose a novel algorithm \idgen that can \emph{utilize any (conditional) generative models (such as GANs or diffusion models) to perform high-dimensional interventional sampling in the presence of latent confounders.}
%
For this purpose, we resort to the sound and complete identification algorithm~\citep{tian2002studies,shpitser2008complete} and design our algorithm on top of its structure to sample from any identifiable
causal query which may have an arbitrarily complex probabilistic expression (ex: Eq.~\ref{appex:eq-complex}).
%
More precisely, given a causal graph, training data, and a causal query, our algorithm \textit{i)} follows the recursive trace of the ID algorithm to factorize the query \textit{ii)} trains a set of conditional models for each factor, \textit{iii)} connects them to build a neural network called sampling network and generate interventional samples from this network. 
For example, to sample from $P(r|\Do(x))$ for the frontdoor graph in Fig.~\ref{fig:double_motiv} (top), we \textit{i)} utilize ID to obtain the factors: $P(s| \Do(x))$ and
 $P(r|\Do(x,s))$, 
\textit{ii)} train conditional models $ \{ M_{S}\},  \{M_{X'}, M_{R}\} $ for the two factors (Fig.~\ref{fig:train-net}), \textit{iii)} merge all models based on input-output (Fig.\ref{fig:merge-net}). Sampling according to this network's topological order would produce samples from $P(r|\Do(x))$.
Similarly, for the backdoor graph (bottom), we train conditional models $\{M_{A,B}\}$ and $\{M_{I}\}$ to learn $P(a,b)$ and $P(i|v,a,b)$ respectively and merge them to sample from $P(i|do(v))$. 
%
%
%
 %
To the best of our knowledge,
we are the first to show that conditional generative/feedforward models are sufficient to sample from any identifiable
causal effect estimand. Our contributions are as follows:
	\begin{itemize}
		\item We propose a recursive algorithm called \idgen that trains a set of conditional generative models on observational data to sample from  high-dimensional interventional distributions.
  We are the first to use diffusion models as conditional models for semi-Markovian SCMs.
  
  \item 
  We show that \idgen is sound and complete, establishing that conditional generative models are sufficient to sample from any identifiable interventional and conditional interventional query. The latter type are especially challenging for existing GAN-based  causal models.
%
%
\item  
We demonstrate \idgen's performance on three datasets containing image and text variables. First, we perform image intervention with diffusion models for the Colored-MNIST experiment.
Next, we show our application in trustworthy AI through quantifying spurious correlations in pre-trained models for the CelebA dataset. Finally, we make the report to X-ray generation task interpretable and domain invariant based on the MIMIC-CXR dataset.
\end{itemize}
%
%
 \section{Background}
 \textbf{Structural causal model}, (SCM)~\citep{pearl1980causality}
		is a tuple 
		$\mathcal{M} = (G=(\V,\mathcal{E}), \mathcal{N}, \mathcal{U}, \mathcal{F}, P(.) )$. 
		$\V=\{V_1,..., V_n\}$ is a set of observed variables in the system.
		$\mathcal{N}$ is a set of independent exogenous random variables where $N_i\in 
		\mathcal{N}$ affects $V_i$ and $\mathcal{U}$ is a set of unobserved confounders each affecting any two observed variables (for $>2$ check Appendix~\ref{sec:non-markov}). This refers to the \textbf{semi-Markovian causal model}. A set of deterministic functions $\mathcal{F}$=$\{f_{V_1}, f_{V_2},..,f_{V_n}\}$ 
		determines the value of each variable $V_i$ from other observed and unobserved variables as $V_i=f_i(Pa_i, N_i, U_{S_i})$, where $Pa_i\subset \V$ (parents), $N_i \in \mathcal{N}$ (randomness) and $U_{S_i} \subset \mathcal{U}$ (\textbf{latent confounders}) for some $S_i$. 
		$\mathcal{P}(.)$ is a product probability distribution over $\mathcal{N}$ and  $\mathcal{U}$  and projects a \textbf{joint distribution} $\mathcal{P}_{\V}$ over the  
		set of actions $\V$ representing their likelihood.
		\par
		An SCM $\mathcal{M}$, induces an \textbf{acyclic directed mixed graph} (ADMG) $G=(\V,\mathcal{E})$ containing nodes for each variable $V_i\in \V$.
		For each $V_i=f_i(Pa_i, N_i, U_{S_i})$, $Pa_i\subset \V$, we add an edge $V_j \rightarrow V_i\in \mathcal{E}, \forall V_j\in Pa_i$. Thus, $Pa_i(V_i)$ becomes the parent nodes in $G$.
		$G$ has a \textbf{bi-directed edge}, $V_i  \leftrightarrow V_j \in \mathcal{E}$ between $V_i$ and $V_j$ if and only if they share a latent confounder. 
		%
		If a path $V_i \rightarrow \hdots \rightarrow V_j$ exists,
		then $V_i$ is an ancestor of $V_j$, i.e., $V_i=An(V_j)_{G}$.
		An \textbf{intervention} $\Do(x)$ replaces the structural function $f_x$ with $X=x$ and in other structural functions where $X$ occurs. The distribution induced on the observed variables $V$ after such an intervention is represented as $\mathcal{P}_{x}({v})$ or $\mathcal{P}({v}|\Do(x))$. 
		Graphically, it is represented by $G_{\overline{X}}$ where incoming edges to $X$ are removed (marked red). 
		With a slight abuse of notation, we will use $P(y)$ for both the numerical value $\mathbb{P}(Y=y)$ and the probability distribution $[P(y)]_y$, depending on the context. An example for the latter is:\emph{``Let $Y$ be sampled from $P(y)$"}. Also, $P_x(y)$ refers to the \textbf{interventional distribution} for all $x,y$.
  Given an ADMG $G$, a maximal subset of nodes where any two nodes are connected by  bidirected paths is called a \textbf{c-component} $C(G)$. For any $S\in C(G)$, $P(S|\Do(\V\setminus S))$ is called a c-factor. We assume that we have access to the ADMG through some causal structure learning algorithm and expert knowledge.
%
%
	%
%
%
\textbf{Classifier-free diffusion guidance~\citep{ho2021classifier}}
Let $(\vb,\mbf{c})\sim P(\vb,\mbf{c})$ be the data distribution and $\mbf{z}=\{\mbf{z}_{\lambda}| \lambda \in [\lambda_{min}, \lambda_{max}]\}$ for $\lambda_{min} < \lambda_{max} \in \mathbb{R}$. We corrupt the data as $\mbf{z}_{\lambda} = \alpha_{\lambda} \x + \sigma_{\lambda} \epsilon$ and optimize the denoising model by taking the gradient step on $\nabla_{\theta} || \epsilon_{\theta}(\mbf{z}_{\lambda},\mbf{c})-\epsilon||^2$.
Given that variables {$\V$} are connected as a directed acyclic graph
and we have diffusion models trained to learn the distributions $P(v_i|pa(v_i))$, we can perform \textbf{ancestral sampling} from the joint distribution, $P(\vb) = \prod_{V_i\in \V} P(v_i|pa(v_i))$ by making one pass through each 
model in the topological order while sampling from the conditional distributions~\citep{bishop2006pattern}.
We choose classifier-free diffusion as our conditional model, but the choice changes based on the application. We use $M_{V}(c)$ and a square node as notation.
%
%
%
 \section{
 \idgen: generative model-based interventional sampling
 }
%
Given a causal graph $G$, dataset $\data\sim P(\vb)$, our objective is to generate high-dimensional interventional samples from a query $P(\y|\Do(\x))$ or a conditional query $P(\y|\Do(\x),\mbf{z}).$
\idgen builds upon the recursive structure of the identification algorithm~\citep{shpitser2008complete} to train necessary conditional models. Thus, we first discuss its connection with us and show the challenges it faces if deployed for sampling.
\subsection{ 
 Identification algorithm (ID)
 and challenges with high-dimensional sampling  }
\label{sec:id-alg}
	%
	%
	%
	%
	%
%
~\citet{shpitser2008complete} propose a recursive algorithm (Algorithm~\ref{alg:ID}) for estimating an interventional distribution $P_{\x}(\y)$ given access to all probability tables.
  At any recursion level, it enters one of its four recursive steps: 2, 3, 4, 7 and three base case steps: 1, 5, 6 . Below, we discuss them in detail.
\par
\textbf{Step 1} 
occurs when the intervention set $\mbf{X}$
is empty in $\ydox$. The effect of $\X=\emptyset$ on $\mbf{Y}$ is its marginal $P(\mbf{y})$ which is returned as output.
%
%
\textbf{Step 2} checks if there exists any non-ancestor variable of $\Y$ in the intervention set $\X$. Such variables in the graph do not have any causal effect on $\Y$. Thus, it is safe to drop them.
In \textbf{Step 3}, it searches for a set $\mathbf{W}$ in $G$, which does not effect $\Y$ assuming that $\X$ has already been intervened on. Thus, it can include $\mathbf{W}$ as an additional intervention set: $\X=\X \cup \mathbf{W}$. An intervention on $\mathbf{W}$ implies deleting its incoming edges, which simplifies the problem in the future.
\textbf{Step 4} is the most important line and is executed when there are multiple c-components in the subgraph $G\setminus \X$.
It factorizes (decomposes) the problem of estimating $\ydox$ into estimating c-factors (subproblems) and performs recursive calls  for each c-factor.
Base case \textbf{Step 5} returns fail for non-identifiable queries.
Base case \textbf{Step 6} asserts that when $\X$ does not have a bi-directed edge with the rest of the nodes in $S$ and $S$ consists of a single c-component, intervening on $\X$ is equivalent to conditioning on $\X$. Thus, ID can now solve $\ydox$ as $\sum_{s\setminus \mbf{y}} \prod_{i|V_i\in S} P(v_i|v_{\pi}^{(i-1)})$ and return as output.
 \textbf{Step 7} occurs when the variables in $\X$ can be partitioned into two sets: one having bi-directed edges to other variables ($S'$) in the graph and one (defined as $X_Z$) with no bi-directed edges to $S'$. In that case, evaluating $\ydox$ from $P(\V)$ is equivalent to first obtaining $P'(\V)\coloneqq P_{\x_Z}(\V)$ and then evaluating $P_{\x \setminus \x_z}(Y)$ from $P'(\V)$. Hence, $P_{\x_Z}(\V)$ is first calculated as $\prod_{\{i | V_{i} \in S'\}}P(V_{i}| V_{\pi}^{(i-1)}\cap S', v_{\pi}^{(i-1)}\setminus S')$ and then passed to the next recursive call for $\Do(\x \setminus \x_z)$ to be applied.
One major issue of  ID is that it requires probability tables and thus cannot be applied for high-dimensional sampling.
Suppose we naively design an algorithm that follows ID's recursive steps and trains a generative model for every factor it encounters and samples from it. This algorithm would not know which of these factors
to learn and sample first, leading to a deadlock as shown in Ex~\ref{ex:cyclic}.
\idgen solves such issue by avoiding direct sampling and building a sampling network.
%
%
 %
%
%
%
  %
  \begin{definition}[Sampling network, $\h$]
A collection of feedforward models $\{M_{V_i}\}_{\forall_i}$ for a set of variables $\mbf{V}=\{V_i\}_{\forall_i}$ is said to form a \textbf{sampling network}, $\h$, if the directed graph obtained by connecting each $M_{V_i}$ to $M_{An(V_i)_{G}}$ via incoming edges according to some conditional distribution, is acyclic. 
Two sampling networks $\h_i, \h_r$ can be merged into a larger network $\h$.
	\end{definition}
\subsection{
Recursive training of \idgen and interventional sampling
}
\label{sec:id-dag-algorithm}
%
%
%
Similar to ID's recursive structure, \idgen has 7 steps (Algorithm~\ref{alg:id-dag}). However, to deal with high-dimensional variables, we call three new functions: i) Algorithm~\ref{alg:step1}:{\condgm} inside steps 1 and 6 where we train diffusion models or other conditional models to learn conditional distributions, ii) Algorithm~\ref{alg:construct-dag}:{\idmerge} inside step 4 to merge the conditional models,
and iii) Algorithm~\ref{alg:step7}:{\stepsev} inside step 7
to train models that can apply part of the interventions and update the training dataset for next recursive calls.
%
We initiate with \idgen $(\Y, \X, G, \data, \Xp=\emptyset, \Gp=G)$.
Along with the given inputs $\Y, \X, G, \data$, \idgen maintains two extra parameters $\Xp, \Gp$ to keep track of the interventions performed.
During the top-down phase, \idgen updates its parameters: by $i)$ removing interventions from the intervention set $\X$, and $ii)$ updating the training dataset $\data$, $\Xp$ and the causal graph $G,\Gp$ according to the interventions.
%
At any level of recursion, an \idgen call returns a sampling network $\h$ (DAG of a set of trained models) trained on the dataset $\data$ to learn conditional distributions according to $\Xp, G, \Gp$. 
After the recursion ends, we can generate samples from $P_{\x}(\V), \Y\subseteq\V$, by ancestral sampling on $\h$.
See a recursion tree in 
Appendix~\ref{appex:recurse-tree}.
%
%
%
%
%
%
%
 %
	%
	%
%
%
%

\textbf{Base Case: Step 1:} 
\idgen enters step 1 if the intervention $\X$ is empty. For $\X=\emptyset$, we have, $P_{\x}(\y) = P(\y)= \sum_{\mbf{v} \setminus \y} P(\mbf{v})= \sum_{\mbf{v} \setminus \y} \prod_{V_i\in \V} P(v_i| v_{\pi}^{(i-1)})$
which is suitable for ancestral sampling. To train models that can collectively sample from this distribution, we call Algorithm~\ref{alg:step1}:$\mathrm{\condgm}$. Here, we train each model $M_{V_i}$, $\forall V_i\in \V$ using $V_{\pi}^{(i-1)}$, (i.e., variables that are located earlier in the topological order $\pi$) as inputs to match $P(v_i| v_{\pi}^{(i-1)})$.
Note that $\Xp$ contains the values that were intervened in previous recursion levels and $\Gp$ is the graph at the current level that contains $\Xp$ with its incoming edges cut.
Since we want our conditional models to generate samples consistent with the values of $\Xp$, we consider the topological order of $\Gp$ while using $V_{\pi}^{(i-1)}$ as inputs so that $\Xp$ are also fed as input while training.
After training, we connect the trained models according to their input-outputs to build a sampling network $\h$ and return it (Alg~\ref{alg:step1}:lines~\ref{line-iter-top}-\ref{line-add-edge}). 
Note that when all variables in $\mathbf{Y}$ are low dimensional, we can also learn a single model $M$ to sample $P(\mathbf{Y})$.
However, for high-dimensional variables,
matching such joint distributions is non-trivial~\citep{rahman2024modular}.
%
%
%
%
%
%
%
%
%
%
%
%
%

\textbf{Step 2 \&  3}: 
We follow the same steps of the ID algorithm as discussed in Section~\ref{sec:id-alg}.
%
\par
\begin{wrapfigure}{r}{0.5\linewidth}
	 \vspace{-4mm}
	\begin{center}
\begin{subfigure}[c]{0.17\linewidth}
  \centering
\begin{tikzpicture}[scale=0.7, transform shape]
    \tikzstyle{every node}=[]
    \node   [ text=red] (x) {$X$};
    \node [right =0.5cm of x] (w1) {$W_1$};
    \node [ below =0.6cm of x] (w2) {$W_2$};
    \node [ below =0.6cm of w1] (y) {$Y$};
    \draw[ thick] (x) to  (w1); 
    \draw[ thick] (w1) to  (w2); 
    \draw[ thick] (w2) to  (y); 
    \path[bidirected] (x) edge[bend left=35] (w2);
    \path[bidirected] (w1) edge[bend right=35] (y);
\end{tikzpicture}
\end{subfigure}
\begin{subfigure}[c]{0.4\linewidth}
    \begin{tikzpicture}[scale=0.7, transform shape]
 \node [nnvsm, draw] (nw2) {$M_{W_2}$};
\node [nnvsm, draw, above =0.4cm of nw2] (nXp) {$M_{X'}$};
\node [nnvsm, draw, left =1cm of nw2] (ny) {$M_{Y}$};
\node [nnvsm, draw, above =0.4cm of ny] (nw1) {$M_{W_1}$};
\node [left=0.4cm of nw1] (x) {$X$};

 \draw[ thick] (x) to  (nw1); 
 \draw[ thick] (x) to  (ny); 
 \draw[ thick] (nw1) to node[] {}  (ny);
 \draw[ thick] (nXp) to  (nw2); 

 \draw [<- ,thick] (nw2) -- +(-0.8cm,1) node[left]{$W_1$};
\draw [<- ,thick] (ny) -- +(0.8cm,0) node[right]{$W_2$};
\draw [-> ,thick] (ny) -- +(-0.9cm,0) node[left]{$Y$};

   \begin{scope}[on background layer]
  \node[outer,fit=(nw1) (ny)] {};
  \node[outer,fit=(nXp)(nw2)] {};
  \end{scope}
\end{tikzpicture}
\end{subfigure}
\begin{subfigure}[c]{0.4\linewidth}
    \begin{tikzpicture}[scale=0.7, transform shape]
 \node [nnvsm, draw] (nw2) {$M_{W_2}$};
\node [nnvsm, draw, above =0.4cm of nw2] (nXp) {$M_{X'}$};
\node [nnvsm, draw, left =0.5cm of nw2] (ny) {$M_{Y}$};
\node [nnvsm, draw, above =0.4cm of ny] (nw1) {$M_{W_1}$};
\node [left=0.4cm of nw1] (x) {$\Do(x)$};

 \draw[ thick] (x) to  (nw1); 
 \draw[ thick] (x) to  (ny); 
 \draw[ thick] (nw1) to node[] {}  (ny);
 \draw[ thick] (nXp) to  (nw2); 
 \draw[ thick] (nw2) to  (ny); 
 \draw[ thick] (nw1) to  (nw2); 
\draw [-> ,thick] (ny) -- +(-0.9cm,0) node[left]{$Y$};
%
\end{tikzpicture}
\end{subfigure}
	\end{center}
	\caption{ 
 $\leftrightarrow:$Unobserved.
 Left blue samples from $P_{x,w_2}(w_1,y) $ $= P(w_1|x)$ $P(y|x,w_1,w_2)$. Right blue samples from $P_{x, w_1}(w_2)$ $= \sum_{x'} P(x')$ $P(w_2|x',w_1)$. Joint network samples from $P_x(y)$.}
 \label{fig:main-cyclic}
	\vspace{-3mm}
\end{wrapfigure}
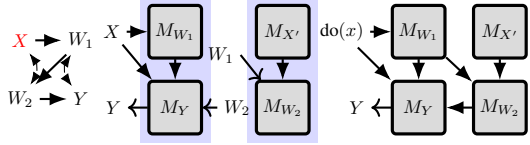 
%
\textbf{Step 4 and Merge sampling Networks:}
%
%
Our goal is to train models that can sample from $\ydox$ which unfortunately is not straightforward. This step allows us to decompose our problem into sub-problems and we can train models to sample from the c-factors of $\ydox$'s factorization. The next challenge is to connect these models consistently to sample from $\ydox$.
%
More precisely, if we remove $\X$ from $G$ and the graph splits into multiple c-components (variables in each component connected with $\leftrightarrow$)
(Alg~\ref{alg:id-dag}:line~\ref{line-multi-cc}), 
we can apply c-component factorization (Lemma~\ref{bglemma:c-factorization},~\citep{tian2002general}) to factorize $P_{\x}(\y)$ as ${\textstyle\sum}_{\vb\setminus(\y\cup \x)} 
P_{\vb\setminus s_1}(s_1)  \hdots P_{\vb\setminus s_n}(s_n)$ where each $\{S_k\}_k$ is the c-factor corresponding to each c-component.
To obtain trained models for each of these c-factors, we perform the next recursive calls: \idgen$(\Y=S_{i},\mathbf{X}=\mathbf{V} \setminus S_{i}, G, \data, {\Xp}, \Gp)$. 
When these recursive calls return a sampling network $H_i$ for each $P_{\vb\setminus s_i}(s_i)$, we can wire them based on their input-output to build a single sampling network $\h$. According to Theorem~\ref{lemma:joint-sample-from-net} and ~\ref{th:id-dag-Soundness}, $\h$ now can sample from $P_{\x}(\y)$. 

We call Algorithm~\ref{alg:construct-dag}: ${\idmerge}$ to connect all sampling networks {$\{\h_i\}_{\forall_i}$}. Here, each $\h_i$ is a set of trained conditional models $\{M_{V_j}\}_j$ connected to each other as a DAG. If a sampling network $\h_i$ contains an empty node $M_{V_j}=\emptyset$ without any conditional model and some other sampling network $\h_r$ generates this variable $V_j$ with its node $M_{V_k}$, i.e., $V_j=V_k$, then we combine $M_{V_j}$ and $M_{V_k}$ into the same node to build a connection between $\h_i$ and $\h_r$ (lines~\ref{line:iter-SN}-\ref{line:connect-SN}).
Intuitively, due to the c-factorization at this step, the variables intervened in one sampling network might be generated from models in another network. We connect two networks to continue the ancestral sampling sequence.
%
Fig.~\ref{fig:main-cyclic} shows an example of this step where $P_{x}(y)$ is factorized into c-factors as $ P_{x}(y)= \sum_{w_1,w_2} P_{x,w_1}(w_2)  P_{x,w_2}(w_1,y)$.
For the c-component $\{W_1,Y\}$, \idgen first obtains $P_{x,w_2}(w_1,y) = P(w_1|x)P(y|x,w_1,w_2)$, and trains conditional models $M_{W_1}$ and $M_{Y}$ for these conditional distributions. 
Similarly, for $\{W_2\}$, we have $P_{x, w_1}(w_2) = \sum_{x'}P(x')P(w_2|x', w_1)$ and we train $M_{X'}$ and $M_{W_2}$.
Finally, we merge these networks based on inputs-outputs to build a single sampling network and perform ancestral sampling on it to sample from $P_{x}(y), \forall x$.
%
%
%
%
%
\par
\textbf{Base Case: Step 5:}
We follow the step 5 of the ID algorithm as discussed in Section~\ref{sec:id-alg}.
\begin{figure}[t!]
\hspace{-4mm}
    \centering
\begin{minipage}[t!]{0.51\textwidth}
\begin{algorithm}[H]
	\footnotesize
	\caption{\footnotesize 
    				\idgen$(\mathbf{Y}, \mathbf{X},  G,  \data, {\Xp}, \Gp)$
	}
	\begin{algorithmic}[1]
		\footnotesize
		\STATE \textbf{Input:} 
		target $\mathbf{Y}$, 
		to be intervened $\mathbf{X}$, 
		{{intervened variables at step 7s} $\Xp$},
		causal graph $G$ without $\Xp$, 
		causal graph $\Gp$ with $\Xp$ having no parents, training data $\data[\Gp]$ sampled from observed distribution $P(\V)$. 
		\STATE {\bfseries Output:} A sampling network of trained models.
		\IF[\blue{Step 1}]{$\mbf{X} = \emptyset$}
            \STATE \textbf{Return} \texttt{ConditionalGMs}(${\mathbf{Y}},$$ \mathbf{X}=\emptyset, $$ G, $$ \data,$$ {\Xp}, $$\Gp$)
		\ENDIF
		\IF[\blue{Step 2}]{
			$\mathbf{V}  \setminus An(\mathbf{Y})_{G} \neq \emptyset$
		}
		\STATE \textbf{Return} \textbf{\idgen}($\mathbf{Y}, \mathbf{X}\cap An(\mathbf{Y})_G ,
		$
		$G_{An(\mathbf{Y})},$ 
		${\Xp},$
		$\Gp_{An(\mathbf{Y})},$
		$\data'= \data[An(\mathbf{Y})_{G}]$ )
		\ENDIF
		\STATE Let $\mathbf{W} = (\mathbf{V} \setminus \mathbf{X}) \setminus An(\mathbf{Y})_{G_{\overline{\mathbf{X}}}}$  \COMMENT{\blue{Step 3}}
		\IF  {$\mathbf{W}\neq \emptyset$}
		\STATE \textbf{Return} \textbf{\idgen}($\mathbf{Y}, \mathbf{X}= \mathbf{X}\cup \mathbf{W}, G, {\Xp}, \Gp,  \data $)
		\ENDIF
		\IF[\blue{Step 4}]{$C(G\setminus  \mbf{X}  ) = \{S_1, \hdots, S_k\}$}
		\FOR{each $S_i \in  C( G \setminus \mbf{X} ) =\{S_1, \ldots, S_{k}\}$ }
  \alglinelabel{line-multi-cc}
\STATE $\h_i$=\textbf{\idgen}$(S_{i}, \mathbf{X}=\mathbf{V} \setminus S_{i}, 
G, {\Xp}, \Gp, \data)$  
		\ENDFOR 
		\STATE \textbf{Return} $\mathrm{\texttt{MergeNetwork}}(\{\h_i\}_{\forall i})$
		\ENDIF
		\IF{$C( G \setminus X )=\{S\}$}
		\IF[\blue{Step 5}]{$C(G) = \{G\}$ }
		\STATE throw FAIL
		\ENDIF
		\IF[\blue{Step 6}]{$S \in C(G)$}
		\STATE \textbf{Return} \texttt{ConditionalGMs}($S, \mathbf{X},  G,  \data, {\Xp}, \Gp$)
		\ENDIF
		\IF[\blue{S7}]{($\exists S'$) such that $S \subset S' \in C(G)$}
            \STATE \textbf{Return} \textbf{\idgen}(\texttt{Update}($S',$$ \mathbf{X}, $$ G, $$ \data, $${\Xp}, $$ \Gp$))
		\ENDIF
		\ENDIF
	\end{algorithmic}
	\label{alg:id-dag}
\end{algorithm}
\end{minipage}
\hspace{1mm}
\begin{minipage}{0.48\textwidth}
\begin{minipage}{1\textwidth} 
\begin{algorithm}[H]
\footnotesize
		\caption{ \footnotesize
			\texttt{ConditionalGMs}$(\mathbf{Y},$$ \mathbf{X},$$  G, $$ \data, $${\Xp},$$ \Gp)$
		}
		\begin{algorithmic}[1] 
		\FOR{each $V_i \in \{\X \cup {\Xp}\}$}
  \alglinelabel{line-iter-top} 
		\STATE Add node $(V_i, \emptyset)$ to $\h$ \COMMENT{Initialized $\h =\emptyset$}
		\ENDFOR
		\FOR{each ${V_i \in \Y}$ in the topological order $\pi_{\Gp}$}
		\STATE Let $M_{V_i}$ be a model trained on $\data[V_i, V_{\pi}^{(i-1)}]$
		such that $M_{V_i}(V_{\pi}^{(i-1)})\sim P(v_i|v_{\pi}^{(i-1)})$
		\STATE Add node $(V_i, M_{V_i})$ to $\h$
		\STATE Add edge $V_j \rightarrow V_i$ to $\h$ for all $V_j \in V_{\pi}^{(i-1)}$ \alglinelabel{line-add-edge}
		\ENDFOR
		\STATE \textbf{Return} $\h$.
  \end{algorithmic}
  \label{alg:step1}
	\end{algorithm}
 \end{minipage}
 \begin{minipage}{1\textwidth} 
 \vspace{-3.4mm}
 \begin{algorithm}[H]
	\footnotesize
	\caption{\footnotesize \texttt{MergeNetwork}($\{\h_i\}_{\forall i}$)  }
	\begin{algorithmic}[1]
		\STATE \textbf{Input:} Set of sampling networks $\{\h_i\}_{\forall i}$.
		\STATE {\bfseries Output:} A connected DAG sampling network $\h$.
  \vspace{-3mm}
		\FOR{ $H_i \in \{\h_i\}_{\forall i}$}\alglinelabel{line:iter-SN}
		\FOR{$M_{V_j} \in \h_i$}
		\IF{$M_{V_j} =\emptyset$ and $\exists M_{V_k} \in \h_r, \forall r$ such that $V_j=V_k$ and $M_{V_k} \neq \emptyset$}
		\STATE $M_{V_j}= M_{V_k}$ \alglinelabel{line:connect-SN}
		\ENDIF
		\ENDFOR
		\ENDFOR
		\STATE \textbf{Return } $\h= {\{\h_i\}_{\forall i}}$  \COMMENT{All $\h_i$ are connected.}
	\end{algorithmic}
	\label{alg:construct-dag}
\end{algorithm}
\end{minipage}
\begin{minipage}{1\textwidth} 
\vspace{-3.4mm}
 \begin{algorithm}[H]
 \footnotesize
		\caption{ \footnotesize
			\texttt{Update}$(S', \mathbf{X},  G,  \data, {\Xp}, \Gp)$
		}
		\begin{algorithmic}[1]
  \STATE $\mbf{X}_Z = \mbf{X}\setminus S'$
  \STATE $\h$ = \texttt{ConditionalGMs}($S', \mbf{X}_Z,
            G, 
            \data,
            {\Xp},
            \Gp$) \alglinelabel{line:s7-cond}
            \STATE
            $\data' \sim \h(\mbf{X}_Z, \Xp)$; \hspace{3mm}  ${\Xp} =  {\Xp} \cup \mbf{X_Z}$ \alglinelabel{line:new-data}
            \STATE \textbf{Return} $\Y, \mbf{X} \cap S'$, $G_{S'}$,  $\data'[{\Xp}, S']$,
            ${\Xp}$,  $\Gp_{\{S', \overline{\Xp}\}} $ \alglinelabel{line:s7-call}
            \vspace{-2mm}
  \end{algorithmic}
  \label{alg:step7}
	\end{algorithm}
 \end{minipage}
\end{minipage}
\vspace{-4.5mm}
\end{figure}

\textbf{Base Case: Step 6:}
We enter Step 6 if $G\setminus \X$  is a single c-component $S$, and $\X$ is located outside the c-component.
This situation allows us to replace the intervention on $\X$ by conditioning on $\X$: $P_{\x}(\y) = \sum_{{s} \setminus \y} \prod_{V_i\in S} P(v_i| v_{\pi}^{(i-1)})$.
This step is similar to step 1, except that now we have a non-empty intervention set, i.e., $\X \neq \emptyset$. Here, we
consider the topological order of $\Gp$ 
and $V_{\pi}^{(i-1)}$ contains both $\X$ and $\Xp$.
We call Algorithm~\ref{alg:step1}:$\mathrm{\condgm}$ which trains multiple conditional models to learn the above distribution. More precisely, we utilize classifier-free diffusion guidance for conditional training of each $M_{V_i}$ by taking the gradient step on $\nabla_{\theta} || \epsilon_{\theta}(\mbf{z}^{i}_{\lambda},v_{\pi}^{(i-1)})-\epsilon||^2$.
Here, $z^{i}_{\lambda}$ is the noisy version of $V_i$ at time step $\lambda$ during the forward process and $v_{\pi}^{(i-1)}$ is the condition (see Background).
Finally, we connect the input-output of these diffusion models according to the topological order to build a sampling network and return it as output.
Note that for any specific conditional distribution, if we have access to a pre-trained models that can sample from it, we can directly plug it in the network instead of training it from scratch (motivated from~\cite{rahman2024modular}).
%
%

\par \textbf{Step 7:}
Here, \idgen partitions $\X$ into two sets: one is applied in the current step to update the training dataset and other parameters, and the other is kept for future steps.
It performs this step if i) $G\setminus \X$ is a single c-component $S$ and ii)  $S$ is a sub-graph of a larger c-component $S'$ in the whole graph $G$, i.e, $(S = C(G\setminus \X)) \subset (S' \in C(G))$.
	For example, in Fig.~\ref{fig:alg-simulation}, for $P_{w_1, w_2,x}(y)$, we have $S= G\setminus\{W_1, W_2, X\}=\{Y\}, S'=\{W_1, X,Y\}$.
 In this step, we call Algorithm~\ref{alg:step7}: \stepsev 
 which utilizes the larger c-component $S'$ to partition the intervention set $\X$ into one set contained within $S'$, i.e., $\mbf{X} \cap S'$, and another set not contained in $S'$, i.e., $\X_Z = \X \setminus S'$. 
	Evaluating $P_{\x}(\y)$ from $P(\vb)$ is equivalent to evaluating $P_{\x\cap s'}(\y)$ from  $P'(\vb)$ where  $P'(\vb)\coloneqq P_{\x_z}(\vb)$ is the joint distribution.
 Hence, we first perform $\Do(\X_Z)$ to update the dataset as $\data'$. Next, we shift our goal of sampling from $P_{\x}(\y)$ in $G$ with training dataset $\data \sim P(\V)$ to
sampling from $P_{\x\cap s'}(\y)$ in $\Gp_{\{S', \overline{\Xp}\}}$ with training data $\data' \sim P_{\x_z}(\vb)$ in the next recursive calls. 
To generate dataset $\data' \sim P_{\x_z}(\vb)$,
we call \condgm and use the returned network to sample $\data'$ (lines~\ref{line:s7-cond}-\ref{line:new-data}).

Note that given access to probability tables, the ID can use any specific value $\X_Z= \x_z$ to calculate $P_{\x_z}(\vb)$ to get the correct estimation of $P_{\x}(\y)$ (Verma  constraint~\citep{Verma1990EquivalenceAS, shpitser2008complete}).
In our case, if we use a specific value $\x_z$ to sample the training dataset $\data' \sim P_{\x_z}(\vb)$,
 the models trained on this dataset in subsequent recursive steps will also depend on $\x_z$.
However, during ancestral sampling in the returned network, a different value $\X_Z= \x_z'$ might come from other c-components (ex: $M_{Y}(W_2, .)$ in Fig.~\ref{fig:alg-simulation}). Thus, to make our trained models suitable for any values, we pick $\X_Z$ from a uniform distribution or from $P(\X_Z)$ and generate $\data'$ accordingly. We save $\X_Z$  in $\Xp$, its values in $\data[\Xp, S']$ and in $\Gp_{\{S', \overline{\Xp}\}}$ with incoming edges removed, to be considered during training in the next recursive calls.  Whenever \idgen visits Step 7 again, $\Xp$ will be applied along with the new $\X_Z$. 
Finally, a recursive call is performed with these updated parameters (line~\ref{line:s7-call}) which will return a network
trained on dataset $\data' \sim P_{\x_z}(\vb)$. It can sample from $P_{\x\cap S'}(\y)$ and equivalently from the original $P_{\x}(\y)$.
\par
\textbf{Algorithm simulation:}
We apply \idgen to sample from  $P_{w_1}(y)$ for the causal graph $G$ in Fig.~\ref{fig:alg-simulation}.
Since $G\setminus \{W_1\}$ has three c-components $\{W_2\}, \{X\},\{Y\}$, we first call (i) step 4 of \idgen.
$P_{w_1}(y)$ is factorized as: $\sum_{x,w_2}P_{w_1, x,y }(w_2)$ $P_{w_1, w_2, y}(x) $ $P_{w_1,  w_2, x}(y)$. Thus, step 4 will return the sampling networks $\{H_{W_2}, H_{X}, H_{Y}\}$ that can sample from each of these factors.
Here, we focus only on $H_{Y}$. 
\idgen reaches (ii) step 7 for the query: $P_{w_1,  w_2, x}(y)$ since we have $S= G\setminus\{W_1, W_2, X\}=\{Y\}, S'=\{W_1, X,Y\}$ and $S\subset S'$.
Here, sampling from $P_{x,w_1,w_2}(y)$ in $G$, with observational training dataset is equivalent to sampling from $P_{x,w_1}(y)$ in $\Gp = G_{\overline{W_2}}$ with $\Do(W_2)$ interventional data.
With $W_2\sim P(w_2)$, we generate $\data' \sim P_{w_2}(\vb)$  by calling step 6 (base case).
We pass $\data'$ as the dataset parameter for the next recursive call. 
This step implies that if the recursive call returns a network that is trained on $\data' \sim P_{w_2}(\vb)$ and can sample from $P_{x,w_1}(y)$ , it can also be used to sample from $P_{w_1, w_2, x}(y)$. Next, since $W_1 \notin An(Y)_G$, at (iii) step 2, we drop $W_1$ from all parameters before the next recursive call.
We are at the base case (iv) step 6 with $\Gp: W_2 \rightarrow X \rightarrow Y$.
Thus, we train a conditional model $M_Y(W_2, X)$ on $\data'[W_2, X,Y]$ that can sample from $P(y|w_2,x)$. This would be returned as $H_Y$ at step 4 (Fig.~\ref{fig:alg-simulation}:green).
Similarly, we can obtain sampling network $H_{W_2}$ and $H_X$ to sample from $P_{w_1, x,y }(w_2)= P(w_2|w_1)$ and $P_{w_1, w_2, y}(x) = \sum_{w_1'} P(x|w_1', w_2) P(w_1')$ (Fig.~\ref{fig:alg-simulation}:blue). 
We connect these networks and perform ancestral sampling with fixed $w_1$
for $\Do(W_1=w_1)$.
\begin{figure}[t!]
\hspace{-3mm}
\begin{subfigure}[c]{0.48\linewidth}
\begin{minipage}[c]{1.0\textwidth}
\begin{tikzpicture}[scale=0.65, transform shape]
\tikzstyle{every node}=[]
\node   [text=red] at (-10, 0) (1a) {$W_1$};
\node [right =0.7cm of 1a] (2a) {$W_2$};
\node [ right =0.6cm of 2a] (3a) {$X$};
\node [right =0.6cm of 3a] (4a) {$Y$};
\node [below =0.3cm of 3a] (s1) at ($(3a)!0.5!(4a)$) (s1) {(i) Step 4};

%
\draw[ thick] (1a) to  (2a); 
\draw[ thick] (2a) to  (3a); 
\draw[ thick] (3a) to  (4a); 
\path[bidirected] (1a) edge[bend right=45] (3a);
\path[bidirected] (1a) edge[bend left=35] (4a);

\node   [text=red, right=0.2cm of 4a]  (1b) {$W_1$};
\node [ text=red, right =0.65cm of 1b] (3b) {$X$};
\node [right =0.6cm of 3b] (4b) {$Y$};
\node [below =0.3cm of 3a] (s3) at ($(3b)!0.5!(4b)$) (s3) {(iii) Step 2};
\draw[ thick] (3b) to  (4b); 
\path[bidirected] (1b) edge[bend right=45] (3b);
\path[bidirected] (1b) edge[bend left=35] (4b);
\end{tikzpicture}
\end{minipage}
\begin{minipage}[c]{1.0\textwidth}
\begin{tikzpicture}[scale=0.7, transform shape]
\tikzstyle{every node}=[]
\node   [text=red] (1a) {$W_1$};
\node [text=red, right =0.6cm of 1a] (2a) {$W_2$};
\node [text=red, right =0.6cm of 2a] (3a) {$X$};
\node [right =0.4cm of 3a] (4a) {$Y$};
\node [below =0.3cm of 3a] (s2) at ($(3a)!0.5!(4a)$) (s1) {(ii) Step 7};

\draw[ thick] (1a) to  (2a); 
\draw[ thick] (2a) to  (3a); 
\draw[ thick] (3a) to  (4a); 
\path[bidirected] (1a) edge[bend right=45] (3a);
\path[bidirected] (1a) edge[bend left=35] (4a);

\node [above right=0.05 cm and 0.2 cm of 4a] (gc) {$G:$};
\node [text=red, right =0.05cm of gc] (3c) {$X$};
\node [ right =0.6cm of 3c] (4c) {$Y$};
\draw[ thick] (3c) to  (4c); 
\node [below =0.2cm of gc] (2cc) {$\hat{G}:$};
\node [text=red, right =0.05cm of 2cc] (2c) {$W_2$};
\node [text=red, right =0.6cm of 2c] (3c) {$X$};
\node [ right =0.6cm of 3c] (4cc) {$Y$};
\node [below =0.5cm of 3c] (s4) at ($(3c)!0.5!(4c)$) (s4) {(iv) Step 6};
\draw[ thick] (2c) to  (3c); 
\draw[ thick] (3c) to  (4cc); 
\begin{scope}[on background layer]
\node[outer1t,fit=(2cc) (4cc)] {};
\end{scope}

\end{tikzpicture}
\end{minipage}
\label{fig:input_graph}
\end{subfigure}
\begin{subfigure}[c]{0.48\linewidth}
\begin{subfigure}[c]{.49\linewidth}
\begin{tikzpicture}[scale=0.65, transform shape]
 \node [nnvsm, draw] (nw2) {$M_{W_2}$};
   \node [above=0.5cm of nw2] (w1) {$W_1$};

\node [nnvsm, draw, right =1.2cm of nw2] (nx) {$M_{X}$};
\node [nnvsm, draw, above=0.5cm of nx] (nw1) {$M_{W_1'}$};

\node [nnvsm, draw, right =1cm of nx] (ny) {$M_{Y}$};
 
\draw[ thick] (w1) to  (nw2); 
\draw[ thick] (nw1) to  (nx); 

\draw [<- ,thick] (nx) -- +(-0.9cm, 0) node[left]{$W_2$};

\draw [<- ,thick] (ny) -- +(-0.9cm, 0) node[left]{$X$};

\draw [<- ,thick] (ny) -- +(0,-0.9) node[below]{$Unif[W_2]$};

   \begin{scope}[on background layer]
  \node[outer1,fit=(ny)] {};
  \node[outer,fit=(nx)(nw1)] {};
  \node[outer,fit=(nw2)] {};
  \end{scope}
\end{tikzpicture}
  \end{subfigure}
\begin{subfigure}[c]{.49\linewidth}
\begin{tikzpicture}[scale=0.65, transform shape]
 \node [nnvsm, draw] (nw2) {$M_{W_2}$};
   \node [above=0.5cm of nw2] (w1) {$do(W_1=w_1)$};
\node [nnvsm, draw, right =0.8cm of nw2] (nx) {$M_{X}$};
\node [nnvsm, draw, above=0.5cm and 0.3cm of nx] (nw1) {$M_{W_1'}$};
\node [nnvsm, draw, right =0.8cm of nx] (ny) {$M_{Y}$};

 \draw[ thick] (w1) to  (nw2); 
 \draw[ thick] (nw2) to  (nx); 
 \draw[ thick] (nw1) to  (nx); 
 \draw[ thick] (nx) to  (ny); 
 \draw[thick] (nw2) edge[bend left=-40] (ny);
  
\end{tikzpicture}
  \end{subfigure}
  \label{fig:merge-networks}
  \end{subfigure}
  \caption{
(Left: top-down) 
%
$P_{w_1}(y)$ is factorized into $P_{w_1, x,y }(w_2)$, $P_{w_1, w_2, y}(x)$ and $P_{w_1,  w_2, x}(y)$ (Step 4). Steps 7, 2, 6 is shown for $P_{w_1,  w_2, x}(y)$ only.
(Right: bottom-up)  we combine the sampling networks of 
 each c-factor.
For any $\Do(W_1=w_1)$, 
we use $\h$ to get samples from $P_{w_1}(y)$.
}
\label{fig:alg-simulation}
\vspace{-3mm}
\end{figure}
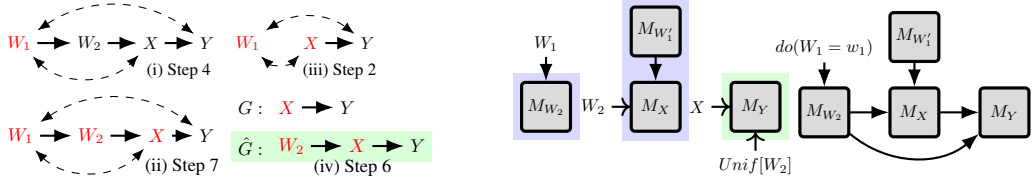
%

\textbf{(Un)conditional sampling and complexity:}
\idgen returns a sampling network $\h$ when recursion ends. For unconditional query $P_{\x}(\y)$, we fix $\X=\x$ in $\h$ and perform ancestral sampling to generate joint samples. We pick the $\Y$ values in these joint samples (equivalent to marginalization in ID) and report as interventional samples.
For a conditional query $P_{\x}({\y}|{\mbf{z}}),$ \idgen uses the sampling network to first generate samples $\data[\X, \mbf{Z}, \Y]\sim P_{\x}(\y,\mbf{z})$ and then train a new conditional model $M_{\Y}(\X,\mbf{Z})$ on $\data$ to sample from $\Y\sim P_{\x}(\y|\mbf{z})$ (Alg.\ref{alg:IDC-DAG}:\idcdag).
The sampling network has $\mathcal{O}(|An(\Y)_{G}|)$ number of models and requires $\mathcal{O}(|An(\Y)_{G}|)$ time to sample from it. Please, see our complexity details in Appendix~\ref{appex:idcdag}, Appendix~\ref{appex:complexity}.
%
%
%
\begin{theorem}
	\label{th:sound-complete}
 Under Assumptions:
 i) the SCM is semi-Markovian, ii) we have access to the ADMG, iii) $P(\V)$ is strictly positive and  iv) trained  generative models sample from correct distributions, \idgen and {\idcdag} are sound and complete to sample from any identifiable $P_x(y)$ and $P_{x}(y|z)$.
\end{theorem}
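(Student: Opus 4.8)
\section*{Proof proposal}

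The plan is to bootstrap off the soundness and completeness of the likelihood-level ID and IDC algorithms of \citet{shpitser2008complete}, and to show that \idgen{} performs, recursion step for recursion step, the \emph{sampling} analogue of each symbolic operation those algorithms perform. For soundness I would induct on the depth of \idgen's recursion tree, carrying the hypothesis: every call \idgen$(\Y,\X,G,\data,\Xp,\Gp)$ with $\data$ drawn from the (possibly intervened) distribution recorded by $\Gp$ returns a sampling network $\h$ that is a DAG and whose ancestral-sampling law, with the coordinates of $\Xp$ clamped to their passed values, coincides with the distribution the matching ID call returns symbolically. The base cases are Steps~1, 5 and 6. Step~5 is vacuous for identifiable queries and is deferred to the completeness argument. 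For Steps~1 and 6, \condgm{} trains one conditional model per variable of $\Y$ (resp.\ of the single c-component $S$) to match the relevant truncated factorization $\prod_i P(v_i\mid v_\pi^{(i-1)})$, using the topological order $\pi_{\Gp}$ so that the previously intervened variables $\Xp$ (and, in Step~6, $\X$) appear among the conditioning inputs; assumption~(iv) gives that each model samples from its target conditional, and then Lemma~\ref{lemma:joint-sample-from-net} gives that ancestral sampling on the resulting chain reproduces the joint, after which selecting the $\Y$-coordinates performs the outer marginalization exactly as ID's $\sum_{s\setminus\y}$ does. Steps~2 and 3 only relabel the graph and inputs and inherit correctness from ID verbatim.

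The two steps that need genuine work are Steps~4 and 7. For Step~4 I would invoke c-component factorization (Lemma~\ref{bglemma:c-factorization}) to write $P_{\x}(\y)=\sum_{\vb\setminus(\y\cup\x)}\prod_k P_{\vb\setminus s_k}(s_k)$, apply the induction hypothesis to obtain a sampling network $\h_k$ for each c-factor, and argue that \idmerge{} glues them into a single DAG: every $\h_k$ internally respects the topological order of $G$, and each ``empty'' node $M_{V_j}=\emptyset$ appearing in one network is filled by the unique network that actually generates $V_j$, so a directed cycle in the merged network would project onto a directed cycle in $G$ among ancestors of $\Y$, contradicting acyclicity of $G$. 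Ancestral sampling on the merged DAG then factors precisely as $\prod_k P_{\vb\setminus s_k}(s_k)$, and marginalizing to $\Y$ matches ID's Step~4; this is the content of Theorem~\ref{th:id-dag-Soundness} together with Lemma~\ref{lemma:joint-sample-from-net}. For Step~7 the backbone is the Verma-constraint reduction used by ID: evaluating $P_{\x}(\y)$ from $P(\vb)$ equals evaluating $P_{\x\cap s'}(\y)$ from $P'(\vb):=P_{\x_z}(\vb)$ with $\X_Z=\X\setminus S'$. \stepsev{} realizes $P'$ by calling \condgm{} to build a network for $P_{\x_z}(\vb)$ and drawing a fresh dataset $\data'$ from it, and the delicate point is that we sample $\X_Z\sim P(\X_Z)$ (or uniformly over its support) instead of fixing one $\x_z$: because the downstream models are then trained on $\data'$ across the whole support of $\X_Z$, they represent $P'(v_i\mid v_\pi^{(i-1)})$ for every value of $\X_Z$, and hence stay valid when, during the final ancestral sampling, a value of $\X_Z$ arrives from a sibling c-component rather than from the clamp. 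The induction hypothesis applied to the strictly smaller graph $\Gp_{\{S',\overline{\Xp}\}}$ with data $\data'$ then yields a network sampling from $P_{\x\cap s'}(\y)=P_{\x}(\y)$. Termination follows exactly as for ID~\citep{shpitser2008complete}: every recursive call strictly decreases $|\V|$ or $|\X|$, with Step~7 in particular passing to the strictly smaller graph $G_{S'}$ (valid because, having passed Step~5, $C(G)\neq\{G\}$ so $S'\subsetneq\V$).

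Completeness is then short. \idgen{} throws \textsc{fail} only at Step~5, and it arrives there under exactly the graph condition ($C(G\setminus\X)$ a single c-component with $C(G)=\{G\}$) under which ID declares a hedge; since \idgen{} mirrors ID's control flow on the same graphs, \idgen{} reaches Step~5 on some branch iff ID does, which by completeness of ID~\citep{shpitser2008complete} happens iff $P_{\x}(\y)$ is not identifiable. Hence for every identifiable query \idgen{} terminates without failure and, by soundness, its network samples from $P_{\x}(\y)$. For the conditional query $P_{\x}(\y\mid\mbf z)$ I would use the IDC reduction: identifiability of $P_{\x}(\y\mid\mbf z)$ is equivalent to identifiability of the unconditional $P_{\x}(\y,\mbf z)$ after moving the maximal conditionable subset of $\mbf z$ into the intervention set, so \idcdag{} (Algorithm~\ref{alg:IDC-DAG}) calls \idgen{} to obtain a network sampling from $P_{\x}(\y,\mbf z)$, draws $\data[\X,\mbf Z,\Y]$ from it, and trains one extra conditional model $M_{\Y}(\X,\mbf Z)$ on that dataset; assumption~(iv) makes this model sample from $P_{\x}(\y\mid\mbf z)$, giving soundness, and completeness is inherited from IDC. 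I expect the Step-4/Step-7 pair to be the crux: proving that \idmerge{} cannot introduce a cycle, and—above all—justifying that randomizing $\X_Z$ inside \stepsev{} produces models that remain correct for the values substituted during ancestral sampling from other c-factors. Both arguments rest on careful bookkeeping of which topological order, $\pi_G$ versus $\pi_{\Gp}$, each model is trained against.
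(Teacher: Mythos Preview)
Your proposal is correct and follows essentially the same route as the paper's own proof: structural induction on \idgen's recursion tree, with Steps~1/6 as base cases via \condgm{} and Lemma~\ref{lemma:joint-sample-from-net}, Step~4 via c-component factorization and acyclicity of the merged network (the paper's Lemma~\ref{lem:acyclic}), Step~7 via the Verma reduction with randomized $\X_Z$, and completeness by mirroring ID's control flow to Step~5. The one organizational difference is that the paper factors out, as standalone Lemmas~\ref{lem:mirror}, \ref{lem:same-dag-dag}, \ref{lem:same-dist-dist}, \ref{lem:same-dist-graph}, the invariant you pack into your inductive hypothesis---namely that at every recursion level the parameters $(\Y,\X,G)$ of \idgen{} coincide with ID's, that $G_{ID}=\Gp\setminus\Xp$, and that $\data\sim P_{\xp}(\xp,\vb)$ matches ID's $P$ parameter for fixed $\xp$---and then invokes these lemmas inside the base-case proofs; you instead carry this bookkeeping implicitly and only flag it at the end as ``the crux.'' Either packaging works, but the paper's explicit mirror lemma makes the Step~7 correctness argument (that the regenerated dataset $\data'$ really is drawn from the distribution ID assumes) cleaner to cite.
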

Note that although existing work can sample from (low-dimensional) distributions, Theorem~\ref{th:sound-complete}, makes \idgen, to our knowledge, the first method to use \textbf{only feed-forward models} to provably
sample from identifiable high-dimensional interventional distributions.
%
%
%
%
%
%
\section{Experiments}
 To illustrate \idgen's capabilities with high-dimensional image and text variables, we evaluate it on semi-synthetic: Colored MNIST and real-world: CelebA and MIMIC-CXR datasets. 
 We provide additional details in Appendix~\ref{appex:exp-details}.
 Codes are available at \href{https://github.com/musfiqshohan/IDGEN}{github.com/musfiqshohan/idgen}.
%
%
%
\subsection{\idgen performance on napkin-MNIST dataset and baseline comparison}
\label{sec:napkin-mnist}
\textbf{Setup:} 
We consider a semi-synthetic Colored-MNIST dataset for the napkin graph~\citep{pearl2018book}
in Fig.~\ref{fig:napkin-mnist} with image variables $W_1, X, Y$ and paired discrete variable $W_2$.
Here, $X$ and $Y$ inherit the same digit value as image $W_1$ which is propagated through discrete $W_2.d \in [0-9]$. $X$ is either red or green which is also inherited from $W_1$ through discrete $W_2.c$, i.e., $W_1.color:\{r, g, b, y, m, cy\} \rightarrow W_2.c:\{0, 1\} \rightarrow X.color:\{r, g\}$. 
Unobserved $Ucolor$ makes $W_1$ and $Y$ correlated with the same color, and unobserved $Uthickness$ makes $W_1$ and $X$ correlated with the same thickness. 
Even though image $X$ (takes r and g) is a direct ancestor of image $Y$ (takes all 6 colors), ${Y}$ only inherits the digit property from $X$ but correlates the color property with image ${W}_1$. This color correlation between ${W}_1$ and ${Y}$ is created by Ucolor.
All mechanisms include $10\%$ noise. Our target is to sample from $P_x(y)$.
\begin{figure}[t!]
 \vspace{-4mm}
\begin{subfigure}[c]{0.12\linewidth}
			\centering
   \begin{minipage}[c]{1\linewidth}
   \centering
   \begin{tikzpicture}[scale=0.8, transform shape]
        \tikzstyle{every node}=[]
\node   (1a) {$W_1$};
\node [below =0.5cm of 1a] (2a) {$W_2$};
\node [text=red, below =0.5cm of 2a] (3a) {$X$};
\node [below =0.5cm of 3a] (4a) {$Y$};
        \draw[ thick] (1a) to  (2a); 
        \draw[ thick] (2a) to  (3a); 
        \draw[ thick] (3a) to  (4a); 
    \path[bidirected] (1a) edge[bend left=40]  node[pos=0.5,sloped,font=\footnotesize, align=center, rotate=180] {Ucolor} (4a);
    \path[bidirected] (1a) edge[bend right=40]
    node[pos=0.5,sloped,font=\footnotesize, align=center,
    rotate=180] {Uthickness}(3a);
    \end{tikzpicture}
      \end{minipage}
		\end{subfigure}
    \begin{subfigure}[c]{0.155\linewidth}
  \scalebox{0.8}{
\renewcommand{\tabcolsep}{2.5pt}
\begin{tabular}{|c|c|c|}
\hline
                       &               & FID $\downarrow$ \\ \hline
\multirow{4}{*}{
\rotatebox{90}{$X=3$}
} & \texttt{Cond} & 50.83            \\ \cline{2-3} 
                       & \texttt{DCM}  & 66.57            \\ \cline{2-3} 
                       & \texttt{NCM}  & 61.11            \\ \cline{2-3} 
                       & \textbf{Ours} & \textbf{25.66}   \\ \hline
                       &               &                  \\ \hline
\multirow{4}{*}{
\rotatebox{90}{$X=5$}
} & \texttt{Cond} & 41.35            \\ \cline{2-3} 
                       & \texttt{DCM}  & 60.61            \\ \cline{2-3} 
                       & \texttt{NCM}  & 71.50            \\ \cline{2-3} 
                       & \textbf{Ours} & \textbf{22.67}   \\ \hline
\end{tabular}
}
  \end{subfigure}
		\begin{subfigure}[c]{0.38\linewidth}
			\centering
   \includegraphics[width=1.0\linewidth]{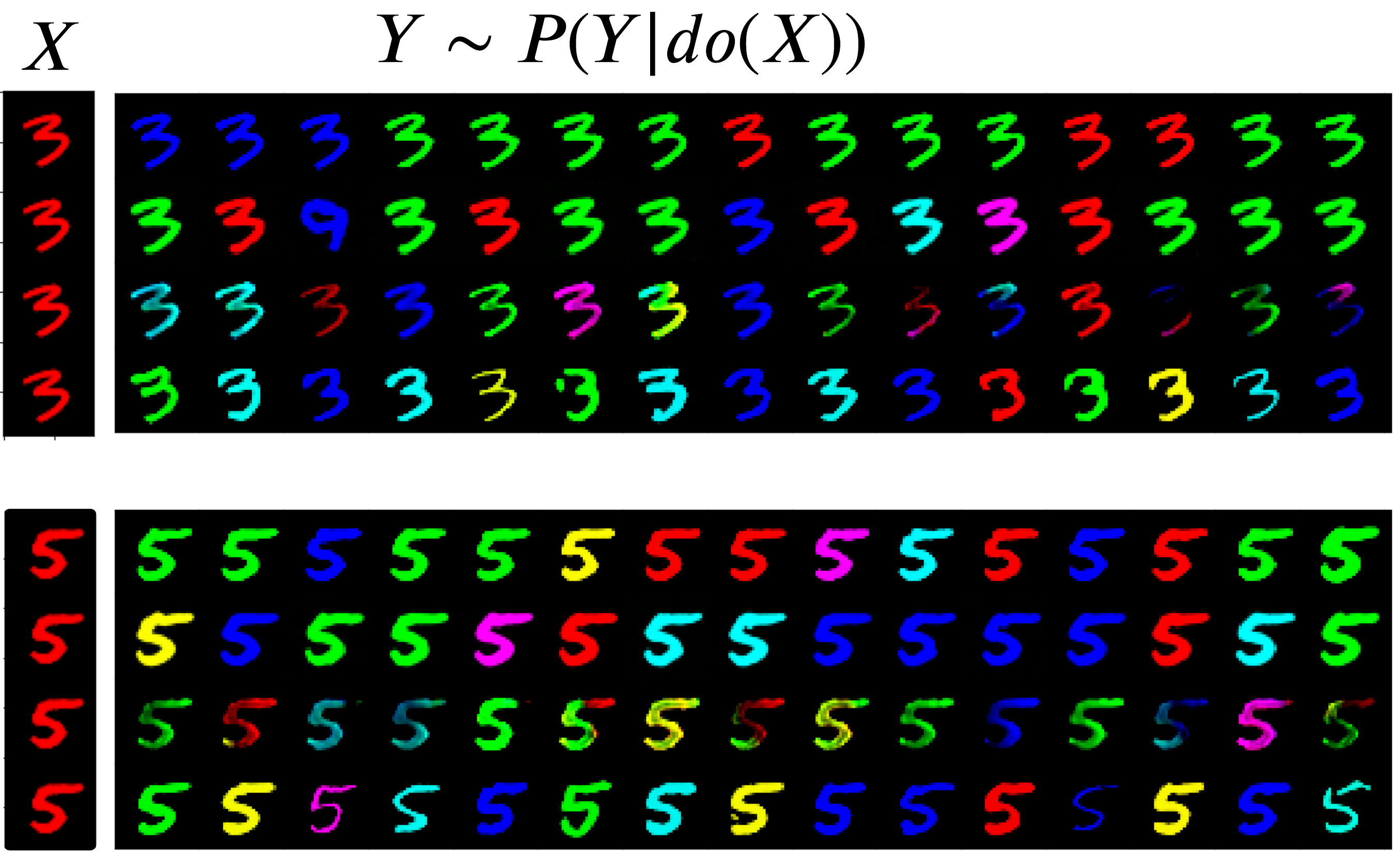}
		\end{subfigure}
  \begin{subfigure}[c]{0.18\linewidth}
  \begin{subfigure}[c]{1\linewidth}
          \scalebox{0.8}{
\begin{tabular}{|p{0.7cm}|p{0.7cm}|p{0.7cm}|p{0.7cm}|p{0.7cm}|}
\hline
 Color       &True $P_{x}(y)$ & \texttt{Cond} & \begin{tabular}[c]{@{}l@{}}\texttt{DCM}\end{tabular} & \textbf{Ours}   \\ \hline
R     & 0.17        & 0.17       & 0.24    & \textbf{0.13} \\ \hline
G  & 0.17        & 0.45      & 0.27    & \textbf{0.24} \\ \hline
B    & 0.17        & 0.15       & 0.28    & \textbf{0.21} \\ \hline
Yw  & 0.17        & 0.19       & 0.05    & \textbf{0.14} \\ \hline
Mg & 0.17        & 0.02       & 0.07    & \textbf{0.11} \\ \hline
Cy    & 0.17        & 0.10      & 0.09    & \textbf{0.18} \\ \hline
\end{tabular}
}
  \end{subfigure}
    \begin{subfigure}[c]{1\linewidth}
    \scalebox{0.8}{
\begin{tabular}{|p{0.7cm}|p{0.7cm}|p{0.7cm}|p{0.7cm}|p{0.7cm}|}
\hline
TVD  & 0 & 0.54 & 0.58 & \textbf{0.25} \\ \hline
\end{tabular}
}
  \end{subfigure}
  \end{subfigure}

		\caption{(Left:) Causal graph with color and thickness as unobserved. 
  (Center:) FID scores (lower the better) of each algorithm and images generated from them.
  %
   %
   (Right:) 
   Likelihood calculated from the $P_{x}(y)$ images generated by each algorithm.
   We closely reflect the true $P_{x}(y)$ with low TVD. 
   }
   \vspace{-3mm}
	\label{fig:napkin-mnist}
\end{figure}
\par
\textbf{Training and evaluation:} 
We follow \idgen steps: $[3, 7, 2, 6]$. 
Step 3 implies $P_x(y) = P_{x,w_1,w_2}(y)$, i.e., intervention set =$\{X,W_1,W_2\}$.
Step 7 suggests to generate $\Do(W_2)$ interventional dataset $\mathcal{D}'[W_1,W_2,X,Y] \sim P_{w_2}(w_1,x,y)= P(w_1)P(x,y|w_1,w_2)$.
To obtain $\mathcal{D}'$, we i) sample $W_1\!\sim\!P(w_1)$, and ii) train a conditional diffusion model to sampling from $P(x,y|w_1,w_2)$ with arbitrary $W_2$ values.
Next, Step 2 drops non-ancestor $W_1$ and Step 6 trains a diffusion model $M_Y(x, w_2)$ on the new dataset  $\mathcal{D}'$ to sample from $P'(y|x, w_2)$. 
 $M_Y(x, w_2)$ is returned as output that can sample from $P_x(y)$,$\forall w_2$.
%
%
%
%
%
We compare our performance with three baselines:
i) a classifier-free diffusion model that samples from the (\texttt{Cond})itional distribution $P(y|x)$, ii) the \texttt{DCM} algorithm~\citep{chao2023interventional} that uses diffusion models to samples from $P_x(y)$ but without confounders, and iii) the NCM algorithm~\citep{xia2021causal} that uses GANs and considers confounders.
%
%
%
We performed $\Do(x)$ intervention with two images, i) digit 3 and ii) digit 5, both colored red. In Fig.~\ref{fig:napkin-mnist}, we show interventional samples for each method alongside their FID scores representing the image quality (lower:better).
The (\texttt{Cond}) model (row 1, 5), \texttt{DCM} (row 2, 6)  and our algorithm (row 5, 8) all generate good quality images of digit 3 and digit 5 with a specific color.
However, the \texttt{NCM} algorithm (row 3, 7) generates images with blended colors (such as green + red).
We observe that \idgen achieves the lowest FID scores (25.66 and 22.67), showing the ability to generate high-quality images consistent with the dataset. 
Whereas, \texttt{Cond} and \texttt{DCM} generate almost the same structure for all digits lacking variety, which explains their high FID.
%
%
%
Note that $\Do(x)$ removes the color bias between $X$ and $Y$ along the backdoor path. Thus, interventional samples should show all colors with uniform probability. Since \texttt{Cond} and \texttt{DCM} can not deal with confounders they show bias towards R, G, B colors of $Y$ for red $X$. \idgen removes such bias and balances different colors (Fig.~\ref{fig:napkin-mnist}). For a more rigorous evaluation, we use the effectiveness metric proposed in \cite{monteiro2023measuring} and
employ a classifier to map all generated images to discrete analogues $(Digit, Color, Thickness)$ and compute exact likelihoods. We compare them with our ground truth $P(Y.color|do(x))$ (uniform) and display these results for the color attribute in Fig.~\ref{fig:napkin-mnist}(right). We emulate the interventional distribution more closely with a low total variation distance: 0.25 compared to the baselines \texttt{Cond} (0.54) and \texttt{DCM} (0.58). We skip classifying colors of NCM as they are blended.
\subsection{
Evaluating CelebA image translation models with \idgen}
\label{sec:celeba-img-img}
\begin{wrapfigure}{r}{0.45\linewidth}
\vspace{-2mm}
	\begin{center}
	\centering
 \begin{subfigure}{0.45\linewidth}
\begin{tikzpicture}[scale=0.7, transform shape]
\tikzstyle{every node}=[]
\node   (i1) {$I_1$};
\node [below =0.6cm of i1] (p1) {$P_1$};
\node [right =2.4cm of i1] (i2) {$I_2$};
\node [below =0.6cm of i2] (p2) {$P_2$};
\node [right  =0.8cm of p1] (mis) {$A$};
\node [text=red, below right  =0.2cm and 0.6cm of i1] (m) {$Male$};
\node [above right =0.2cm and 0.6cm of i1] (y) {$Young (Y)$};
\draw[ thick] (i1) to  (p1);
\draw[ thick] (i1) to  (i2); 
\draw[ thick] (i2) to  (p2); 
\draw[ thick] (p1) to  (mis); 
\draw[ thick] (p2) to  (mis); 
\draw[ thick] (i1) to  (m); 
\draw[ thick] (i1) to  (y); 
\draw[ thick] (m) to  (i2); 
\draw[ thick] (y) to  (i2); 
\path[bidirected] (m) edge[bend left=35] (y);
\end{tikzpicture}
\end{subfigure}
\begin{subfigure}{0.45\linewidth}
    \centering
\begin{tikzpicture}[scale=0.6, transform shape]
\node [nnvsm, draw] (i2) {$M_{I_2}$};
\node [nnvsm, draw, above =0.6cm of i2] (ny) {$M_{Y}$};
\node [nnvsm, left=0.6cm of ny] (i1) {$M_{I_1}$};
\draw[ thick] (i1) to  (ny); 
\draw[ thick] (ny) to  (i2); 
\draw[ thick] (i1) to node[] {}  (i2);
\draw [<- ,thick] (i2) -- +(-1.1cm, 0) node[left]{$Male=0$};
\draw [-> ,thick] (i2) -- +(1.1cm, 0) node[right]{$I_2$};
\end{tikzpicture}
\end{subfigure}
\begin{subfigure}{1\linewidth}
    \includegraphics[width=1\linewidth]{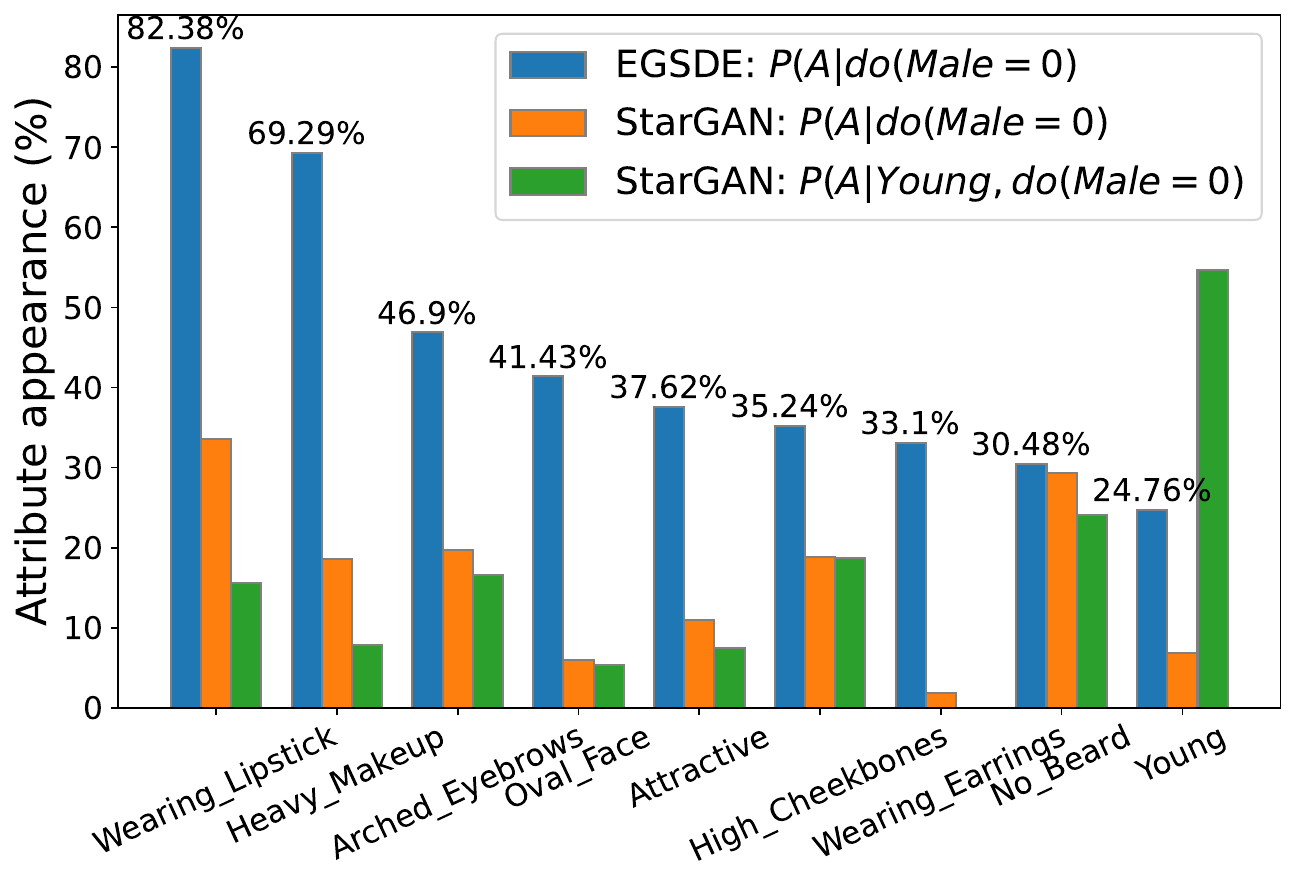}
\end{subfigure}
	\caption{
    i) Graph and sampling network for $P_{Male}(I_2)$.
    ii)
    For both causal and non-causal attributes, EGSDE shows high correlation.
	}
	\label{fig:celeba-exp}
 \vspace{-4mm}
\end{center}
\end{wrapfigure}
\textbf{Setup:}
We apply \idgen to evaluate multidomain image translation of some existing generative models (ex: $\MA$ to $\FE$ domain translation).
We examine whether they apply causal changes in (facial) attributes or add unnecessary changes due to the spurious correlations among different attributes they picked up in the training data.
Our application is motivated by~\citet{goyal2019explaining}, who generate counterfactual images to explain a pre-trained classifier while we examine pre-trained image generative models.
We employ two generative models that are trained on CelebA dataset~\citep{liu2015faceattributes}: i) StarGAN~\citep{choi2018stargan} and ii) EGSDE~\citep{zhao2022egsde}
(an approach that utilizes energy-guided stochastic differential equations).
 We assume the graph in Fig.~\ref{fig:celeba-exp} where the original image $I_1$ causes its own attributes $\MA$ and $\YO$. 
We consider an unobserved confounder between them, as in the dataset, men are more likely to be old (correlation coeff=$0.42$,~\citep{shen2020interpreting}) and a classifier might have some bias toward predicting young-male images as old-male.
%
%
 %
 These attributes along with the original image are used to  generate a translated image $I_2$.
Next, $P_1$ and $P_2$ are 40 CelebA attributes of $I_1$ and $I_2$.
$A$ is the difference between $P_1, P_2$, i.e., the additional attributes (ex: makeup) that gets added to $I_2$ but are absent in $I_1$ during translation.
We estimate $P_{\MA=0}(A)$, i.e., the causal effect of changing the domain from $\MA$ to $\FE$ on the appearance of a new attribute. 
\par
\textbf{Training and Evaluation}:
For $P_{Male}(A)$,
We first generate $I_2\sim P_{\MA=0}(I_2)$ and then use a classifier on $I_2$. 
\idgen vists steps: $4, 6$ and factorizes as $P_{\MA=0}(I_2)=$ $\int_{Y, I_1}P(I_2|\MA=0, Y, I_1)$ $* P(Y, I_1)$. 
To sample from these factors, it first trains $M_{Y}, M_{I_1}$.
Next, instead of training $M_{I_2}$ for $P(I_2|\MA=0, Y, I_1)$, we plug in the model that we want to evaluate (StarGAN or EGSDE) as $M_{I_2}$. We connect these models to build the sampling network in Fig.~\ref{fig:celeba-exp}. 
 We can now perform ancestral sampling in the network with $\MA=0$ and generate samples of $I_2$.
Next, we use a classifier to obtain 40 attributes of $I_1$ and $I_2$ as $P_1$ and $P_2$. 
Finally, we obtain the added attributes, $A$  by comparing $P_1$ and $P_2$ and report the proportion as the estimate of $P_{\MA=0}(A)$.
Similarly, we also estimate the conditional query, $P_{\MA=0}(A|Y)$, using StarGAN with $Y=1$ fixed.
\par
In Fig.~\ref{fig:celeba-exp}, we show top 9 most appeared attributes.
%
%
We observe that EGSDE introduces both causal (ex: $\WL$  to $82\%$, $\HM$ to $69.28\%$ of all images) and non-causal attributes (ex:$\AT$ to 37.61\% and $\YO$ to 24.76\%) with high probability. 
%
%
The model might assign high-probability to non-causal attributes because they were spuriously correlated in the CelebA training dataset (ex: sex and age). 
%
On the other hand, StarGAN and conditional StarGAN introduce new attributes with a low probability ($\leq 30\%)$ even if they are causal which is also not preferred. 
These evaluations enabled by \idgen help us to understand the fairness or bias in 
the prediction of image translation models.
Finally, for the conditional query $P_{\MA=0}(A|Y)$,  StarGAN translates $54.68\%$ of all images as $\YO, \FE$ which 
is
consistent.
In Appendix~\ref{appex:celeba}, we discuss how baselines deal poorly with these queries.
%
%
%
%
%
%
%
		
%
%
\subsection{Invariant prediction with foundation models for chest X-ray generation}
\label{sec:mimic}
\textbf{Setup:}
In this section, we demonstrate \idgen's utility to intervene with a text variable and generate images from the corresponding interventional distribution. 
Due to the lack of high-quality, annotated medical imaging datasets, the task of generating X-ray images $(X)$ from the prescription reports $(R)$ is recently being investigated.
Consider the report-to-Xray  generation task shown in Fig.~\ref{fig:cxr-exp}.
The shown causal graph is designed based on the MIMIC-CXR dataset~\cite{johnson2019mimic} (see Appendix~\ref{appex:xray-llm} for details).
Existing works~\citep{chambon2022roentgen} solve this task by using vision-language foundation models to directly generate CXR images conditioned on text prompts (or report): $P(X|R)$(Fig.~\ref{fig:cxr-exp} left).
These models can generate images satisfying the prompt attributes (ex: effusion). However, they ignore the status of other attributes caused by prompt attributes (effusion$\rightarrow$atelectasis) that might implicitly contribute to generate the image.
Moreover, these models are trained on multiple datasets collected from different locations, which could introduce different types of bias~\citep{catala2021bias}. For example, in many scenarios, patients with severe pneumonia are transferred to a different hospital and receives critical reports.
Thus, there exists a potential confounding bias between report ($R$) writing style and pneumonia prevalence ($N$) in a specific location ($H$) i.e. $R\leftarrow H \rightarrow N$.
The direct $R\rightarrow X$ prediction of the above mentioned conditional models might absorb the backdoor bias: $R \leftarrow H \rightarrow N \rightarrow ...\rightarrow X$ and get affected if $P(R, N)$ shifts in a new location.
We aim to make the image generation task of such models, interpretable and invariant in the test domain.
%
%
Therefore, we consider the same input (report variable) and output (X-ray image) as such these models.
For this purpose, \idgen  performs $\Do(R)$ intervention 
to remove the backdoor bias and infer attributes from the interventional distribution.
Now these attributes are used to generate the final images (motivated by~\citep{subbaswamy2019preventing, lee2023finding}).
%
%
%
%
\begin{figure}[t!]
\vspace{-4mm}
    \centering
    \begin{subfigure}{0.33\linewidth}
    \begin{subfigure}{1\linewidth}
    \centering
\begin{tikzpicture}[scale=0.7, transform shape]
     \begin{scope}[auto, every node/.style={minimum size=2em,inner sep=1},node distance=2cm]
  \node  (e) [align=center] { Pleural \\ Effusion$(E)$};
  \node [left =0.7cm of e, align=center] (N) {Pneumonia \\ $(N)$};
   \node  [text=red,above =0.6cm of e, align=center] (R) {Prompt/ \\ Report$(R)$};
   \node  [below =0.6cm of e, align=center] (A) {Atelectasis $(A)$};
     \node [below =0.6cm of A, align=center] (X) {Xray \\Image   $(X)$};
  \node [below =1.9cm of N, align=center] (L) {Lung \\Opacity   $(L)$};

  \node  [text=red,left =2.8cm of R, align=center] (RR) {Prompt/ \\ Report$(R)$};
   \node [below =3.3cm of RR, align=center] (XX) {Xray \\Image   $(X)$};

   \node [above =0.3cm of RR, align=center] (BT) {Baseline \\ Causal graph};

   \node [above =0.3cm of R, align=center] (OT) {True \\ Causal graph};
   
   \draw[ thick] (RR) to  (XX); 

 \draw[ thick] (R) to  (e); 
 \draw[ thick] (R)  edge[bend left=40]  (A); 
 \draw[ thick] (R) to  (L); 
 \draw[ thick] (e) to  (A); 
 \draw[ thick] (A) to  (X); 
 \draw[ thick] (A) to  (L); 
\draw[ thick] (e)  edge[bend right=0]  (L); 
 \draw[ thick] (N) to  (e);
 \draw[ thick] (N) to  (A);
 \draw[ thick] (N) to  (L);
 \draw[ thick] (L) to  (X); 
 \draw[ thick] (e)  edge[bend left=40]  (X); 
 \path[bidirected] (N)  edge[bend left=45] node[pos=0.5,sloped,font=\small, align=center] {Hospital\\ Location ($H$)} (R);
  \end{scope}
  \begin{scope}[on background layer]
  \end{scope}
\end{tikzpicture}
\end{subfigure}
    \end{subfigure}
\begin{subfigure}{0.65\linewidth}
\centering
        \includegraphics[width=1\linewidth]{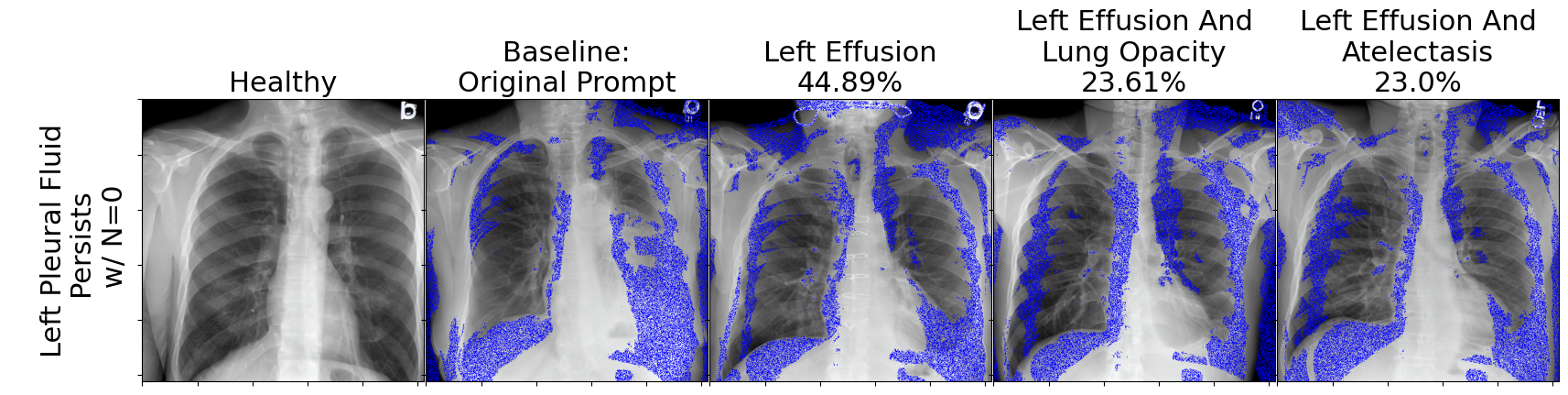}
        \includegraphics[width=1\linewidth]
        {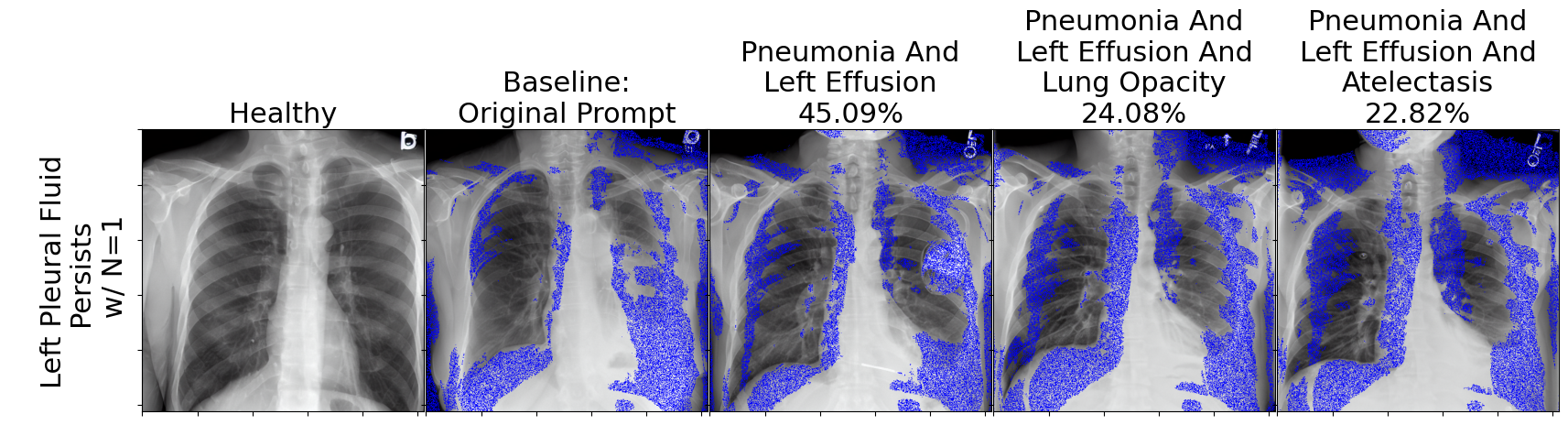}
        \end{subfigure}
    \caption{
    Left: Baseline vs our causal graph. Right: images for specific prompt w/ and w/o pneumonia. Inferred attributes are shown with their likelihood. Blue indicates changes compared to healthy.
    }
    \label{fig:cxr-exp}
\vspace{-4mm}
\end{figure}
\par
\textbf{Training and Evaluation:}
We indicate attribute extraction with outgoing edges from report $R$ and image generation with incoming edges to X-ray $X$.
We assume pneumonia infection $N$ affects other attributes.
For our target query $P_{R}(N, E, A, L, X)$, \idgen first visits step 4 and factorizes it as $P_{R}(N, E, A, L, X) = P(N) P_{R,N}(E)  P_{R,N, E}(A)  P_{R,N,E,A}(L)  P_{E, A, L}(X)$. 
\idgen goes to step 6 for each, converts them to conditional distributions and trains corresponding models or use pre-trained ones for $M_N, M_E, M_A, M_L, M_X$.
We utilize an LLM labeler~\cite{gu2024chex} that can only extract  attributes $E,A,L$ from the report $R$ (thus $R \not\rightarrow N$). If $R$ explicitly mentions $E,A,L$ then the models $M_E, M_A, M_L$ output 1. Otherwise, they probabilistically infer their output conditioned on parent attributes in the causal graph.
The models we train are able to converge to their conditional distribution with a low total variation distance ($\approx 0.05)$.
We utilize a stable diffusion-based model~\cite{chambon2022roentgen} as $M_X$ to sample $X\sim P_{E, A, L}(X)$.
After \idgen connects all models, we intervene with a prompt, infer all attributes along with their empirical distribution and generate corresponding X-rays. 
 In Fig.~\ref{fig:cxr-exp} (right), we show our results for the prompt intervention $\Do(R=$left pleural fluid persists). For better visualization, we also condition on $N=0$ and $N=1$ separately. First, the LLM labeler extracts effusion ($E=1$) from the prompt. Next, ancestral sampling with $E=1$ in the sampling network provides us with the top 3 most probable attribute combination ($N=0/1, E=1, A, L$) and their corresponding images.
 In Fig.~\ref{fig:cxr-exp}, we provide (i) a healthy image, (ii) image generated by baseline~\cite{chambon2022roentgen}, and (iii)-(v) images generated with \idgen and their attributes with empirical probabilities. 
\idgen makes these predictions
interpretable while facilitating them to be invariant with domain shifts.
\section{Related work}
\citet{shalit2017estimating,louizos2017causal, zhang2021treatment, vo2022adaptive}  propose novel approaches to solve the causal effect estimation problem using variational autoencoders. Their proposed solution and theoretical guarantees are tailored for specific causal graphs containing treatment, effect, and covariates (or observed proxy variables), where they can apply the backdoor adjustment formula. \citet{sanchez2022diffusion} employ DDPMs~\citep{ho2020denoising} to generate high-dimensional interventional samples, but only for bivariate models.
	%
\citet{kocaoglu2018causalgan} perform adversarial training on a collection of conditional generative models following the topological order to sample from interventional distributions.
\citet{pawlowski2020deep} employ a conditional normalizing flow-based approach to offer high-dimensional interventional sampling as part of their solution.  
\citet{chao2023interventional} designs diffusion-based causal models for arbitrary causal graphs with classifier-free guidance~\citep{ho2022classifier}.
However, these works have limited applications due to their strong assumption of no latent confounders in the system. 

\citet{xia2021causal}
propose a similar training process as \citet{kocaoglu2018causalgan}  in the presence of hidden confounders. They show explicit connections with the Causal Hierarchy Theorem~\citep{bareinboim2022pearl} and formalize the identification problem with neural models.
However, it is difficult to match an arbitrary high-dimensional distribution, and
their joint GAN training approach may suffer from convergence issues.
~\citet{rahman2024modular} utilizes a modular algorithm to relax the joint training restriction for specific structures, but might face the convergence issue
when the number of high-dimensional variables increases.
Note that these methods are not suitable for efficient conditional sampling.
	%
	%
%
%
 \citet{jung2020learning} convert the expression returned by identification algorithm ~\citep{tian2002general} into a form where it can be computed through a re-weighting function to allow sample-efficient estimation.
Similarly \citet{bhattacharyya2020learning, bhattacharyya2022efficient} utilize bounded number of samples from the observational distribution to construct an interventional sampler.
 However, computing these reweighting functions/conditional distributions from data is still highly nontrivial with high-dimensional variables. 
\citet{zevcevic2021interventional} models each probabilistic term in the expression obtained for $P(y|\Do(x))$ with (i)SPNs.
However, they require access to the interventional data and do not address identification from only observations. \citet{wang2023compositional} design a novel probabilistic circuit architecture to encode the target causal query and estimate causal effects but do not offer high-dimensional sampling. 
%
\section{Conclusion}
We propose a sound and complete algorithm to sample from conditional or unconditional high-dimensional interventional distributions. Our approach is able to leverage the state-of-the-art conditional generative models by showing that any identifiable causal effect estimand can be sampled from efficiently, only via feed-forward  models. 
In future, we aim to relax our assumption on access to the ADMG and adapt our algorithm to deal with soft interventions.
%
%
%

\clearpage
\begin{ack}
This research has been supported in part by NSF CAREER 2239375, IIS 2348717, Amazon Research Award and Adobe Research.
\end{ack}
\bibliography{references}
\bibliographystyle{plainnat}


\clearpage
\appendix

\section{Limitations and future work}
\label{appex-limitations}
In this section, we list some future directions of our work.
i) We assume that we have access to the fully specified causal graph (ADMG) and the causal model is semi-MarkoviaN. Although these are common assumptions in causal inference, we aim to extend our algorithm for structures with more uncertainty (ex: PAGs). 
ii) In certain cases, the identification algorithm acts on particular variables, yet the expression for the causal effect does not depend on these variables. However, given that our sample size is limited, distinct values of these variables could affect the accuracy of the causal effect~\citep{helske2021estimation}. 
Since our algorithm follows the same recursive trace as the identification algorithm, it might suffer from the same problem. In addition, some model training might be unnecessarily performed due to this issue. To avoid this scenario, 
the pruning algorithm proposed in \cite{tikka2018enhancing} might be merged with us to reduce the redundant cost of model training. 
iii) Since we target a specific causal query, if the query is substantially changed (causal effect on new variables), we might have to re-train some models. iv) For causal effect estimation in high-dimension, we follow the ID algorithm~\citep{shpitser2006identification} to consider a hard intervention $\Do(X=x)$  i.e., fixing a specific value of the intervened variable, and assume that the rest of the conditional distribution stays unchanged.
An interesting future direction would be to consider imperfect/soft intervention where the underlying mechanism of intervention changes instead of being fixed to a specific value.
%

\section{Broader impact}
\label{appex-broader-impact}
During the last few years, researchers have proposed many deep learning-based approaches to learn the unknown structural causal model from available
data and employ the learned model to estimate the causal effect or sample from the high-dimensional interventional distribution. However, all these methods propose solutions tailored to specific neural architectures. As a result, when better generative models appear (such as diffusion models), existing methods lack the flexibility to utilize them, and thus we are in need of a new causal sampling method to get benefits of the new architecture.
\idgen proposes a generic method that is independent of any model architecture and can generate high-dimensional interventional samples with any model architecture (GANs, VAE, Normalizing flow, diffusion models, etc.) as long as that model has the ability to generate conditional samples. Thus, our algorithm would allow the causal community to always use the latest generative model for high-dimensional causal sampling-based applications.

On a different note,  the ability to sample from interventional distributions may enhance deep-generative models to obtain fake data, which can be exploited similarly to fake image generation.
Since our algorithm considers causal relations among different variables, it has the ability to generate comparatively sensible and realistic fake images.
Fake images can be a source of distress or might influence public actions and opinion.
Thus, careful deployment of should be considered for deep causal generative models and
the same procedures to detect fake image should be taken.

\section{\idgen additional discussion}

In this section, we provide more details about our algorithm \idgen.

\subsection{Sampling from any interventional distribution with \idgen}
\label{appex-equation}
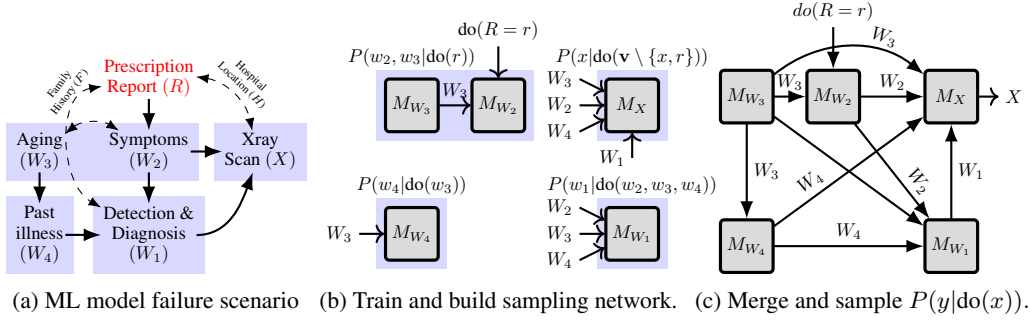
\begin{figure}[t!]
    \centering
     \begin{subfigure}{0.30\textwidth}
     \centering
        \begin{tikzpicture}[scale=0.7, transform shape]
     \begin{scope}[auto, every node/.style={minimum size=2em,inner sep=1},node distance=2cm]
  \node  (w2) [align=center] {Symptoms \\$(W_2)$};
  \node [left =0.8cm of w2, align=center] (w3) {Aging \\ $(W_3)$};
  \node  [below =0.6cm of w3, align=center] (w4) {Past \\illness \\ $(W_4)$};
  \node [right =0.6cm of w2, align=center] (y) {Xray \\Scan   $(X)$};
   \node  [text=red, above =0.6cm of w2, align=center] (x) {Prescription \\ Report  $(R)$};
   \node  [below =0.6cm of w2, align=center] (w1) {Detection \&  \\ Diagnosis \\ $(W_1)$};
 \draw[ thick] (w3) to  (w4); 
 \draw[ thick] (x) to  (w2); 
 \draw[ thick] (w2) to  (w1); 
 \draw[ thick] (w2) to  (y); 
 \draw[ thick] (w4) to  (w1); 
 \draw[ thick] (w1) to [out=0, in=-110] (y); 
 \path[bidirected] (w3)  edge[bend left=35] (w2);
 \path[bidirected] (x) 
 edge[in =150, out=190] 
 node[pos=0.2,sloped,font=\tiny, align=center] {Family\\ History ($F$)} 
 (w1);
 \path[bidirected] (x) edge[bend left=35] node[pos=0.5,sloped,font=\tiny, align=center] {Hospital\\ Location ($H$)} (y);
  \end{scope}
  \begin{scope}[on background layer]
  \node[outer,fit=(w2) (w3)] {};
  \node[outer,fit=(w4)] {};
  \node[outer,fit=(y)] {};
  \node[outer,fit=(w1)] {};
  \end{scope}
\end{tikzpicture}
\caption{ML model failure scenario}
\label{fig:intro-xray}
    \end{subfigure}
\begin{subfigure}{.34\textwidth}
\centering
  \begin{tikzpicture}[scale=0.7, transform shape]
  \node [nnvsm] (nw3) {$M_{W_3}$};
 \node [nnvsm, draw, right =0.6cm of nw3] (nw2) {$M_{W_2}$};
   \node [above=0.6cm of nw2] (x) {$\Do(R=r)$};

\node[above=0.1 cm of nw3] (fw4){$P(w_2, w_3| \Do(r))$};

\node [nnvsm, draw, below =1.4cm of nw3] (nw4) {$M_{W_4}$};
\node[above=0.1 cm of nw4] (fw4){$ P(w_4|\Do(w_3))$};

\node [nnvsm, draw, right =1.5cm of nw2] (ny) {$M_{X}$};
\node[above=0.1 cm of ny] (fx)
{$P(x| \Do(\vb\setminus\{x,r\}))$};
\node [nnvsm, draw, below =1.4cm of ny] (nw1) {$M_{W_1}$};
\node[above=0.1 cm of nw1] (fw1){$ P(w_1|\Do(w_2,w_3,w_4))$};

 \draw[->, thick] (x) to  (nw2); 
 \draw[ ->, thick] (nw3) to  node[] {$W_3$}  (nw2);
 
   \draw [<- ,thick] (nw4) -- +(-1cm, 0) node[left]{$W_3$};

  \draw [<- ,thick] (nw1) -- +(-1cm, 0.5) node[left]{$W_2$};
    \draw [<- ,thick] (nw1) -- +(-1cm, 0) node[left]{$W_3$};
  \draw [<- ,thick] (nw1) -- +(-1cm,-0.5) node[left]{$W_4$};

 \draw [<- ,thick] (ny) -- +(-1cm, 0) node[left]{$W_2$};
  \draw [<- ,thick] (ny) -- +(-1cm,-0.5) node[left]{$W_4$};
 \draw [<- ,thick] (ny) -- +(-1cm, 0.5) node[left]{$W_3$};
 \draw [<- ,thick] (ny) -- +(-0cm,-1) node[left]{$W_1$};
 
   \begin{scope}[on background layer]
  \node[outer, fit=(nw3) (nw2)] {};
  \node[outer,fit=(nw4)] {};
  \node[outer,fit=(ny)] {};
  \node[outer,fit=(nw1)] {};
  \end{scope}
\end{tikzpicture}
\caption{Train and build sampling network.
}
\label{fig:appex-train-net}
\end{subfigure}
\begin{subfigure}{.34\textwidth}
\hspace{2mm}
\centering
        \begin{tikzpicture}[scale=0.7, transform shape]

  \node [nnvsm] (nw3) {$M_{W_3}$};
 \node [nnvsm, draw, right =0.6cm of nw3] (nw2) {$M_{W_2}$};
   \node [above=0.8cm of nw2] (x) {$do(R=r)$};

\node [nnvsm, draw, below =1.8 cm of nw3] (nw4) {$M_{W_4}$};
\node [nnvsm, draw, right =1.2cm of nw2] (ny) {$M_{X}$};
\node [nnvsm, draw, below =1.8 cm of ny] (nw1) {$M_{W_1}$};

 \draw [-> ,thick] (ny) -- +(0.9cm, 0) node[right]{$X$};
 \draw[ thick] (x) to  (nw2); 
 \draw[ thick] (nw3) to  node[] {$W_3$}  (nw2);
 \draw[ thick] (nw3) to  node[] {$W_3$}  (nw4);
 \draw[ thick] (nw2) to node[] {$W_2$} (ny);
 \draw[ thick] (nw2) to node[pos=0.7, sloped] {$W_2$} (nw1);
 \draw[ thick] (nw4) to node[] {$W_4$} (nw1);
 \draw[ thick] (nw1) to node[swap] {$W_1$}  (ny);
 \draw[ thick] (nw4) to  node[pos=0.3, sloped] {$W_4$} (ny);
 \draw[ thick] (nw3) to  (nw1);
 \draw[thick] (nw3) edge[bend left=39] node[pos=0.7, sloped] {$W_3$} (ny);
\end{tikzpicture}
\caption{Merge and sample $P(y|\Do(x)).$
}
\label{fig:appex-merge-net}
\end{subfigure}
    \caption{
    $\Do(R=r)$ removes the $R\leftrightarrow X$ bias and makes prediction of $X$ domain invariant.
\idgen factorizes $P(\vb|\Do(r))$ into four factors and trains conditional models ($\{M_{V_i}\}_{i}$) for each (blue shades).  The intervened value $R=r$ is propagated through the merged network to generate all other variables.
}
    \label{fig:double_napkin}
\end{figure}
Here we provide the full form of the causal query $P(\vb|\Do(r))$ for the graph in Figure~\ref{fig:double_napkin}. Each term is expressed in terms of observational distribution. Note that it is nonintuitive and nontrivial for existing algorithms to sample from this complicated expression. Our algorithm \idgen solves this problem by training a specific set of models in a recursive manner and building a sampling network combining them. Finally, \idgen uses this network to sample from the following interventional distribution.
\begin{equation}
\label{appex:eq-complex}
\begin{split}
    P(\vb|\Do(r))&=P(w_2, w_3| \Do(r))
P(w_4|\Do(w_3))
P(w_1|\Do(\vb\setminus\{w_1,x\}))
P(x|\Do(\vb\setminus\{x\}))\\
&= P(w_2, w_3|r)* P(w_4|w_3)* {\textstyle\sum}_{r'} P(w_1|r',w_2,w_3,w_4)P(r'|w_3,w_4)\\
&*\frac{\sum_{r'}P(w_1,x|r',w_2,w_3,w_4)P(r'|w_3,w_4)}
{\sum_{r'}P(w_1|r',w_2,w_3,w_4)P(r'|w_3,w_4)}
\end{split}
\end{equation}

\subsection{Cyclic dependency dealt with \idgen}
\begin{figure}[H]
    \centering
\begin{subfigure}[c]{0.3\linewidth}
  \centering
\begin{tikzpicture}[scale=0.7, transform shape]
    \tikzstyle{every node}=[]
    \node   [ text=red] (x) {$X$};
    \node [right =0.5cm of x] (w1) {$W_1$};
    \node [ below =0.9cm of x] (w2) {$W_2$};
    \node [ below =0.9cm of w1] (y) {$Y$};
    \draw[ thick] (x) to  (w1); 
    \draw[ thick] (w1) to  (w2); 
    \draw[ thick] (w2) to  (y); 
    \path[bidirected] (x) edge[bend left=25] (w2);
    \path[bidirected] (w1) edge[bend right=25] (y);
\end{tikzpicture}
\end{subfigure}
\begin{subfigure}[c]{0.3\linewidth}
    \begin{tikzpicture}[scale=0.65, transform shape]
 \node [nnvsm, draw] (nw2) {$M_{W_2}$};
\node [nnvsm, draw, above =0.5cm of nw2] (nXp) {$M_{X'}$};
\node [nnvsm, draw, left =1.1cm of nw2] (ny) {$M_{Y}$};
\node [nnvsm, draw, above =0.5cm of ny] (nw1) {$M_{W_1}$};
\node [left=0.5cm of nw1] (x) {$X$};

 \draw[ thick] (x) to  (nw1); 
 \draw[ thick] (x) to  (ny); 
 \draw[ thick] (nw1) to node[] {}  (ny);
 \draw[ thick] (nXp) to  (nw2); 

 \draw [<- ,thick] (nw2) -- +(-0.8cm, 1) node[left]{$W_1$};
\draw [<- ,thick] (ny) -- +(0.8cm, 0) node[right]{$W_2$};
\draw [-> ,thick] (ny) -- +(-1cm, 0) node[left]{$Y$};

   \begin{scope}[on background layer]
  \node[outer,fit=(nw1) (ny)] {};
  \node[outer,fit=(nXp)(nw2)] {};
  \end{scope}
\end{tikzpicture}
\end{subfigure}
\begin{subfigure}[c]{0.3\linewidth}
    \begin{tikzpicture}[scale=0.65, transform shape]
 \node [nnvsm, draw] (nw2) {$M_{W_2}$};
\node [nnvsm, draw, above =0.5cm of nw2] (nXp) {$M_{X'}$};
\node [nnvsm, draw, left =0.5cm of nw2] (ny) {$M_{Y}$};
\node [nnvsm, draw, above =0.5cm of ny] (nw1) {$M_{W_1}$};
\node [left=0.5cm of nw1] (x) {$do(x)$};

 \draw[ thick] (x) to  (nw1); 
 \draw[ thick] (x) to  (ny); 
 \draw[ thick] (nw1) to node[] {}  (ny);
 \draw[ thick] (nXp) to  (nw2); 
 \draw[ thick] (nw2) to  (ny); 
 \draw[ thick] (nw1) to  (nw2); 
\draw [-> ,thick] (ny) -- +(-1cm, 0) node[left]{$Y$};
%
\end{tikzpicture}
\end{subfigure}
	\caption{ 
 $\leftrightarrow:$Confounding.
 $M_{W_1}$, $M_Y$ sample from $P_{x,w_2}(w_1,y) $ $= P(w_1|x)$ $P(y|x,w_1,w_2)$. $M_{X'}$, $M_{W_2}$ sample from $P_{x, w_1}(w_2)$ $= \sum_{x'} P(x')$ $P(w_2|x',w_1)$. Joint network samples from $P_x(y)$. }
 \label{fig:cyclic}
\end{figure}
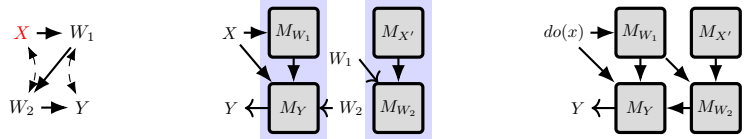

\begin{example}[Cyclic dependency]
\label{ex:cyclic}
For the graph in Fig.~\ref{fig:cyclic}, $P_{x}(y)$ does not fit any familiar criterion such as backdoor or front door.
ID algorithm factorizes $P_{x}(y)$ into c-factors at its step 4, as $ P_{x}(y)= \sum_{w_1,w_2} P_{x,w_1}(w_2)  P_{x,w_2}(w_1,y)$ 
and estimates each of them recursively. 
Suppose we naively design an algorithm that follow ID 
step-by-step and trains a generative model and sample from every factor it encounters.
This algorithm would not know which of these factors to learn and sample first with the generative models. To sample $W_2 \sim P_{x,w_1}(w_2)$ we need $W_1$ as input, which has to be sampled from $P_{x,w_2}(w_1,y)$. But to sample $\{W_1, Y\} \sim P_{x,w_2}(w_1,y)$, we need $W_2$ as input which has to be sampled from $P_{x,w_1}(w_2)$. Therefore, no order helps to sample all $W_1, W_2,Y$ consistently. 

 \idgen follows ID's recursive trace to reach at the factorization: $ P_{x}(y)= \sum_{w_1,w_2} P_{x,w_1}(w_2)  P_{x,w_2}(w_1,y)$  but solves the deadlock issue by avoiding direct sampling from them. 
		Rather, it first trains the required models for c-components $\{W_1,Y\},\{W_2\}$ individually, considering all possible input values, and then connects them to perform sampling.
Thus, \idgen first obtains $P_{x,w_2}(w_1,y) = P(w_1|x)P(y|x,w_1,w_2)$, and trains a conditional model for each of the conditional distributions (Fig.~\ref{fig:cyclic} left-blue). Similarly, for $P_{x, w_1}(w_2) = \sum_{x'}P(x')P(w_2|x', w_1)$, we train models for each conditional distribution (right-blue). 
Note that $X$ and $X'$ are sampled from the same $P(x)$ but considered as different variables. Thus, we train $M_{X'}$ to sample from $P(x')$ which is different from intervention $\Do(X)$.
  Finally, we merge these networks trained for each c-component to build a single sampling network and can perform ancestral sampling on this network to sample from $P_{x}(y), \forall x$.
\end{example}


\subsection{Related works}
In Figure~\ref{fig:related-work-comparison}, we show a visual comparison among different implementations of existing works.
Researchers have proposed deep neural networks for causal inference problems~\cite{shalit2017estimating,louizos2017causal} and neural causal methods~\citet{xia2021causal,kocaoglu2018causalgan} to deal with high-dimensional data. However, the proposed approaches are not applicable to any general query due to the restrictions
they employ or their algorithmic design.
For example, some approaches perform well only for low-dimensional discrete and continuous data (group 2 in Figure~\ref{fig:related-work-comparison}:~\citep{xia2021causal,balazadeh2022partial}) while some propose modular training to partially deal with high-dimensional variables (group 3:~\citep{rahman2024modular}).
Other works with better generative performance depend on a strong structural assumption: no latent variable in the causal graph (group 4:~\citet{kocaoglu2018causalgan, pawlowski2020deep, chao2023interventional}). 

\begin{figure}[t!]
		\centering
		\includegraphics[width=1\linewidth]{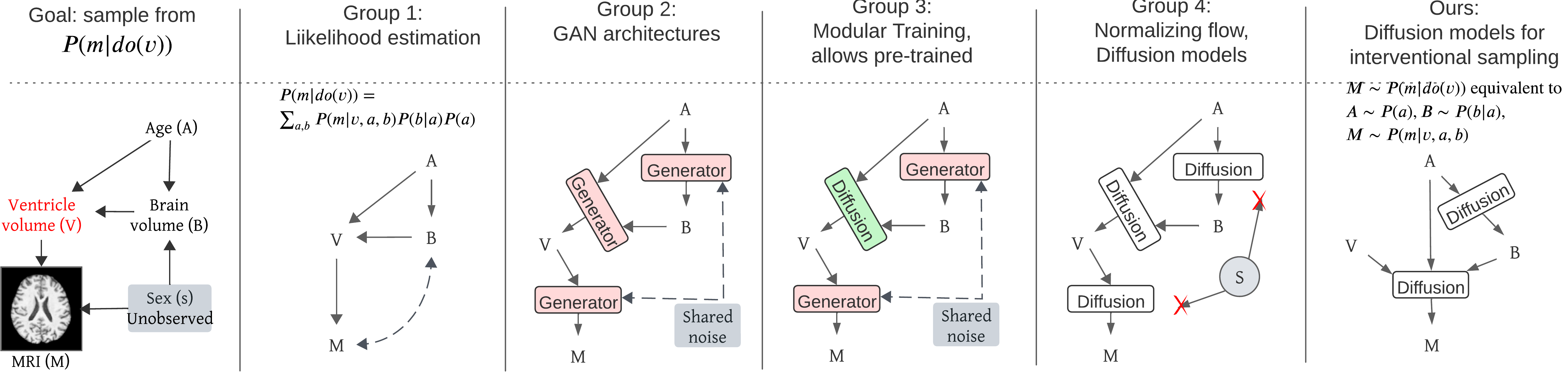}
		\caption{
			Suppose we aim to sample from $P(m|\Do(v))$, with $M$(MRI) as high-dimensional.
			Group 1 includes algorithms (ex: ID) that depend on likelihood estimation
   such as $P(m|v,a,b)$.
			Group 2 algorithms with
GAN architectures have issues with GAN convergence  while Group 3 improves convergence with modularization but both struggle to match the joint distribution.
Group 4 utilizes normalizing flow, diffusion models, etc but cannot deal with confounders.
Groups 2-4 only do unconditional interventions or costly rejection sampling.
Finally, our method can employ classifier-free diffusion models
to sample from $P(m|\Do(v))$ or conditional $P(m|a,\Do(v))$.
The causal graph is adapted from \citet{ribeiro2023high}.
   }
		\label{fig:related-work-comparison}
	\end{figure}

\subsection{\idgen recursion tree example}
\label{appex:recurse-tree}
In Figure~\ref{fig:recurse-tree}, we show a possible recursive route of $\idgen$ for a causal query $P(\y|\Do(\x))$. At any recursion level, we check condition for 7 steps (S1-S7) and enter into one step based on the satisfied conditions. The red edges indicate the top-down phase, and the green edges indicate the bottom-up phase. The rectangular gray boxes (\condgm, \idmerge, \stepsev) represent the functions that allow \idgen to sample from the high-dimensional interventional distribution. $P_{\vb\setminus s_1}(s_1) P_{\vb\setminus s_2}(s_2)$ are obtained after performing c-factorization at step 4. We also indicate the recursive route with increasing indices. We hope that this figure helps the readers understand the recursion route in a better way.

\begin{figure}[t!]
    \centering
    \includegraphics[width=0.8\linewidth]{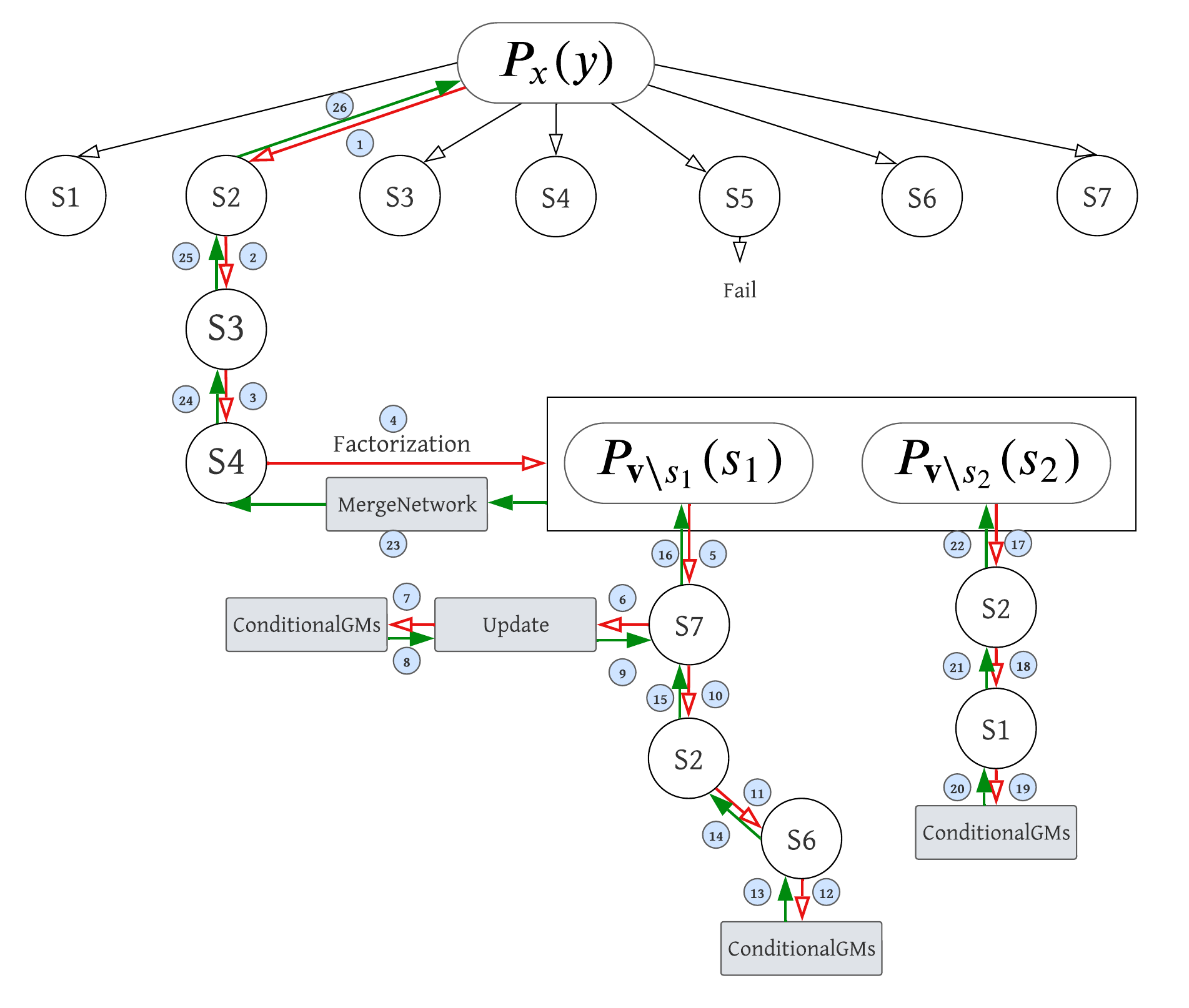}
    \caption{\idgen Recursion Tree Example}
    \label{fig:recurse-tree}
\end{figure}

\subsection{\idcdag: conditional interventional sampling}
\label{appex:idcdag}
Existing works that use causal graph-based feedforward models~\cite{xia2023neural, rahman2024modular}  
need to update the posterior of $Z$'s upstream variables which is not efficiently feasible in their architecture. 
We, given a causal query $P_{x}({y}|{z})$, sample from this conditional interventional query by calling Algorithm~\ref{alg:IDC-DAG}: \idcdag. 
This function finds the maximal set ${\alpha }\subset {Z}$ such that we can apply do-calculus rule-2 and move ${\alpha}$ from conditioning set ${Z}$ and add it to intervention set ${X}$. Precisely, $P_{{x}}({y}|{z})= P_{{x\cup \alpha}}({y}|{z\setminus \alpha}) =  \frac{P_{{x\cup \alpha}}({y},{z\setminus \alpha})}{P_{{x\cup \alpha}}({z\setminus \alpha})} $. 
Next, Algorithm~\ref{alg:id-dag}: \idgen(.) is called to obtain the sampling network that can sample from the interventional joint distribution $P_{{x\cup \alpha}}({y},{z\setminus \alpha})$. 
We use the sampling network to generate samples $\data[\X\cup \alpha,  \Y, \mbf{Z}\setminus \alpha]$.
We train a new conditional model $M_{{Y}}$ on $\data$ that takes ${Z\setminus \alpha}$, $X\cup \alpha$ as input and samples $Y\sim P_{{x\cup \alpha}}({y},{z\setminus \alpha})$ i.e,  $Y\sim P_{x}(y|z)$.

\subsection{\idgen computational complexity}
\label{appex:complexity}

\citet{bhattacharyya2022efficient} in their work discusses the sample complexity and time complexity of \citet{shpitser2008complete}'s ID algorithm. Since our algorithm \idgen is build upon the recursive structure of ID, we follow their approach to determine the computational complexity of our algorithm.

Suppose, in step 4 \idgen factorizes $P_{\x}(\y)$ as $${\textstyle\sum}_{\vb\setminus(\y\cup \x)} 
P_{\vb\setminus s_1}(s_1)   \hdots
P_{\vb\setminus s_l}(s_l)  
\hdots
P_{\vb\setminus s_n}(s_n)$$
where each $S_i$ is the c-factor of each c-component $C_i$. Let the number of variables located in each c-component be $k$.
Suppose, the intervention $\X$ can be partitioned into multiple c-components and the c-components can be arranged in a way such that $\X= \cup^{l}_{i=1} \X_i$ and $\X_i \subseteq C_i$, i.e., all interventions are located in the first $l$ c-components. Following~\cite{bhattacharyya2022efficient}, we define two sets $\mbf{C}_{>l}= \cup_{i>l} C_i$ and $\mbf{C}_{\leq l}= \cup_{i\leq l} C_i$.

For $|\mbf{C}_{>l}|= n-l$, c-components that do not contain any $\X$, \idgen will either go to  step1 or step 6. In the base cases, they will train $k$ models for each c-components. Thus, the number of models trained for these c-components will be $\mathcal{O}((n-l)k)$.

For the first $|\mbf{C}_{\leq l}|= l$, c-components, we assume that each c-component contains intervention subset $\X_i$ of size $\Xc$. Then the size of the remaining c-component is $k-\Xc$. For each $P_{\vb\setminus s_i}(s_i)$, \idgen will eventually reach the base cases: Step 1 or 6 and will train $k-\Xc$ models. 
Now, consider recursive steps. We assume that, in the worst case, $\X_i$ will reduce one at a time, by visiting step 7 and step 2 alternately. Thus, \idgen will visit step 7, $\Oc((\Xc/2-1))$ times (except base case).
Whenever \idgen visits step 7, it will apply a subset of the intervention $\X_i$ and atmost  $k$ models will be trained to sample the updated training dataset.
 Thus, till the base case, the total number of models trained in step 7s will be $\Oc(k(\Xc/2-1))$.
If we consider the whole recursive route for each c-component $C_i\in \mbf{C}_{\leq l}$,
 we will train $\Oc(k(\Xc/2-1)) + \Oc((k-\Xc)) = \Oc(k\Xc/2)$ number of models. For all c-components in $\mbf{C}_{\leq l}$, we will train $\Oc(lk\Xc/2)$ number of models.

 Finally, if we consider all c-components, we will train in total $\Oc((n-l)k + lk\Xc/2)$ models.
 If the cost of training a diffusion model to learn a specific conditional distribution is $\Oc(T)$, then the total training cost is 
 $\Oc(T(n-l)k + Tlk\Xc/2).$
Note that, when there exists no confounders, we have $k=1$ and the trining cost is $\Oc(T(n-l) + Tl)= \Oc(Tn)$. Here $|An(\Y)_G| = n$. 
Existing algorithms train $|\V|$ number of models for a causal graph of $\V$ variables. Thus, their training cost is $\Oc(T|\V|)$.

\subsection{ID-GEN for Non-Markovian Causal Models}
\label{sec:non-markov}
We assumed the underlying true causal model to be semi-Markovian in our paper. This is not a limitation as \citep{tian2002identification} showed that causal models with arbitrary latent variables can always be converted into semi-Markovian
causal models i.e., models in which latent variables have
no parent and only two children, while preserving the same independence relations between the observed variables. 

\section{Theoretical analysis}
\label{appex:theoretical-proofs}
Here we provide formal proofs for all theoretical claims made in the main paper, along with accompanying definitions and lemmas.

\begin{definition}
	A \textbf{conditional generative model} for a random variable $X\in \V$ relative to the distribution $P(\vb)$ is a function $M_X:\mathcal{P}a_X\rightarrow {|X|}$ such that $M_X(pa_X)\sim P(x|pa_X), \forall pa\in \mathcal{P}a$, where $\mathcal{P}a$ is a subset of observed variables in $V$. 
\end{definition}

\begin{definition}[Recursive call]
	For a function $f()$, when a subprocedure with the same name is called within $f()$ itself, we define it as a recursive call. At Steps 2, 3, 4, 7 of Algorithm~\ref{alg:id-dag}:\idgen, the sub-procdedure $\idgen(.)$ with updated parameters are recursive calls, but the sub-procedure \condgm, \stepsev are not recursive calls.
\end{definition}
Here, we restate the assumptions that are mentioned in the main paper.
\begin{assumption}
	\label{assum:semi-mark}
	The causal model is semi-Markovian.
\end{assumption}

\begin{assumption}
	\label{assum:admg}
	We have access to the true acyclic-directed mixed graph (ADMG) induced by the causal model.
\end{assumption}

\begin{assumption}
	\label{assum:correct_cond}
	Each conditional generative model trained by \idgen correctly samples from the corresponding conditional distribution.
\end{assumption}

\begin{assumption}
	\label{assum:strict-pos}
	The observational joint distribution is strictly positive
 and Markov relative to the causal graph.
\end{assumption}

\begin{lemma}[c-component factorization~\citep{tian2002general}]
	\label{bglemma:c-factorization}
	Let $\mathcal{M}$ be an SCM that entails
	the causal graph $G$ and $P_{x}(y)$ be the interventional distribution for arbitrary variables $X$ and $Y$.
	Let $C(G\setminus X) = \{S_1, \hdots , S_n\}$. Then we have 
	$ P_{x}(y) = \sum\limits_{v\setminus(y\cup x)} 
	P_{v\setminus s_1}(s_1)  P_{v\setminus s_2}(s_2)  \hdots P_{v\setminus s_n}(s_n).$
\end{lemma}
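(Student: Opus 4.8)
The plan is to derive the claim from Tian's structural decomposition of a joint distribution into c-factors, applied to the post-intervention model. First I would fix the Q-factor notation: for any $T\subseteq\V$ set $Q[T]\coloneqq P_{v\setminus t}(t)$, so the statement reads $P_x(y)=\sum_{v\setminus(y\cup x)}\prod_{i=1}^{n}Q[S_i]$. Since $P_x(y)$ is obtained by marginalizing $P_x(v\setminus x)$ over the variables $(v\setminus x)\setminus y$, and $v\setminus(y\cup x)=(v\setminus x)\setminus y$, it suffices to prove the un-marginalized identity $P_x(v\setminus x)=\prod_{i=1}^{n}Q[S_i]$ and then sum out.

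The key structural observation is that $\Do(x)$ produces a model whose restriction to the non-intervened variables $\V\setminus X$ is again an SCM: each $f_{V}$ with $V\in X$ becomes a constant, so $X$ enters the remaining structural equations only as a fixed input, and any exogenous noise or latent confounder that previously propagated through an intervened variable no longer does. Hence the ADMG induced on $\V\setminus X$ is exactly $G\setminus X$ — the directed edges among $\V\setminus X$ plus the bidirected edges between pairs of non-intervened variables — whose c-components are by hypothesis $S_1,\dots,S_n$. I would then invoke Tian's c-component factorization of the observational distribution of an SCM (every joint distribution factors as the product of the Q-factors of its c-components) for this restricted model, obtaining $P_x(v\setminus x)=\prod_i Q_x[S_i]$ with $Q_x[S_i]=P_{x,(v\setminus x)\setminus s_i}(s_i)$. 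Since $x\cup\bigl((v\setminus x)\setminus s_i\bigr)=v\setminus s_i$, each $Q_x[S_i]$ coincides with $P_{v\setminus s_i}(s_i)=Q[S_i]$, closing the reduction.

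To keep the argument self-contained I would also sketch the underlying fact $P(v)=\prod_j Q[D_j]$ for $C(G)=\{D_1,\dots,D_m\}$. Fixing a topological order $\pi$ of $G$, start from $P(v)=\prod_i P(v_i\mid v_{\pi}^{(i-1)})$, group the factors by the c-component of $V_i$, and reduce to the per-component identity $Q[D_j]=\prod_{\{i\,:\,V_i\in D_j\}}P(v_i\mid v_{\pi}^{(i-1)})$. For this I would partition the exogenous variables, assigning $N_i$ to the component of $V_i$ and each latent confounder to the unique component containing both of its children — this is where semi-Markovianity (Assumption~\ref{assum:semi-mark}) and the definition of a c-component enter, since a bidirected edge forces its endpoints into the same component — so the per-component exogenous bundles are mutually independent under the product measure $\mathcal{P}$. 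Analyzing the nested intervention $\Do(\V\setminus D_j)$ then shows the variables of $D_j$ are a deterministic function of their own bundle and the held-fixed external parents, and an induction along $\pi$ matches this distribution with the displayed product of observational conditionals; strict positivity (Assumption~\ref{assum:strict-pos}) guarantees every conditional is well defined.

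The main obstacle is precisely that last per-component identity $Q[D_j]=\prod_{\{i\,:\,V_i\in D_j\}}P(v_i\mid v_{\pi}^{(i-1)})$: one must argue that marginalizing $\V\setminus D_j$ out of the truncated factorization can be re-expressed using only observational conditionals on $v_{\pi}^{(i-1)}$ rather than on the true confounder set, while carefully tracking which exogenous variables influence which observed variables along $\pi$. All of the genuine content of Tian's theorem is concentrated there, whereas the passage from it to $P_x(y)$ above — severing $X$, identifying the c-components of $G\setminus X$, and absorbing $\Do(x)$ into the larger interventions $\Do(v\setminus s_i)$ — is essentially bookkeeping.
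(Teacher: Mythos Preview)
The paper does not actually prove this lemma: it is stated as a background result cited from \citet{tian2002general} and used without proof in the correctness arguments for \idgen. Your proposal is a reasonable reconstruction of Tian's original argument --- reducing to $P_x(v\setminus x)=\prod_i Q[S_i]$ by viewing the post-intervention model as an SCM on $\V\setminus X$ with ADMG $G\setminus X$, then invoking the per-component identity $Q[D_j]=\prod_{\{i:V_i\in D_j\}}P(v_i\mid v_\pi^{(i-1)})$ --- and correctly identifies that the substantive content lies in that last identity, but there is nothing in the present paper to compare it against.
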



\begin{definition}
	We say that a sampling network $\mathcal{H}$ is valid for an  interventional distribution $P_{\x}(\y)$ if the following conditions hold:
	\begin{itemize}
		\item Every node $V \in \h$ has an associated conditional generative model $M_V(.)$ except the variables in $\mbf{X}$.
		\item If the values $\X=\x$ are specified in $\h$, 
		then the samples of $\Y$ obtained after dropping $\h\setminus \{\X \cup \Y\}$ from all generated samples
		are equivalent to samples from $P_\x(\Y)$.
	\end{itemize}
\end{definition}


%

\begin{proposition}
The  \condgm($S', \mbf{X}_Z,
				G, 
				\data,
				{\Xp},
				\Gp$)   called inside \stepsev, returns $\data[\X_Z, S']\sim P_{\X_Z}(S')$.
\end{proposition}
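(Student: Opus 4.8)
The plan is to unpack the definition of \stepsev\ and \condgm, and then argue that the conditional generative models trained inside \condgm, when connected into the sampling network $\h$ and sampled ancestrally with the values $\X_Z$ (and the previously fixed $\Xp$) clamped, produce exactly the c-factor $P_{\X_Z}(S')$. First I would recall that \stepsev\ is invoked precisely in Step 7, where $S = C(G\setminus\X)$ is a strict subset of a c-component $S' \in C(G)$, and it sets $\X_Z = \X \setminus S'$. The key structural fact, inherited from the correctness of Step 7 of the ID algorithm \citep{shpitser2008complete, tian2002studies}, is that $\X_Z$ has \emph{no} bidirected edge into $S'$ in $G$, so $S'$ is a union of c-components of $G\setminus\X_Z$ and the c-factor of $S'$ admits the Tian factorization
\[
P_{\X_Z}(S') \;=\; \prod_{\{i\,\mid\, V_i \in S'\}} P\bigl(v_i \mid V_\pi^{(i-1)}\cap S',\; v_\pi^{(i-1)}\setminus S'\bigr),
\]
where $\pi$ is a topological order of $G$. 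This is the expression that Step 7 of ID uses, and it is exactly what \condgm\ is asked to realize.

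Next I would invoke the specification of \condgm\ (Algorithm~\ref{alg:step1}): given target set $S'$, intervention set $\X_Z$, graph $G$ and $\Gp$, it adds empty nodes for every variable in $\X_Z \cup \Xp$, and then for every $V_i \in S'$ in the topological order $\pi_{\Gp}$ it trains a model $M_{V_i}$ with $M_{V_i}(V_\pi^{(i-1)}) \sim P(v_i \mid v_\pi^{(i-1)})$, wiring $V_j \to V_i$ for all $V_j \in V_\pi^{(i-1)}$. By Assumption~\ref{assum:correct_cond}, each $M_{V_i}$ samples correctly from its conditional. Because $\Gp$ is the graph with $\Xp$-incoming edges removed and $\X_Z$ has no bidirected edge into $S'$, the conditioning set $V_\pi^{(i-1)}$ restricted to $S'$ together with the clamped $\X_Z$ values coincides with $\{V_\pi^{(i-1)}\cap S', v_\pi^{(i-1)}\setminus S'\}$ in the Tian formula; the remaining variables of $V\setminus S'$ that appear in $v_\pi^{(i-1)}$ are exactly those available as inputs after intervening on $\X_Z$. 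Then I would apply the ancestral-sampling fact quoted in the Background (and formalized in Lemma~\ref{lemma:joint-sample-from-net}): a DAG of correct conditional models sampled in topological order produces a joint sample from the product of those conditionals. Clamping $\X_Z = \x_Z$ throughout this pass yields a sample whose law is $\prod_{V_i\in S'} P(v_i \mid \ldots) = P_{\X_Z}(S')$, and line~\ref{line:new-data} of \stepsev\ then draws $\data'$ from this network and keeps the coordinates $[\X_Z, S']$, giving $\data[\X_Z, S'] \sim P_{\X_Z}(S')$ as claimed.

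The main obstacle I anticipate is the bookkeeping around the two graphs $G$ and $\Gp$ and the extra parameter $\Xp$: one has to be careful that the inputs fed to $M_{V_i}$ inside \condgm\ are indeed $V_\pi^{(i-1)}$ with respect to $\pi_{\Gp}$ (so that $\Xp$ and $\X_Z$ are supplied as inputs, not marginalized), and that this is consistent with the Tian c-factorization of $P_{\X_Z}(S')$ in the \emph{current} graph rather than the original one. In particular I would need the invariant (maintained across recursion levels and provable by induction, but which can be cited from the soundness argument for Step 7) that at the point \stepsev\ is called, $\Xp$ are already correctly marginalized/fixed and the graph $G$ faithfully represents the post-intervention model; granting that invariant, the proposition reduces to matching \condgm's product to the Tian formula and invoking Assumption~\ref{assum:correct_cond} together with Lemma~\ref{lemma:joint-sample-from-net}. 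A secondary subtlety is that $S'$ need not be a single c-component of $G\setminus\X_Z$, so one should note that the Tian factorization over the topological order still holds when $S'$ is a union of such c-components, which is standard.
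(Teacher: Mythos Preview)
Your argument is correct in substance but considerably more elaborate than the paper's, and it also contains a small overcomplication at the end.

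The paper's proof is essentially a two-line reduction: since $S' \in C(G)$ is a single c-component and $\X_Z = \X \setminus S'$ lies entirely outside $S'$, the pair $(\Y,\X) = (S',\X_Z)$ satisfies precisely the entry condition of Step~6 of \idgen. Hence calling \condgm($S',\X_Z,G,\data,\Xp,\Gp$) is exactly what a recursive call \idgen($S',\X_Z,G,\data,\Xp,\Gp$) would do anyway (it would immediately land in Step~6 and invoke \condgm\ with the same arguments). Correctness of the returned samples then follows from the Step~6 correctness argument (Lemma~\ref{lem:step6}), which is proven separately. The proposition is really just observing that the shortcut of calling \condgm\ directly is equivalent to the recursive call.

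What you do instead is unfold the Tian c-factorization for $P_{\X_Z}(S')$ and match it term-by-term to the product that \condgm\ realizes via ancestral sampling, invoking Assumption~\ref{assum:correct_cond} and Lemma~\ref{lemma:joint-sample-from-net}. This is a valid self-contained argument, but it essentially re-derives the content of Lemma~\ref{lem:step6} inline rather than reducing to it. Your ``main obstacle'' paragraph about the $G/\Gp/\Xp$ bookkeeping is exactly the work that Lemma~\ref{lem:step6} already packages.

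One correction: your ``secondary subtlety'' is not a subtlety at all. By the Step~7 condition we have $S' \in C(G)$, so $S'$ \emph{is} a single c-component of $G$; and since $\X_Z \cap S' = \emptyset$, removing $\X_Z$ leaves all vertices and bidirected edges of $S'$ intact, so $S'$ remains a single c-component of $G\setminus\X_Z$. You do not need the ``union of c-components'' generality.
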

\begin{proof}
Suppose that our goal is to generate samples from $P_{\X_Z}(\Y= S')$.
$S'$ is a single c-component and the intervention set $\X=\X_Z$ located outside of $S'$. This is the entering condition for step 6 of the \idgen algorithm.
Thus, we can directly apply it as Algorithm~\ref{alg:step1}: \condgm, instead of another recursive \idgen($\Y=S', \X=\X_Z, G, \data, \Xp, \Gp$) call.
\end{proof}

\begin{proposition}
	\label{prop:step-decide}
	At any recursion level of Algorithm~\ref{alg:id-dag}: \idgen$(\mathbf{Y}, \mathbf{X},  G, \data, {\Xp}, \Gp )$ and Algorithm~\ref{alg:ID}: ID($\mathbf{Y}, \mathbf{X}, P, G$) , the execution step is determined by the values of the set of observed variables $\Y$,  the set of intervened variables $\X$ and the causal graph $G$, at that recursion level.
\end{proposition}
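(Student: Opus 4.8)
The plan is to prove the claim by direct inspection of the control flow of both algorithms, showing that each branching guard is a deterministic function of $(\Y,\X,G)$ alone and never of the auxiliary arguments $\data,\Xp,\Gp$ (for \idgen) nor of $P$ (for ID). First I would list, in the fixed order in which the code of Algorithm~\ref{alg:id-dag} tests them, the guards of the seven steps: Step~1 fires iff $\X=\emptyset$; Step~2 fires iff $\mathbf{V}\setminus An(\Y)_G\neq\emptyset$; Step~3 fires iff $\mathbf{W}:=(\mathbf{V}\setminus\X)\setminus An(\Y)_{G_{\overline{\X}}}\neq\emptyset$; Step~4 fires iff $|C(G\setminus\X)|>1$; and, in the remaining case $C(G\setminus\X)=\{S\}$, Step~5 fires iff $C(G)=\{G\}$, else Step~6 fires iff $S\in C(G)$, else Step~7 fires iff there exists $S'$ with $S\subset S'\in C(G)$. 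The key observation is that every object occurring in these conditions is built purely from $(\Y,\X,G)$: the vertex set $\mathbf{V}$ is the vertex set of $G$; the ancestor sets $An(\cdot)_{(\cdot)}$, the edge-subgraph $G_{\overline{\X}}$, the vertex-induced subgraphs, and the c-component decompositions $C(G\setminus\X)$ and $C(G)$ are all defined graph-theoretically in the Background and reference nothing else. Since the procedure is deterministic and the guards are evaluated in a fixed sequence, the index of the step that executes is therefore a well-defined function of $(\Y,\X,G)$.

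Next I would dispatch Algorithm~\ref{alg:ID} (the original ID) in essentially one line: its guards are syntactically the same conditions, and the probability argument $P$ appears only inside the base-case outputs (Steps~1 and~6) and the Step~7 reweighting, never inside a guard. Hence the same inspection yields the claim for ID as well, and in particular the step executed by ID and by \idgen coincide whenever the two procedures are invoked with the same $(\Y,\X,G)$ — a fact that is exactly what later couplings of the two recursion traces will need.

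The only point requiring a bit of care — and this is the \emph{mild} main obstacle — is to make the ``depends only on $(\Y,\X,G)$'' statements fully precise for the derived quantities: one must appeal to the Background definitions of ancestor sets, of the interventional subgraph $G_{\overline{\X}}$ and induced subgraphs $G_{S'}$, and of c-components, to certify that running either procedure with identical $(\Y,\X,G)$ but different $(\data,\Xp,\Gp)$ (resp.\ different $P$) produces identical guard evaluations, so that control enters the identical step. Once this is verified for a single invocation, the conclusion holds at every recursion level, since the argument is uniform across levels; this completes the proof.
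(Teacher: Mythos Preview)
Your proposal is correct and takes essentially the same approach as the paper: both argue by direct inspection that every branching guard in the seven steps of \idgen and ID is a purely graph-theoretic condition on $(\Y,\X,G)$, with the auxiliary arguments playing no role in step selection. Your version is considerably more detailed than the paper's two-sentence proof, but the underlying idea is identical.
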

\begin{proof}
	To enter in steps [1-7] of \idgen and steps [1-7] of ID, specific graphical conditions are checked, which depend only on the values of the parameters $\Y,\X$ and $G$. If that graphical condition is satisfied, both algorithms enter in their corresponding steps. Thus, the execution step of each algorithm at any recursive level is determined by only $\Y,\X$ and $G$.
\end{proof}

\begin{lemma}
	\label{lem:mirror}
	\textbf{(Recursive trace)}
		%
		At any recursion level $R$,
		recursive call
		\idgen$(\mathbf{Y}, \mathbf{X},  \data, G, {\Xp}, \Gp )$ enters Step $i$ if and only if 
		recursive call ID$(\Y,\X, P, G)$ enters Step $i$, for any $i\in [7]$
		%
			%
			%
	\end{lemma}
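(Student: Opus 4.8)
I would prove both directions at once by induction on recursion depth, with Proposition~\ref{prop:step-decide} as the workhorse: since the executed step of either algorithm at a given call depends only on the triple $(\Y,\X,G)$, it suffices to exhibit an isomorphism between the recursion trees of \idgen and of ID under which corresponding nodes carry identical triples $(\Y,\X,G)$. The correspondence is the obvious one. The root \idgen$(\Y,\X,G,\data,\emptyset,G)$ is matched with the root ID$(\Y,\X,P,G)$. If two matched calls both execute step $i$, then for $i\in\{2,3,7\}$ the unique \idgen recursive child is matched with the unique ID recursive child, and for $i=4$ the \idgen child handling c-component $S_j$ of $C(G\setminus\X)$ is matched with the ID child handling $S_j$ (the partition $C(G\setminus\X)$ is the same for both, being a function of $(G,\X)$ alone). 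For $i\in\{1,5,6\}$ the node is a leaf: neither algorithm spawns a further \idgen/ID recursive call (in \idgen, steps $1$ and $6$ invoke \condgm, which by the definition of ``recursive call'' in the excerpt is not one), so there is nothing to match below. Because this matching is a bijection on nodes, ``matched \idgen node enters step $i$'' and ``matched ID node enters step $i$'' are interderivable, which is exactly the claimed iff.

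For the base case (depth $0$), the two roots carry the same $(\Y,\X,G)$ by construction, so by Proposition~\ref{prop:step-decide} they execute the same step, and the lemma holds at the root. For the inductive step, suppose matched calls at depth $d$ carry equal triples $(\Y,\X,G)$ and hence, by Proposition~\ref{prop:step-decide}, enter the same step $i$; I must check that their children again carry equal triples. This is a transcription check against Algorithm~\ref{alg:id-dag} and Algorithm~\ref{alg:ID}. For $i=2$, both algorithms pass $(\Y,\ \X\cap An(\Y)_G,\ G_{An(\Y)})$. For $i=3$, both pass $(\Y,\ \X\cup\mathbf{W},\ G)$ with the identical set $\mathbf{W}=(\V\setminus\X)\setminus An(\Y)_{G_{\overline{\X}}}$. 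For $i=4$, the $S_j$-child of each passes $(S_j,\ \V\setminus S_j,\ G)$. In every case \idgen applies to $(\Y,\X,G)$ exactly the same transformation that ID applies, so the inductive hypothesis extends to depth $d+1$ and Proposition~\ref{prop:step-decide} again forces the matched children into the same step.

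The single case requiring care is step $7$, because \idgen does not recurse directly but routes through \stepsev (Algorithm~\ref{alg:step7}), which additionally invokes \condgm to train models, resamples a dataset $\data'\sim P_{\X_Z}(\V)$, and updates the bookkeeping parameters $\Xp$ and $\Gp$. The point is that, by Proposition~\ref{prop:step-decide}, none of $\data'$, $\Xp$, $\Gp$ affects which step is taken next; what matters is only what \stepsev writes into the $(\Y,\X,G)$ slots of the next \idgen call, namely $(\Y,\ \X\cap S',\ G_{S'})$. This coincides with the triple ID forwards in its own step $7$, since $S'$ --- the unique c-component of $G$ with $C(G\setminus\X)\subsetneq S'$, unique because c-components partition the vertex set --- is a function of $(G,\X)$ alone, and ID's extra argument (its updated probability expression) plays exactly the inessential role that $\data',\Xp,\Gp$ play for \idgen. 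Hence the child triples match and the induction closes. I expect this step-$7$ alignment --- persuading the reader that \idgen's two graph slots $G$ and $\Gp$, together with its dataset slot, never feed back into the branching logic --- to be the only delicate point; the remaining steps are a routine line-by-line comparison with the ID algorithm.
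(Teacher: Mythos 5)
Your proposal is correct and follows essentially the same route as the paper: an induction on recursion depth driven by Proposition~\ref{prop:step-decide}, verifying step by step (including the step-7 pass-through of $(\Y,\ \X\cap S',\ G_{S'})$ and the per-c-component matching at step 4) that both algorithms transform the triple $(\Y,\X,G)$ identically, so the branching decisions coincide. Your recursion-tree-isomorphism framing is just a slightly more explicit packaging of the paper's inductive hypothesis; no substantive difference.
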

	\begin{proof}
		%
		%
		Suppose, both algorithms start with an input causal query $P_{\x}(\mbf{y})$ for a given graph $G$. For this query, 
		the parameters $\Y,\X, G$ of ID$(\Y,\X, P, G)$ and \idgen$(\Y,\X, \Xp', \mathcal{D}, G)$ represent the same objects: the set of observed variables $\Y$, the set of intervened variables $\X$ and the causal graph $G$.
		According to Proposition~\ref{prop:step-decide},
		Algorithm~\ref{alg:id-dag}: \idgen and Algorithm~\ref{alg:ID}: ID only check these 3 parameters to enter into any steps and decide the next recursive step. 
		Thus, we use proof by induction based on these 3 parameters to prove our statement.
		
		\textbf{Induction base case (recursion level $R=0$): }
		Both algorithms start with the same input causal query $P_{\x}(\mbf{y})$ in $G$, i.e., the same parameter set $\Y,\X, G$. We show that at recursion level $R=0$, \idgen enters step $i$ if and only if ID enters step $i$ for any $i\in [7]$.
		
		\textbf{Step 1:}
		Both ID and \idgen check if the intervention set $\X$ is empty and go to step 1. Thus, \idgen enters step 1 iff ID enters step 1.
		
		\textbf{Step 2:}
		If the condition for step 1 is not satisfied, then both ID and \idgen check if there exist any non-ancestor variables of $\Y$:  $\V \setminus An(\Y)_G$ in the graph to enter in their corresponding step 2. Thus, with the same $\Y$ and $G$, \idgen enters step 2 iff ID enters step 2.
		
		\textbf{Step 3:}
		If conditions for Steps [1-2] are not satisfied, both ID and \idgen check
		if  $\mbf{W}$ is an empty set for 
		$\mathbf{W} = (\mathbf{V} \setminus \mathbf{X}) \setminus An(\mathbf{Y})_{G_{\overline{\mathbf{X}}}}$ to enter in their corresponding step 3.
		Since both algorithms have the same $\Y, \X$ and $G$, \idgen enters step 3 iff ID enters step 3.
		
		\textbf{Step 4:}
		If conditions for Steps [1-3] are not satisfied,
		both ID and \idgen check if they can partition the variables set $C(G\setminus \X)$ into $k$ (multiple) c-components. Since these condition checks depend on $\X$ and $G$ and both have the same input $\X$ and $G$, \idgen enters step 4 iff ID enters step 4.
		
		\textbf{Step 5:}
		If conditions for Steps [1-4] are not satisfied, both algorithms check if $C(G\setminus \X)=\{S\}$ and $C(G) = \{G\}$ to enter their corresponding step 5. Since both have the same $\X$ and $G$, \idgen enters step 5 iff ID enters step 5.
		
		\textbf{Step 6:}
		If the conditions of Steps [1-5] are not satisfied, ID and \idgen check if $C(G\setminus \X)=\{S\}$ and $S \in C(G)$, to enter their corresponding step 6. Since both have the same $\X$ and $G$, \idgen enters step 6 iff ID enters step 6.
		
		\textbf{Step 7:}
		If conditions for Steps [1-6] are not satisfied, both ID and \idgen check if $C(G\setminus \X)=\{S\}$ and ($\exists S'$) such that $S \subset S' \in C(G)$ to enter their corresponding step 7. Since both have the same $\X$ and $G$, both will satisfy this condition and enter this step. Thus, \idgen enters step 7 iff ID enters step 7.

		\textbf{Induction Hypothesis:}
		We assume that for recursion levels $R=1,\hdots,r$, \idgen and ID follow the same steps at each recursion level and maintain the same values for the parameter set $\Y,\X$ and $G$.

		\textbf{Inductive steps:}
		Both algorithms have the same set of parameters $\Y,\X, G$ and they are at the same step $i$ at the current recursion level $R=r$.
		As inductive step, we show that with the same values of the parameter set $\Y,\X$ and $G$ at recursion level $R=r$, \idgen and ID will visit the same step at recursion level $R=r+1$.
		
		\textbf{Step 1}:
		Step 1 is a base case of both algorithms. After entering into step 1 with $\X=\emptyset$, ID estimates $\sum_{\mbf{v}\setminus y}P(\mbf{v})$ and ends the recursion. \idgen also ends the recursion after training a set of conditional models $M_{V_i}(V_{\pi}^{(i-1)})\sim P(v_i|v_{\pi}^{(i-1)})$. Thus, \idgen goes to the same step at recursion level $R=r+1$ iff ID goes to the same step which in this case is \texttt{Return}.
		
		\textbf{Step 2}:
		At step 2, both \idgen and ID update their intervention set $\X$ as $\mathbf{X}\cap An(\mathbf{Y})_G$ and the causal graph $G$ as $ G_{An(\mathbf{Y})}$. The same values of the parameters $\X, \Y$ and $G$ will lead both algorithms to the same step at the recursion level $R=r+1$.
		Thus, at recursion level $R=r+1$, \idgen visits step $i$  iff ID visits step $i$, $\forall i \in [7]$.
		
		\textbf{Step 3}:
		At step 3, both algorithms only update the intervention set parameter $\X$ as $\X= \X \cup \mbf{W}$ and perform the next recursive call. 
		Since they leave for the next recursive call with the same set of parameters,
		at recursion level $R=r+1$, \idgen visits step $i$  iff ID visits step $i$, $\forall i \in [7]$.

		\textbf{Step 4}:
		At step 4, both  \idgen and ID partition the variables set $C(G\setminus \X)$ into $k$ (multiple) c-components. Then they perform a recursive call for each c-component with parameter $\Y=S_i$ and $\X=\mbf{V}\setminus S_i$.
		For each of these $k$ recursive calls, both algorithms use the same values for the parameter set $\Y,\X$ and $G$. 
		Thus, at recursion level $R=r+1$, \idgen visits step $i$  iff ID visits step $i$, $\forall i \in [7]$.
		
		\textbf{Step 5}:
		Step 5 is a base case for both algorithms, and if both algorithms are at step 5, they return \texttt{FAIL}. This step represents the non-identifiability case. Thus, \idgen goes to the same step at recursion level $R=r+1$ iff ID goes to the same step, which in this case is \texttt{Return}.
		
		\textbf{Step 6}:
		Step 6 is a base case for both algorithms. ID calculates the product of a set of conditional distributions $P(v_i|v^{i-1}_{\pi})$ and returns it. While \idgen trains conditional models $M_{V_i}(V_{\pi}^{(i-1)})$ to learn those conditional distributions, \idgen returns a sampling network after connecting these models. Both algorithms end the recursion here and return different objects but that will not affect their future trace. 
		Thus, \idgen goes to the same step at recursion level $R=r+1$ iff ID goes to the same step which in this case is \texttt{Return}.
		
		\textbf{Step 7}:
		ID algorithm at step 7, updates its distribution parameter $P$ with $P_{\mbf{x_z}}(S')$. On the other hand, \idgen generates $\Do(\mbf{X_Z})$ interventional samples. 
		Both algorithms update their parameters set $\Y, \X$ and $G$ in the same way as $\Y=\Y, \X= \X \cap S'$ (or $\X\setminus \X_Z$ since
  $\X \cap S'= \X\setminus \X_Z$) 
  and $G= G_{S'}$. Since they leave for the next recursive call with the same set of parameters $\Y,\X, G$, at recursion level $R=r+1$, \idgen visits step $i$  iff ID visits step $i$, $\forall i \in [7]$.

		Therefore, we have proved by induction that \idgenpar enters Step $i$ if and only if ID$(\Y,\X, P, G)$ enters Step $i$, for any $i\in [7]$ for any recursion level $R$.
		

	\end{proof}

\begin{lemma}
\label{lem:termination}   
\textbf{Termination:} Let $P_\x(\Y)$ be a query for causal graph $G=(\V, E)$ and 
$\data\sim P(\V)$. Then the recursions induced by \idgenpar terminate in either step 1, 5, or 6.
\end{lemma}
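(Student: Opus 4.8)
The strategy is to piggy-back on the recursive trace lemma (Lemma~\ref{lem:mirror}) so that the termination behavior of \idgen is inherited from that of the classical ID algorithm. By Lemma~\ref{lem:mirror}, at every recursion level the \idgen call enters step $i$ exactly when the matching ID call enters step $i$; hence the two recursion trees, labeled by the step index taken at each node, are the same tree. It therefore suffices to establish two facts: \emph{(a)} this tree is finite, and \emph{(b)} every leaf is labeled by step $1$, $5$, or $6$. Fact \emph{(b)} is immediate from the Definition of recursive call: steps $1$, $5$, $6$ of \idgen issue no further \idgen call (they either throw \textsc{fail} or invoke only \condgm), whereas steps $2$, $3$, $4$, $7$ each end with a recursive \idgen call; so any terminal call must sit at a step in $\{1,5,6\}$. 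Combining \emph{(a)} and \emph{(b)} yields the lemma.

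For \emph{(a)} I would exhibit a well-founded potential on the parameter triple $(\Y,\X,G)$ that strictly decreases along every recursive call, namely the lexicographically ordered pair $\mu(\Y,\X,G)=\bigl(\,|V(G)|,\ |V(G)\setminus\X|\,\bigr)$. The routine checks: step $2$ replaces $G$ by $G_{An(\Y)}$ and is entered only when $V(G)\setminus An(\Y)_G\neq\emptyset$, so $|V(G)|$ strictly drops; step $3$ replaces $\X$ by $\X\cup\mathbf{W}$ with $\emptyset\neq\mathbf{W}\subseteq V(G)\setminus\X$, leaving $|V(G)|$ fixed and strictly dropping $|V(G)\setminus\X|$; step $7$ recurses on $G_{S'}$, and $S'\subsetneq V(G)$ because $S'=V(G)$ would give $C(G)=\{G\}$ and trigger step $5$ (which is tested before step $7$), so $|V(G)|$ strictly drops. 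The delicate case is step $4$: its subcalls are $\idgen(S_i,\ V(G)\setminus S_i,\ G,\dots)$ with the graph unchanged, so the decrease must come from the second coordinate. Here $C(G\setminus\X)=\{S_1,\dots,S_k\}$ partitions $V(G)\setminus\X$ with $k\ge 2$ and every $|S_j|\ge 1$, hence $|S_i|<\sum_j|S_j|=|V(G)\setminus\X|$, while the subcall has intervention set $V(G)\setminus S_i$ and therefore $|V(G)\setminus(V(G)\setminus S_i)|=|S_i|$ is strictly smaller. Since $\mu$ takes values in a well-ordered set and strictly decreases at each of the recursing steps $2,3,4,7$, there is no infinite descending chain of recursive calls, so the recursion tree is finite.

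The main obstacle is exactly this step-$4$ case: unlike the other recursive steps it does not shrink the graph, so the decrease has to be extracted from the c-component partition, and one must confirm that $\mu$ is \emph{globally} monotone across arbitrary alternations of steps (e.g.\ a step-$4$ subcall that is followed by step $3$ and then by step $4$ again). It is also worth recording that a step-$4$ subcall cannot itself re-enter step $4$ with the same factorization, since $C\bigl(G\setminus(V(G)\setminus S_i)\bigr)=C(G[S_i])=\{S_i\}$; this is not logically required once $\mu$ is in hand but keeps the reasoning transparent. Finally, a lighter-weight alternative to all of part \emph{(a)} is to simply invoke the known termination of the ID algorithm~\citep{shpitser2008complete} together with Lemma~\ref{lem:mirror}, which reduces the entire proof to the one-line observation \emph{(b)}.
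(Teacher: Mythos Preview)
Your proposal is correct. The paper's own proof is precisely the ``lighter-weight alternative'' you mention in your final sentence: it simply invokes Lemma~\ref{lem:mirror} together with the known termination of the ID algorithm of \citet{shpitser2008complete}, and observes that since \idgen follows the same recursive trace, it terminates at the same steps.

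Your main argument takes a genuinely different route: rather than appealing to ID's termination as a black box, you give a direct, self-contained termination proof via the lexicographic potential $\mu(\Y,\X,G)=(|V(G)|,\,|V(G)\setminus\X|)$ and verify it strictly decreases across each of the recursing steps $2,3,4,7$. The step-by-step checks are all sound, including the step-$4$ case (which, as you note, is the only one not shrinking the graph and hence requires $k\ge 2$ to force $|S_i|<|V(G)\setminus\X|$) and the step-$7$ observation that $S'\subsetneq V(G)$ because $S'=V(G)$ would trigger step $5$ first. What your approach buys is independence from the external termination result for ID---your argument would stand even if one had not already established that ID terminates---at the cost of a longer proof. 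The paper's approach buys brevity by leaning on Lemma~\ref{lem:mirror}, which it has already proved, plus the existing literature. Since you explicitly flag the paper's route as an alternative, you have in effect presented both proofs.
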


\begin{proof}
The result follows directly from Lemma \ref{lem:mirror}: Consider any causal query $P_\x(\Y)$. Since ID is complete, the query -- whether it is identifiable or not -- terminates in one of the steps 1, 5, or 6. By Lemma \ref{lem:mirror}, \idgen follows the same steps as ID, and hence also terminates in one of these steps.
\end{proof}
	
	


\begin{figure}
    \centering
    \centering
        \begin{tikzpicture}[scale=0.85, transform shape]
     \begin{scope}[auto, every node/.style={minimum size=2em,inner sep=4},node distance=2cm]

\node [ align=center, nnh] (th8) {Lemma~\ref{lemma:joint-sample-from-net}:\\ $M()$ samples correctly \\ from joint dist.};

\node  [align=center,  nnh, draw, right =0.3cm of th8] (th16) {Lemma~\ref{lem:same-dist-graph}:\\ \idgen data sampled from \\ same distribution as ID};

\node  [align=center,  nnh, draw, above left =0.3cm and  -1.5cm of th16] (th14) {Lemma~\ref{lem:same-dag-dag}:\\
Same graph \\ $G$ as ID};

\node  [align=center,  nnh, draw, above right =0.3cm and  -1.5cm of th16] (th15) {Lemma~\ref{lem:same-dist-dist}:\\ ID and \idgen's relation \\with input $P(V)$};

\node [align=center, nnh, draw, fill=green!30, right =0.3cm of th16] (th12) {Lemma~\ref{lem:mirror}:\\ Same recursive \\ trace as ID};

\node  [align=center,  nnh, draw, below =0.3cm of th16] (th18) {Lemma~\ref{lem:step1}:\\ Correctness of \\ Step 1 (Base) };

\node  [align=center,  nnh, draw, below =0.3cm of th12] (th19) {Lemma~\ref{lem:step6}:\\ Correctness of \\ Step 6 (Base) };

\node  [align=center,  nnh, draw, right =0.3cm of th19] (th20) {Lemma~\ref{lem:acyclic}:\\ Sampling net \\ follows $G_{\pi}$};

\node  [align=center,  nnh, draw, below =0.3cm of th19] (th22) {Theorem~\ref{th:id-dag-Soundness}:\\ \idgen \\ soundness};

\node  [align=center,  nnh, draw, right =0.5cm of th20] (th17) {Lemma~\ref{lem:fail}:\\ Correctness of \\ Fail Case};

\node  [align=center,  nnh, draw, above =0.3cm of th17] (th23) {Theorem~\ref{th:id-dag-complete}:\\ \idgen \\ completeness};

\node  [align=center,  nnh, draw, fill=green!30, above =0.3cm of th23] (th13) {Lemma~\ref{lem:termination}:\\
\idgen \\Termination};

\draw[ thick] (th14) to  (th16); 
\draw[ thick] (th15) to  (th16); \draw[ thick] (th16) to  (th18);

\draw[ thick] (th8) to  (th18);
\draw[ thick] (th12) to  (th14);
\draw[ thick] (th12) to  (th15);
\draw[ thick] (th12) to  (th18);
\draw[ thick] (th12) to  (th19);
\draw[ thick] (th12) to  (th20);
\draw[ thick] (th12) to  (th17);
\draw[ thick] (th17) to  (th23);
\draw[ thick] (th13) to  (th23);
\draw[ thick] (th12) to  (th17);
\draw[ thick] (th18) to  (th22);
\draw[ thick] (th19) to  (th22);
\draw[ thick] (th20) to  (th22);
\draw[ thick] (th17) to  (th22);

  \end{scope}
\end{tikzpicture}
    \caption{Flow Chart of Proofs}
\end{figure}
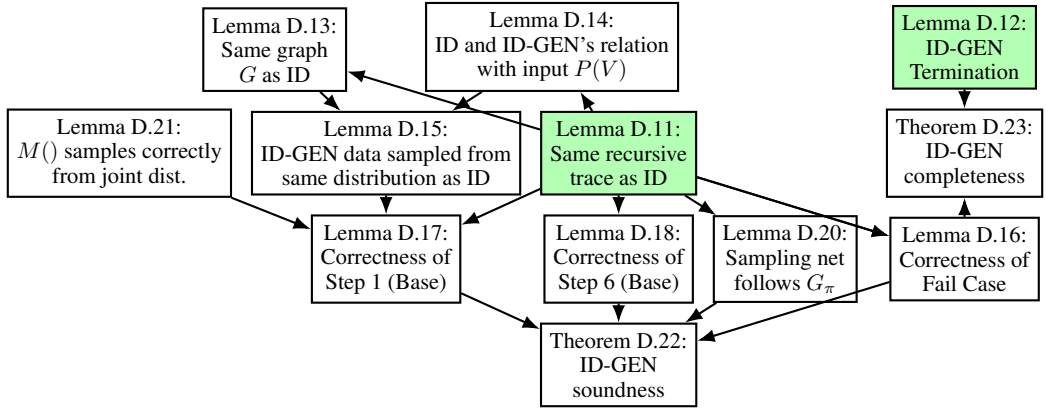


	\begin{lemma}
		\label{lem:same-dag-dag}
		Consider the recursive calls  $ID(*, G_{ID})$ and  $\idgen(*,  G_{\idgen}, \Xp, \Gp )$ at any level of the recursion.
		Then $G_{ID} = G_{\idgen}$ and $G_{ID}= \Gp \setminus \Xp$.
	\end{lemma}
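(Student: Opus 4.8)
I would prove both equalities at once by induction on the recursion depth, strengthening the claim to an invariant maintained by \idgen: at every recursion level, (i) the graph argument it holds, $G_{\idgen}$, equals the graph argument $G_{ID}$ held by ID at the corresponding level, and (ii) $G_{\idgen} = \Gp \setminus \Xp$, where ``$\setminus$'' denotes vertex deletion. Recall that $\Gp$ differs from $G$ only by additionally containing the vertices of $\Xp$, each stripped of its incoming edges, so (ii) simply says that deleting those extra vertices from $\Gp$ recovers $G$. The backbone of the argument is Lemma~\ref{lem:mirror}: at every level the two algorithms execute the same step $i\in[7]$ and their recursion trees have the same shape, so ``corresponding'' calls $ID(*,G_{ID})$ and $\idgen(*,G_{\idgen},\Xp,\Gp)$ are well defined, and it already guarantees that the arguments $\Y,\X$ agree; I only need to propagate the two graph-level facts along that shared recursion trace.

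\textbf{Base case and easy steps.} At depth $0$, \idgen is initialized with $\Xp=\emptyset$ and $\Gp=G$, and both algorithms receive the same input graph, so (i) and (ii) are immediate. For the inductive step I would split on the step $i$ using Lemma~\ref{lem:mirror}. If $i\in\{1,5,6\}$ the recursion terminates (Lemma~\ref{lem:termination}) and nothing is required. If $i\in\{3,4\}$, both $G,\Xp,\Gp$ are forwarded unchanged by \idgen (at step $4$, to each child call), and ID forwards $G$ unchanged, so (i) and (ii) transfer verbatim. If $i=2$, both algorithms restrict $G$ to $An(\Y)_{G}$; the two resulting graphs coincide by hypothesis~(i), and since \idgen keeps $\Xp$ fixed and restricts $\Gp$ to $An(\Y)_{G}\cup\Xp$, restricting the new $\Gp$ to the vertex set of the new $G$ gives $\Gp$ restricted to $An(\Y)_{G}$, which by hypothesis~(ii) equals $G$ restricted to $An(\Y)_{G}$ — the new $G$.

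\textbf{Step 7.} The remaining case is the one to handle carefully. Here $S=C(G\setminus\X)\subset S'\in C(G)$, $\X_Z=\X\setminus S'$, and the \texttt{Update} subroutine sets the new $G$ to $G_{S'}$, the new $\Xp$ to $\Xp\cup\X_Z$, and the new $\Gp$ to the subgraph of $\Gp$ induced on $S'\cup(\Xp\cup\X_Z)$ with incoming edges to $\Xp\cup\X_Z$ deleted; ID recurses with graph $G_{S'}$. Part~(i) is then immediate since $G$ is common by hypothesis. For~(ii), I would first observe that $S'$, being a c-component of $G$, is disjoint from $\Xp$ — that $G$ contains no vertex of $\Xp$ is exactly hypothesis~(ii) — and that $\X_Z\subseteq\X$ is disjoint from $\Xp$ as well; hence deleting $\Xp\cup\X_Z$ from the new $\Gp$ leaves the subgraph of $\Gp$ induced on $S'$, and the edge-removal on $\Xp\cup\X_Z$ touches no edge internal to $S'$ because $S'$ is disjoint from those vertices. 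So the new $\Gp$ minus the new $\Xp$ equals $\Gp_{S'}$, which equals $G_{S'}$ by hypothesis~(ii) — the new $G$. This closes the induction, and specializing to the initial call gives $G_{ID}=G_{\idgen}=\Gp\setminus\Xp$.

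\textbf{Expected obstacle.} The only genuine friction is Step 7: one must verify that the ``restrict to $S'\cup\Xp$, then cut incoming edges to $\Xp$'' operation commutes with vertex deletion in the way needed, which rests entirely on the disjointness $S'\cap(\Xp\cup\X_Z)=\emptyset$ (itself a consequence of the invariant). A milder point is the reading of $\Gp_{An(\Y)}$ in Step 2: it must keep the already intervened vertices $\Xp$ in addition to $An(\Y)_{G}$, which is precisely what preserves $G=\Gp\setminus\Xp$ across that step.
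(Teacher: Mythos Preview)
Your proposal is correct and follows essentially the same approach as the paper: induction on recursion depth, using Lemma~\ref{lem:mirror} to align the two recursion traces, then a case split on the executed step with Steps~2 and~7 as the only nontrivial cases where the graph parameters change. Your treatment is in fact slightly more explicit than the paper's in two places --- you spell out the disjointness $S'\cap(\Xp\cup\X_Z)=\emptyset$ needed in Step~7, and you flag the correct reading of $\Gp_{An(\Y)}$ in Step~2 (that $\Xp$ must be retained), which the paper handles by asserting that removing non-ancestors ``does not affect'' $\Xp$.
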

	
	\begin{proof}
		According to Lemma~\ref{lem:mirror}, ID and \idgen follow the same recursive trace. Thus, at any recursion level, both algorithms will stay at step $i \in [7]$.
		
		\textbf{Base case:}
		At recursion level $R=0$, both ID and \idgen start with the same input causal graph $G$. Thus, $G_{ID}= G_{\idgen} = \Gp = G$.
		Also at $R=0$, the set of intervened variables $\Xp=\emptyset$. Thus, $G_{ID}= \Gp \setminus \emptyset = G$ holds.
		
		\textbf{Induction hypothesis:}
		At any recursion level $R=r$,
		let $G^{r}_{ID}$ be the graph parameter of the ID algorithm and $\{G^r_{\idgen}, \Xp^r$, $\Gp^r\}$ be the parameters for the \idgen algorithm.
		We assume that $G^{r}_{ID}= G^{r}_{\idgen}$ and $G^{r}_{ID} = \Gp^r \setminus \Xp^r$.

		\textbf{Inductive step:}
		The set of parameters $\{G_{\idgen}, \Xp, \Gp\}$ of \idgen and the graph parameter $G$ of ID are only updated at step 2 and step 7 for the next recursive calls. Thus, we prove the claim for these two steps separately.

		\textbf{At step 2}:
		In both ID and \idgen algorithms, 
		we remove the non-ancestor variables  $\V\setminus An(Y)$ from the graph.
		For ID, 
		$G_{ID}^{r+1} = G_{ID}^r({An(\Y)})$.
		For the \idgen algorithm, we obtain $G_{\idgen}^{r+1} = G_{\idgen}^r({An(\Y)})$.
		Thus, $G^{r+1}_{ID}=G^{r+1}_{\idgen}$.
		In \idgen, we also update $\Gp^{r+1}$ as $\Gp^{r+1} = \Gp^{r}({An(\Y)})$.
		Since removing non-ancestor does not affect the intervened variables $\Xp$ anyway, the same relation between $G^{r+1}_{ID}$ and $\Gp^{r+1}$ is maintained, i.e., $G^{r+1}_{ID}= \Gp^{r+1} \setminus \Xp$.

		\textbf{At step 7}: Both ID and \idgen finds $\mbf{X}_Z = \mbf{X}\setminus S'$ and apply $\Do(\X_Z)$ as intervention.
		Also, in \idgen, $\Xp^{r+1}$ is set to $\Xp^r \cup \X_Z$.

  \emph{For \idgen:}
  
		Since $\X_Z$ is intervened on, $G^r$ for \idgen is updated as $G^{r+1} = G^{r}_{\overline{\X_Z}}$ and
		$\Gp^r$ is updated as $\Gp^{r+1} = \Gp^{r}_{\overline{\X_Z}}$.
		
		Now, $\X_Z$ has all incoming edges are cut off in 
		$G^{r+1}$
		and does not have any parents.
		As a result, removing $\X_Z$ from $G^{r+1}$
		is valid. Thus, we obtain $G^{r+1} = G^{r+1} \setminus {\X_Z} 
		= G^{r}_{\overline{\X_Z}} \setminus {\X_Z} $
		$=G^r_{S'}$.
		However,  for the other graph parameter, we keep $\Gp^{r+1} = \Gp^{r}_{\overline{\X_Z}}$ as it is, since we would follow this structure in the base cases and consider $\X_Z$ while training the conditional models. 
  Note that, in $\Gp^{r}$ the intervened variables $\Xp^r$ (before recursion level $r$) had already incoming edges removed. Thus, after $\Xp^{r}$ being updated as $\Xp^{r+1} = \Xp^r \cup \X_Z$, we can write $\Gp^{r+1} = \Gp^{r}_{\overline{\X_Z}} = \Gp^{r}_{\overline{\Xp^{r+1}}}$.
		Since by inductive assumption, $G^{r} = \Gp^{r} \setminus \Xp^r$, thus
  \begin{equation}
  \label{eq:gr-ghatr}
      G^{r+1} = \Gp^{r+1} \setminus \{\Xp^r\cup \X_Z\}= \Gp^{r+1} \setminus \{\Xp^{r+1}\}
  \end{equation}

    \emph{For ID:}
    
ID algorithm updates its parameters $G$ as 
$G^{r+1} = 
G^r_{S'}=
G^r\setminus \X_Z$.
		Since \idgen and ID have the same graph parameter at recursion level $r$, i.e, $G^{r}_{ID}= G^{r}_{\idgen}$ and they updated the parameter in the same way, $G^{r+1}_{ID}=G^{r+1}_{\idgen}$ holds true.
		Since $G^{r+1}_{\idgen} = \Gp^{r+1} \setminus \{\Xp^{r+1}\}$ according to Equation~\ref{eq:gr-ghatr}, 
  we also obtain $G^{r+1}_{ID} = \Gp^{r+1} \setminus \{\Xp^{r+1}\}$.
		
		Therefore, the lemma holds for any recursion level $R$.
	\end{proof}

	\begin{lemma}
		\label{lem:same-dist-dist}
		Let $P(.)$ be the input observational distribution to the ID algorithm and $\data \sim P(.)$ be the input observational dataset to the \idgen algorithm. 
		Suppose, $\Xp$ is the set of variables that are intervened at step 7 of both the ID and \idgen algorithm from recursion level $R=0$ to $R=r$.
		Consider recursive calls  $ID(*, P_{ID}(\vb))$ and  $\idgen(*, \data, *,  \Xp, \Gp )$ at recursion level $R=r$.
		Then $P_{ID} = P_{\xp}(\vb)$ and $\data[\Xp, \V] \sim P_{\xp}(\xp, \vb)$.
	\end{lemma}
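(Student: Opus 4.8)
The plan is to induct on the recursion level $R$, in lockstep with the proof of Lemma~\ref{lem:same-dag-dag} and using Lemma~\ref{lem:mirror} so that at every level both algorithms are executing the same step $i\in[7]$ with matching $\Y,\X,G$. Write $G^r,\Gp^r,\Xp^r,\data^r,P^r_{ID}$ for the parameters at level $r$ and let $\V^r$ be the variable set of $G^r$; note that by Lemma~\ref{lem:mirror} the sets $\X_Z=\X\setminus S'$ chosen at step~7 agree across the two algorithms, so ``the variables intervened at step~7 up to level $r$'' is exactly \idgen's parameter $\Xp^r$. The base case $R=0$ is immediate: no step~7 has run, so $\Xp^0=\emptyset$, $P^0_{ID}=P(\V)$ and $\data^0\sim P(\V)$, which is the claim with $\xp^0=\emptyset$. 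For the inductive step I would assume $P^r_{ID}=P_{\xp^r}(\V^r)$ and $\data^r[\Xp^r,\V^r]\sim P_{\xp^r}(\xp^r,\V^r)$ and check each step.

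First the easy steps: steps~1 and~5 are base cases that terminate; steps~3, 4, and~6 forward the distribution parameter $P$ (resp.\ the dataset $\data$) and $\Xp$ to the recursive call(s) unchanged, so there is nothing to prove. The only steps that alter $P$/$\data$ are step~2 and step~7. At step~2, ID replaces $P$ by $\sum_{\V^r\setminus An(\Y)}P$ while \idgen restricts $\data$ onto the variables surviving in $\Gp^r_{An(\Y)}$; since marginalizing a distribution onto a variable subset equals, in distribution, keeping only those columns of a sample, the induction hypothesis at once gives $P^{r+1}_{ID}=P_{\xp^r}(An(\Y))$ and $\data^{r+1}[\Xp^{r+1},\V^{r+1}]\sim P_{\xp^r}(\xp^{r+1},An(\Y))$ with $\Xp^{r+1}=\Xp^r$ and $\V^{r+1}=An(\Y)_{G^r}$. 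The one thing to check carefully here is that the bookkeeping of $\Xp$ inside $\Gp$ stays consistent (this is where Lemma~\ref{lem:same-dag-dag} is invoked), and that if some element of $\Xp^r$ is dropped as a non-ancestor of $\Y$ then it only contributes a vanishing intervention, since in that case it has no directed path into $\Y$ and $P_{\xp^r}(An(\Y))$ does not depend on its value.

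The crux is step~7. Here $C(G^r\setminus\X^r)=\{S\}$ is a single c-component with $S\subset S'\in C(G^r)$, so every non-intervened vertex of $G^r$ lies in $S\subseteq S'$, and hence $\V^r\setminus S'=\X^r\setminus S'=\X_Z$. On the ID side, $P$ is replaced by $P'=\prod_{\{i\,:\,V_i\in S'\}}P\bigl(V_i\mid V^{(i-1)}_{\pi}\cap S',\,v^{(i-1)}_{\pi}\setminus S'\bigr)$ with the topological order read from $\Gp^r$, in which, by Lemma~\ref{lem:same-dag-dag}, the variables of $\Xp^r$ are roots. This product is precisely the $S'$-c-factor of the distribution $P^r_{ID}$ relative to $G^r$ --- the identity underlying the soundness of step~7 of ID, and a special case of the c-component factorization (Lemma~\ref{bglemma:c-factorization}) --- namely $(P^r_{ID})_{\V^r\setminus S'}(S')$; since $\V^r\setminus S'=\X_Z$ and $P^r_{ID}=P_{\xp^r}(\V^r)$ by the induction hypothesis, this equals $P_{\xp^r\cup\x_z}(S')$. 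Together with $\V^{r+1}=S'$ and $\Xp^{r+1}=\Xp^r\cup\X_Z$ this is exactly $P^{r+1}_{ID}=P_{\xp^{r+1}}(\V^{r+1})$. On the \idgen side, \stepsev first calls $\condgm(S',\X_Z,G^r,\data^r,\Xp^r,\Gp^r)$, which trains one model per $V_i\in S'$ in the $\Gp^r$-topological order --- so $\X_Z\cup\Xp^r$ are supplied as conditioning inputs --- on $\data^r$; since $\data^r[\Xp^r,\V^r]\sim P_{\xp^r}(\xp^r,\V^r)$ by the induction hypothesis, Assumption~\ref{assum:correct_cond} makes each model sample from $P_{\xp^r}(v_i\mid v^{(i-1)}_{\pi})$. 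Then $\data'\sim\h(\X_Z,\Xp^r)$ is generated by drawing $\X_Z$ from the reference distribution (uniform or $P(\X_Z)$), carrying the $\Xp^r$-columns from $\data^r$, and ancestrally sampling $S'$; so conditionally on $(\Xp^r,\X_Z)$ the $S'$-part of $\data'$ is distributed as $\prod_{V_i\in S'}P_{\xp^r}(v_i\mid v^{(i-1)}_{\pi})$, the same c-factor as above, i.e.\ $P_{\xp^r\cup\x_z}(S')=P_{\xp^{r+1}}(\V^{r+1})$. Restricting to the columns passed on in line~\ref{line:s7-call}, $\data'[\Xp^{r+1},S']\sim P_{\xp^{r+1}}(\xp^{r+1},\V^{r+1})$, which closes the induction.

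I expect step~7 to be the main obstacle, since it requires aligning three facts at once: (i) the product of conditionals that \condgm is trained on is exactly the $S'$-c-factor of $P_{\xp^r}(\V^r)$ --- this needs the $\Gp$-versus-$G$ topological-order matching of Lemma~\ref{lem:same-dag-dag}; (ii) that c-factor equals the further-intervened distribution $P_{\xp^r\cup\x_z}(S')$, which uses the c-component factorization (Lemma~\ref{bglemma:c-factorization}) together with the structural fact $\V^r\setminus S'=\X_Z$ special to step~7; and (iii) ancestrally sampling $\h$ with $\X_Z$ drawn from a reference distribution actually realizes this interventional distribution as a genuine dataset (Assumption~\ref{assum:correct_cond}). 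The $\Xp$-bookkeeping inside $\Gp$ at step~2 is a smaller but real subtlety that should be written out rather than glossed over.
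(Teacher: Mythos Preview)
Your proposal is correct and follows essentially the same approach as the paper: induction on the recursion level, with Lemma~\ref{lem:mirror} ensuring both algorithms visit the same step, and then a case analysis showing that only steps~2 and~7 modify the distribution/dataset parameters. The paper's proof is considerably terser at step~7 --- it simply writes $P^{r+1}_{ID}(s')=P^r_{ID}(\vb\mid\Do(\x_Z))=P_{\xp^{r+1}}(s')$ and, for \idgen, asserts that applying $\Do(\X_Z)$ to $\data^r$ yields $\data^{r+1}\sim P_{\xp^{r+1}}(\xp^{r+1},s')$ --- whereas you unpack this via the c-component factorization (Lemma~\ref{bglemma:c-factorization}) and the structural fact $\V^r\setminus S'=\X_Z$, which is more informative. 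On step~2, the paper disposes of the $\Xp$-bookkeeping by asserting $\Xp\subset An(\Y)_{\Gp}$, while you instead argue that any dropped $\Xp$-variable contributes a vanishing intervention; either route works, and your caution here is warranted.
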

	
	\begin{proof}
		According to Lemma~\ref{lem:mirror}, ID and \idgen follow the same recursive trace. Thus, at any recursion level, both algorithms will stay at step $i \in [7]$.

		\textbf{Base case:}
		At recursion level $R=0$,
  ID starts with the observational distribution $P$ and 
  \idgen starts with the observational training dataset $\data \sim P$. 
		At $R=0$, $\Xp= \emptyset$ since no variables have yet been intervened on. Thus, for ID algorithm $P_{ID} = P_{\emptyset} (V)$ holds. For \idgen, $\data[\emptyset, \V] \sim P_{\emptyset}(\emptyset, \V)$ holds.
		Thus, the claim is true for $R=0$.

		\textbf{Induction hypothesis:}
		Let the set of variables intervened at step 7s from the recursion level $R=0$ to $R=r$, be $\Xp^r$.
		At $R=r$, let the distribution parameter of the ID algorithm be $P^{r}_{ID}= P_{\xp^r}(\mbf{v})$.
		Let the dataset parameter of the \idgen algorithm be sampled from $P_{\xp^r}(\xp^r, \mbf{v})$, i.e, $\data^r \sim P_{\xp^r}(\xp^r, \mbf{v})$.
		
		\textbf{Inductive step:}
		The distribution parameter $P$ of ID and the dataset parameter $\data$ of \idgen only change at step 2 and step 7 for the next recursive calls.
  Thus, we prove the claim for these two steps separately.

		\textbf{At step 2}:
		At step 2 of both ID and \idgen algorithms, there is no change in $\Xp$, i.e., no additional variables are intervened to change the distribution $P$ or the dataset $\data$.

  
		At this step, we marginalize over $\V\setminus An(\Y)$. Since $\Xp \subset An(\Y)_{\Gp}$, marginalizing non-ancestors out does not impact $\Xp$.
		Thus, $P^{r+1}_{ID}(An(\Y)) = \sum_{\V \setminus An(\Y)_G}P^{r}_{ID}(\vb) = \sum_{\V \setminus An(\Y)_G} P_{\xp^r}(\vb) = P_{\xp^r}(An(\Y)) $. This holds true since we can marginalize over $\V \setminus An(\Y)_G$ in the joint interventional distribution.
		
		In the \idgen algorithm, we drop the values of variables $\V \setminus An(\Y)_G$ from the data set, that is, $\data^{r}[\Xp, \V] \rightarrow \data^{r+1}[\Xp, An(\Y)_G].$ 
		Since we assumed $\data^r[\Xp, \V] \sim P_{\xp^r}(\xp^r, \vb)$, therefore $\data^{r+1}[\Xp, An(\Y)_G] \sim P_{\xp^r}(An(\Y))$ holds. 
		
		\textbf{At step 7}:
		At step 7, both ID and \idgen algorithm intervened on variable set $\X_Z$ where $\mbf{X}_Z = \mbf{X}\setminus S'$. Thus, at level $R=r+1$, we have $\Xp^{r+1} = \Xp^r \cup \X_Z$. {Note that at this step, $S' = \V \setminus \X_Z.$}

		ID algorithm updates its current distribution parameter $P^r_{ID}$ by intervening on $\X_Z$. Thus, at level $R=r+1$, $P^{r+1}_{ID}(\vb\setminus \x_Z) = P^{r+1}_{ID}(s')= P^r_{ID}(\vb|\Do(\x_Z))
		= P_{\xp^r}(\vb|\Do(\x_Z))  = P_{\xp^r \cup \x_Z}(s') = P_{\xp^{r+1}}(s').$
		Thus the claim holds for ID algorithm at level $R=r+1$.

	In the \idgen algorithm, we generate an interventional dataset $\data^{r+1}[\X_Z, \Xp, S']$ by applying the $\Do(\X_Z)$ intervention on dataset $\data^r[\Xp, \V]$.
		Since $\data^r[\Xp^r, \V] \sim P_{\xp^r}(\xp^r, \mbf{v})$, we obtain $\data^{r+1}[\X_Z, \Xp^r, S']\sim P_{\xp^r \cup \x_z}(\xp^r, \x_z,  s')$  $\implies \data^{r+1}[\Xp^{r+1}, S']\sim P_{\xp^{r+1}}(\xp^{r+1},  s')$.
		Thus, the claim holds true for \idgen at the recursion level $R=r+1$.

  Therefore, $P(.)$ being the input distribution for ID and $\data \sim P(.)$ being the input dataset for \idgen, we proved by induction that $P_{ID} = P_{\xp}(\vb)$ and $\data[\Xp, \V] \sim P_{\xp}(\xp, \vb)$ at any recursion level.


	\end{proof}

	\begin{lemma}
		\label{lem:same-dist-graph}
		Consider recursive calls  $ID(\Y, \X, G, P_{ID}(\vb))$ and  \idgenpar at any recursion level $R$. 
		If $\data[\Xp, V] \sim P_{\idgen}(\xp, \vb)$, then$P_{ID}(\vb) = P_{\idgen}(\vb|\xp)$
		for fixed values of $\Xp=\xp$.
	\end{lemma}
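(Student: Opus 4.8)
The plan is to obtain this lemma as a short corollary of Lemma~\ref{lem:same-dist-dist}. That lemma already pins down the exact distributions the two algorithms carry at recursion level $R$: the ID distribution parameter is $P_{ID}(\vb) = P_{\xp}(\vb)$, and the \idgen dataset satisfies $\data[\Xp,\V]\sim P_{\xp}(\xp,\vb)$, so under the hypothesis of the present lemma we have $P_{\idgen}(\xp,\vb) = P_{\xp}(\xp,\vb)$. Hence the whole claim reduces to showing that $P_{\xp}(\vb\mid\xp) = P_{\xp}(\vb)$, i.e.\ that conditioning the generated joint on the recorded intervention values recovers the post-intervention law of $\V$.

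First I would recall how the interventional dataset over $\Xp$ is produced inside \stepsev: each batch of newly intervened variables $\X_Z$ is drawn from an exogenous source (uniform over its domain, or $P(\X_Z)$), independently of all structural mechanisms, and then $\Do(\X_Z)$ is applied and the sampled value stored in $\Xp$. Accumulating over the step-7 visits that generated $\Xp$, the value $\xp$ is thus sampled from some product distribution $Q(\xp)$ that does not depend on any remaining mechanism. Consequently the joint factors as $P_{\xp}(\xp,\vb) = Q(\xp)\,P_{\xp}(\vb)$, where $P_{\xp}(\vb)$ is precisely the post-intervention distribution of $\V$ for the fixed value $\xp$.

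Next I would simply condition. For any $\xp$ with $Q(\xp)>0$ --- which holds because $Q$ is either uniform or $P(\Xp)$, and $P(\V)$ (hence its marginal $P(\Xp)$) is strictly positive by Assumption~\ref{assum:strict-pos} --- we get
\[
P_{\idgen}(\vb\mid\xp) \;=\; \frac{P_{\xp}(\xp,\vb)}{\sum_{\vb'}P_{\xp}(\xp,\vb')} \;=\; \frac{Q(\xp)\,P_{\xp}(\vb)}{Q(\xp)} \;=\; P_{\xp}(\vb) \;=\; P_{ID}(\vb),
\]
which is exactly the asserted identity for fixed $\Xp=\xp$.

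The one real subtlety --- and the step I would write out most carefully --- is justifying the factorization $P_{\xp}(\xp,\vb) = Q(\xp)\,P_{\xp}(\vb)$ uniformly across multiple step-7 visits: one must check that once a set $\X_Z$ has been intervened on and its value recorded in $\Xp$, later step-7 interventions neither overwrite those recorded values nor reintroduce dependence on them through the updated mechanism, so that the accumulated $\Xp$ is genuinely drawn from an exogenous product distribution. This bookkeeping is essentially the inductive content already established in the proof of Lemma~\ref{lem:same-dist-dist} (where $\data^{r+1}[\Xp^{r+1},S']\sim P_{\xp^{r+1}}(\xp^{r+1},s')$ is maintained level by level), so I would invoke it directly rather than re-deriving it; everything else is the one-line conditioning argument above.
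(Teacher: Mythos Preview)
Your proposal is correct and follows essentially the same route as the paper: both proofs invoke Lemma~\ref{lem:same-dist-dist} to identify $P_{ID}(\vb)=P_{\xp}(\vb)$ and $P_{\idgen}(\xp,\vb)=P_{\xp}(\xp,\vb)$, and then reduce the claim to showing that conditioning the joint on the recorded intervention value $\xp$ recovers $P_{\xp}(\vb)$. The only notable difference is presentational: the paper argues this conditioning step tersely via the intervention semantics in $\Gp$ (effectively treating $P_{\xp}(\xp)$ as a point mass so that $P_{\xp}(\xp,\vb)=P_{\xp}(\vb\mid\xp)$), whereas you make the factorization $P_{\xp}(\xp,\vb)=Q(\xp)\,P_{\xp}(\vb)$ explicit through the exogenous sampling distribution $Q$ and then cancel $Q(\xp)$, which is a bit more transparent about what the algorithm actually does.
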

	\begin{proof}
		At any level $R=r$ of the recursion , we have  $G_{ID} = G_{\idgen}$ and $G_{ID}= \Gp \setminus \Xp$ according to Lemma~\ref{lem:same-dag-dag}
		and $P_{ID} = P_{\xp}(\vb)$ and $\data[\Xp, \V] \sim P_{\xp}(\xp, \vb)$ according to Lemma~\ref{lem:same-dist-dist} where $P(\vb)$ is the input observational distribution.
		
		Let $\data[\Xp, \V] \sim P_{\xp}(\xp, \vb) = P_{\idgen}(\xp, \vb), \forall \xp$.
		
		Now, since in $\Gp$ we have already intervened on $\Xp$,
		for fixed $\Xp= \xp$, we have
		$P_{\xp}(\xp, \vb) = P_{\xp}(\xp) *P_{\xp}(\vb|\xp) = P_{\xp}(\vb|\xp)$, 
		thus $\V$ will be sampled from $P_{\idgen}(\vb|\xp)$ for fixed consistent $\Xp= \xp$. 
		
		
		On the other hand, in the ID algorithm, at any recursion level, $P_{ID}(\vb) = P_{\xp}(\vb)$ for fixed consistent $\Xp= \xp$ and $G_{ID}= \Gp \setminus \Xp$.

  Since according to Lemma~\ref{lem:same-dist-dist},	$P_{ID} = P_{\xp}(\vb)$ and $\data[\Xp, \V] \sim P_{\xp}(\xp, \vb)$ at any recursion level, i.e., starting from the same $P(\V)$, the same set of interventions are performed in the same manner in both algorithms; thus for fixed $\Xp= \xp$, we obtain $P_{ID}(\vb)= P_{\idgen}(\vb|\xp)$ .

	\end{proof}

\begin{lemma}
    \label{lem:fail}
    If a non-identifiable query is passed to it, \idgen will return \texttt{FAIL}.    
\end{lemma}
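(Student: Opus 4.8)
The plan is to reduce the claim entirely to the completeness of the original ID algorithm of \citet{shpitser2008complete}, combined with the recursive-trace correspondence already established in Lemma~\ref{lem:mirror}. Recall that ID is complete: whenever $P_\x(\y)$ is not identifiable from $P(\V)$ and $G$, the recursion of $\mathrm{ID}(\y,\x,P,G)$ reaches its Step 5 (the \texttt{FAIL} line) in at least one of its recursive branches. Lemma~\ref{lem:termination} provides the analogous dichotomy for \idgen — every branch of \idgen halts at Step 1, 5, or 6 — so the only thing left to pin down is that the branch on which ID fails is mirrored by a branch of \idgen that also hits Step 5, and that reaching Step 5 forces \idgen to return \texttt{FAIL}.

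The steps I would carry out are as follows. First, take a non-identifiable query $P_\x(\y)$ and run $\mathrm{ID}(\y,\x,P,G)$ and $\idgen(\y,\x,G,\data,\emptyset,G)$ side by side. By completeness of ID, fix a recursion path on which ID enters Step 5; along this path ID and \idgen agree on the parameters $(\Y,\X,G)$ at every level by Lemma~\ref{lem:mirror}, and the extra bookkeeping parameters $\Xp,\Gp$ maintained by \idgen do not change which step is taken, by Proposition~\ref{prop:step-decide}. Hence \idgen also enters Step 5 at the end of this path. Finally, inspecting Algorithm~\ref{alg:id-dag}, in Step 5 \idgen executes \texttt{throw FAIL}; since this is an exception rather than a returned value, it is not absorbed by any enclosing operation — in particular not by \idmerge at Step 4, which is the only place where branching occurs — so it propagates out of the top-level call, and \idgen returns \texttt{FAIL}.

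The main obstacle is the branching at Step 4: non-identifiability only guarantees that ID (and hence \idgen) fails on \emph{some} c-factor subcall, not on all of them, so one must argue that a single failing subcall suffices to make the whole call fail. This is handled by the observation that \texttt{FAIL} is thrown, not returned: Algorithm~\ref{alg:id-dag} never wraps a recursive call or a call to \idmerge in an exception handler, so once any descendant frame throws, the exception unwinds through every intermediate recursive frame and every \texttt{MergeNetwork} invocation up to the root. A secondary point, immediate from Lemma~\ref{lem:termination}, is that \idgen cannot halt ``silently'' at some step other than 1, 5, or 6; combined with Lemma~\ref{lem:mirror}, this shows that on a non-identifiable input the only terminal step consistent with ID's trace along the failing branch is Step 5, and Step 5 always throws \texttt{FAIL}.
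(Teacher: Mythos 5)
Your proposal is correct and follows essentially the same route as the paper's own proof: reduce the claim to the completeness of ID and then transfer the failing trace to \idgen via Lemma~\ref{lem:mirror}, concluding that \idgen reaches Step 5 and returns \texttt{FAIL}. Your additional care about how the thrown \texttt{FAIL} propagates through the branching at Step 4 is a point the paper's proof leaves implicit, but it is a refinement of the same argument rather than a different approach.
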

\begin{proof}
    Suppose \idgen is given a non-identifiable query as input. Since the ID algorithm is complete, if ID is given this query, it will reach its step 5 and return \texttt{FAIL}. 
    According to the lemma~\ref{lem:mirror}, \idgen follows the same trace and the same sequence of steps as ID for a fixed input.
    Since for the non-identifiable query ID reaches step 5, \idgen will also reach step 5 and return \texttt{FAIL}.
\end{proof}

\begin{figure}
    \centering
    \centering
        \begin{tikzpicture}[scale=0.85, transform shape]
     \begin{scope}[auto, every node/.style={minimum size=2em,inner sep=4},node distance=2cm]

\node [ align=center, nnh] (th8) {Lemma~\ref{lemma:joint-sample-from-net}:\\ $M()$ samples correctly \\ from joint dist.};

\node  [align=center,  nnh, draw,  fill=green!30, right =0.3cm of th8] (th16) {Lemma~\ref{lem:same-dist-graph}:\\ \idgen data sampled from \\ same distribution as ID};

\node  [align=center,  nnh, draw,  fill=green!30, above left =0.3cm and  -1.5cm of th16] (th14) {Lemma~\ref{lem:same-dag-dag}:\\
Same graph \\ $G$ as ID};
\node  [align=center,  nnh, draw,  fill=green!30, above right =0.3cm and  -1.5cm of th16] (th15) {Lemma~\ref{lem:same-dist-dist}:\\ ID and \idgen's relation \\with input $P(V)$ };

\node [align=center, nnh, draw, fill=gray!28, right =0.3cm of th16] (th12) {Lemma~\ref{lem:mirror}:\\ Same recursive \\ trace as ID};

\node  [align=center,  nnh, draw, below =0.3cm of th16] (th18) {Lemma~\ref{lem:step1}:\\ Correctness of \\ Step 1 (Base) };

\node  [align=center,  nnh, draw, below =0.3cm of th12] (th19) {Lemma~\ref{lem:step6}:\\ Correctness of \\ Step 6 (Base) };

\node  [align=center,  nnh, draw, right =0.3cm of th19] (th20) {Lemma~\ref{lem:acyclic}:\\ Sampling net \\ follows $G_{\pi}$};

\node  [align=center,  nnh, draw, below =0.3cm of th19] (th22) {Theorem~\ref{th:id-dag-Soundness}:\\ \idgen \\ soundness};

\node  [align=center,  nnh, draw, right =0.5cm of th20] (th17) {Lemma~\ref{lem:fail}:\\ Correctness of \\ Fail Case};

\node  [align=center,  nnh, draw, above =0.3cm of th17] (th23) {Theorem~\ref{th:id-dag-complete}:\\ \idgen \\ completeness};

\node  [align=center,  nnh, draw, fill=gray!28, above =0.3cm of th23] (th13) {Lemma~\ref{lem:termination}:\\
\idgen \\Termination};

\draw[ thick] (th14) to  (th16); 
\draw[ thick] (th15) to  (th16); \draw[ thick] (th16) to  (th18);
\draw[ thick] (th8) to  (th18);
\draw[ thick] (th12) to  (th14);
\draw[ thick] (th12) to  (th15);
\draw[ thick] (th12) to  (th18);
\draw[ thick] (th12) to  (th19);
\draw[ thick] (th12) to  (th20);
\draw[ thick] (th12) to  (th17);
\draw[ thick] (th17) to  (th23);
\draw[ thick] (th13) to  (th23);
\draw[ thick] (th12) to  (th17);
\draw[ thick] (th18) to  (th22);
\draw[ thick] (th19) to  (th22);
\draw[ thick] (th20) to  (th22);
\draw[ thick] (th17) to  (th22);

  \end{scope}
\end{tikzpicture}
    \caption{Flow Chart of Proofs}
\end{figure}
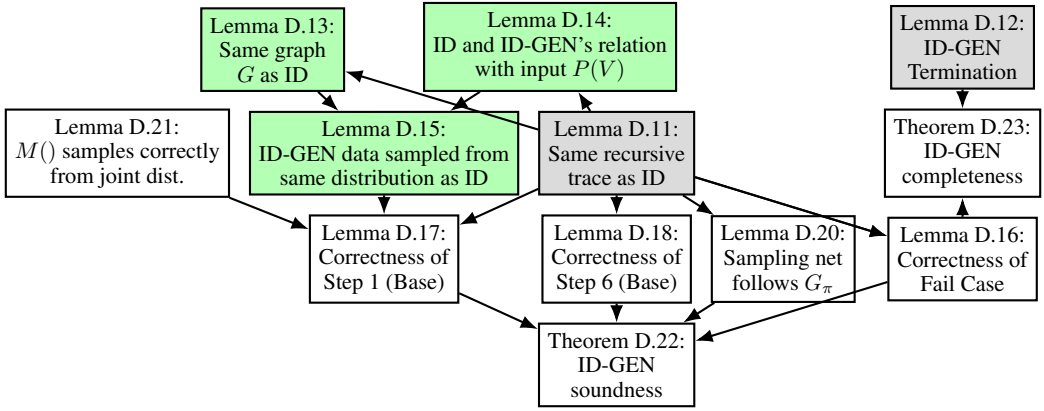

\begin{lemma}
    \label{lem:step1}
    \textbf{\idgen Base case (step 1):} 
%


Let, at any recursion level of \idgen, the input dataset $\data[\Xp, \V]$ is sampled from a specific joint distribution $P(\Xp,\V)$, i.e., $\data[\Xp, \V] \sim P(\Xp,\V)$ where $\Xp$ is the set of intervened variables at step 7s. Given a target interventional query $P(\y)$ over a causal graph $G=(V, E)$,  suppose \idgen$(\Y, \X= \emptyset, G, \data, \Xp, \Gp)$ enters step 1 and returns $\h$.
Then $\h$ is a valid sampling network for $P(\y)$  with fixed consistent $\Xp= \xp$.

	\end{lemma}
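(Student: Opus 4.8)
The plan is to check, directly against the definition of a valid sampling network, the object $\h$ that Algorithm~\ref{alg:step1} (\condgm) returns when invoked from Step~1 with $\X=\emptyset$, importing the needed facts about the ID algorithm through Lemma~\ref{lem:mirror}. First I would record what \condgm builds: a model-free node for every $V_i\in\X\cup\Xp=\Xp$, a node carrying a trained conditional model $M_{V_i}$ for every remaining relevant variable, and, for each such $V_i$, incoming edges $V_j\to V_i$ from all $V_j$ preceding $V_i$ in the fixed topological order $\pi_{\Gp}$. The first requirement of validity -- every node except those clamped to fixed values carries a conditional generative model -- is then immediate. Acyclicity of $\h$ follows because every added edge respects $\pi_{\Gp}$, so $\h$ is a DAG on which ancestral sampling is well defined; this uses, via Lemma~\ref{lem:same-dag-dag}, that in $\Gp$ the intervened variables $\Xp$ have no parents.

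For the second requirement -- that clamping $\Xp$ to its consistent value $\xp$ and sampling ancestrally through $\h$, then keeping only the $\Y$-coordinates, reproduces $P(\y)$ (the level-$R$ output of ID) -- I would proceed in three moves. (i) By Lemma~\ref{lem:mirror} the matching ID call also enters Step~1, so it returns $\sum_{\vb\setminus\y}\prod_{V_i}P_{ID}(v_i|v_{\pi}^{(i-1)})$, the product taken in the order $\pi_{\Gp}$; this is a full chain-rule factorization over all predecessors, so the product equals $P_{ID}(\vb)$ and the output equals $P_{ID}(\y)=\sum_{\vb\setminus\y}P_{ID}(\vb)$ (strict positivity, Assumption~\ref{assum:strict-pos}, makes every conditional well defined). (ii) By Assumption~\ref{assum:correct_cond} each $M_{V_i}$ samples from the conditional $P(v_i|v_{\pi}^{(i-1)})$ estimated from $\data\sim P(\Xp,\V)$; restricting its conditioning context to the consistent value $\Xp=\xp$, Lemmas~\ref{lem:same-dag-dag}, \ref{lem:same-dist-dist} and \ref{lem:same-dist-graph} together give that these conditionals coincide with ID's factors $P_{ID}(v_i|v_{\pi}^{(i-1)})$. (iii) Feeding correct conditionals into the DAG $\h$ in the order $\pi_{\Gp}$, Lemma~\ref{lemma:joint-sample-from-net} (ancestral sampling through a network of correct conditional models produces samples from the corresponding joint) yields that ancestral sampling with $\Xp=\xp$ clamped produces joint samples distributed as $\prod_{V_i}P_{ID}(v_i|v_{\pi}^{(i-1)})=P_{ID}(\vb)$; dropping all coordinates outside $\Y$ marginalizes this to $P_{ID}(\y)=P(\y)$, which is exactly the second validity condition.

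I expect the main obstacle to be the bookkeeping inside move (ii): one must pin down precisely that ``conditioning on the consistent value $\Xp=\xp$'' is the operation that identifies the pair $(\Gp,\data)$ handed to \condgm with the pair $(G,P_{ID})$ handed to ID, so that a model trained on $\data$ genuinely reproduces ID's conditional factor and is not off by a spurious reweighting -- this is exactly where the three structural lemmas relating $G$ to $\Gp$ and $P$ to $P_{ID}$ must be chained carefully, and where strict positivity is needed so that all conditionals are defined. A secondary, largely cosmetic point is reconciling the fact that Algorithm~\ref{alg:step1} iterates over $\Y$ with the identity stated over the whole set $\V$: when Step~1 is reached one may restrict attention to $An(\Y)_G$ -- via the Step~2 reduction, or by marginalizing the non-ancestor factors out of the product -- so that modeling those variables suffices to recover $P(\y)$.
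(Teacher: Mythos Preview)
Your proposal is correct and follows essentially the same route as the paper: both arguments use Lemma~\ref{lem:mirror} to ensure ID is also at Step~1, invoke Lemma~\ref{lem:same-dist-graph} to identify $P_{ID}(\vb)$ with $P_{\idgen}(\vb\mid\xp)$ so that the conditional factors trained by \condgm match ID's chain-rule factors, and then appeal to ancestral sampling (your Lemma~\ref{lemma:joint-sample-from-net}) plus marginalization to conclude. Your treatment is in fact slightly more explicit than the paper's---you separately verify the DAG/model-existence conditions of validity and you flag the $\Y$-versus-$\V$ indexing mismatch in Algorithm~\ref{alg:step1}, which the paper glosses over.
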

	\begin{proof}
 %
  %
  Since the intervention set $\X$ is empty, we are at the base case step 1 of \idgen.
  Suppose that for the same query, the ID algorithm reaches step 1.
  Let $P_{ID}(\vb)$ be the current distribution of the ID algorithm after performing a series of marginalizations in step 2 and intervention in step 7 on the input observational distribution.
		$\data[\Xp, \V]$ is the dataset parameter of \idgen algorithm that went through the same transformations as the ID algorithm in the sample space according to Lemma~\ref{lem:same-dist-graph}.
		$\Xp$ is the set of interventions that are applied on the dataset at step 7s.
		
		Since ID algorithm is sound it returns correct output for the query $P(\y)$. We prove the soundness of \idgen step 1 by showing that the sampling network \idgen returns, is valid for the output of the ID algorithm.

		ID algorithm is sound and factorizes $P_{ID}(\vb)$ with respect to the graph $G$
		as below.
		\begin{equation*}
			P_{ID}(\vb) = 
			\prod_{v_i\in V} P(v_i|  v_{\pi_{G}}^{(i-1)})
		\end{equation*}
		And obtains $ P_{ID}(\y)$ by:
		\begin{equation}
			\label{eq:step1-id-factor}
			 P_{ID}(\y) = 
			\sum_{v\setminus y}
			\prod_{v_i\in V} P(v_i|  v_{\pi_{G}}^{(i-1)})
		\end{equation}

		Let $\data[\Xp, \V] \sim P_{\idgen}(\xp, \vb) = P(\xp, \vb), \forall (\xp, \vb)$.
  For fixed $\Xp= \xp$,
		$P_{\idgen}(\vb|\xp)$ can be factorized with respect to $G$ and $\Gp$ as following:
		\begin{equation*}
			\begin{split}
				P_{\idgen}(\vb|\xp) &= \prod_{v_i\in V} P(v_i| v_{\pi_{G}}^{(i-1)}, \xp)\\
				&= \prod_{v_i\in V} P(v_i| v_{\pi_{\Gp}}^{(i-1)}) \\
				\text{\hspace{5mm} [Changed the graph from $G$ to $\Gp$} & \text{ since $\Xp \in \Gp$ and only affects the descendants of $\Xp$.]}
			\end{split}
		\end{equation*}
		And $P_{\idgen}(\y|\xp)$ can be obtained by:
		\begin{equation}
			\label{eq:step1-iddag-factor}
			P_{\idgen}(\y|\xp) = 
			\sum_{v\setminus y} \prod_{v_i\in V} P(v_i|  v_{\pi_{\Gp}}^{(i-1)})
		\end{equation}

		
		Since for fixed $\Xp =\xp$, $P_{ID}(\vb)= P_{\idgen}(\vb|\xp)$ according to Lemma~\ref{lem:same-dist-graph}. Thus, the corresponding conditional distributions in Equation~\ref{eq:step1-id-factor} and ~\ref{eq:step1-iddag-factor} are equal, i.e,  $P_{ID}(v_i | v_{\pi_{G}}^{(i-1)}) = P_{\idgen}(v_i| v_{\pi_{\Gp}}^{(i-1)}), 
  \forall \{i|V_i\in \V\}$. 
		Therefore, $P_{ID}(\y)$ and $P_{\idgen}(\y|\xp)$ having the same factorization and the corresponding conditional distributions being equal implies that $P_{ID}(\y)= P_{\idgen}(\y|\xp)$.

		Now, based on Assumption~\ref{assum:correct_cond}, \idgen learns to sample from each conditional distribution $P_{\idgen}(v_i| v_{\pi}^{(i-1)}, v_{\pi_{\Gp}}^{(i-1)})$ of the product in Equation~\ref{eq:step1-iddag-factor}, by training a conditional model $M_{V_i}$ on samples from the dataset $\data[\Xp, \V]\sim {P(\xp, \vb)}$. 

		For each $V_i$, we add a node containing $V_i$ and its associated conditional generative model $M_{V_i}$ to a sampling network. This produces a sampling network $\h$ that is a DAG, where each variable $V_i \in \V$ has a sampling mechanism. 
		By factorization and Assumption~\ref{assum:correct_cond}, ancestral sampling from this sampling graph produces samples from $P_{\idgen}(\vb|\xp)$. We can drop values of $V\setminus Y$ to obtain the samples from $P_{\idgen}(\y|\xp)$. Since $P_{ID}(\vb)= P_{\idgen}(\vb|\xp)$, both factorized in the same manner and $M_{V_i}$ learned conditional distribution of \idgen's factorization, the sampling network returned by \idgen,
		will correctly sample from the distribution $P_{ID}(\y)$ returned by the ID algorithm for fixed $\Xp=\xp$. Thus, the sampling network returned by \idgen is valid for $P(\y)$.
		
	\end{proof}

	\begin{lemma}
		\label{lem:step6}
		\textbf{\idgen Base case (step 6):} 
  Let, at any recursion level of \idgen, the input dataset $\data[\Xp, \V]$ is sampled from a specific joint distribution $P(\Xp,\V)$, i.e., 
  $\data[\Xp, \V] \sim P(\Xp,\V)$ where $\Xp$ is the set of intervened variables at step 7s.
  Given an identifiable interventional query $P_{\x}(\y)$ over a causal graph $G=(V, E)$,  suppose \idgenpar immediately enters step 6 and returns $\h$. Then $\h$ is a valid sampling network for $P_{\x}(\y)$ with fixed consistent $\Xp= \xp$.
  
	\end{lemma}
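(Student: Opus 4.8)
The proof parallels that of Lemma~\ref{lem:step1}, the only new ingredient being a non-empty intervention set $\X$ that, at Step 6, lies outside the single c-component $S=C(G\setminus\X)$. First, since \idgenpar enters Step 6, Lemma~\ref{lem:mirror} shows that ID$(\Y,\X,P,G)$ also enters Step 6 on the same parameters; Lemma~\ref{lem:same-dag-dag} gives $G_{ID}=G_{\idgen}=\Gp\setminus\Xp$, and Lemma~\ref{lem:same-dist-graph} gives $P_{ID}(\vb)=P_{\idgen}(\vb\mid\xp)$ for every fixed consistent $\Xp=\xp$, where $P_{ID}$ is ID's running distribution and $\data[\Xp,\V]\sim P_{\idgen}(\xp,\vb)$. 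Soundness of ID means that at Step 6 it correctly returns the expression $\sum_{s\setminus\y}\prod_{V_i\in S}P_{ID}(v_i\mid v_{\pi_{G}}^{(i-1)})$, which equals the target $P_{\x}(\y)$; the justification for collapsing the intervention $\Do(\x)$ to conditioning on $\X=\x$ is exactly that $S\in C(G)$ is a single c-component with $\X$ outside it, together with positivity of $P_{\idgen}$ (Assumption~\ref{assum:strict-pos}), and I would simply invoke ID's soundness here rather than re-derive it.

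Second, I would factorize the \idgen side. For fixed consistent $\Xp=\xp$, $P_{\idgen}(\vb\mid\xp)$ factorizes along the topological order $\pi_{\Gp}$ of $\Gp$ (as in the Lemma~\ref{lem:step1} argument, one may pass from $G$ to $\Gp$ since $\Xp$ has no parents in $\Gp$ and only affects its descendants); because $\X$ lies outside the c-component $S$, conditioning further on $\X=\x$ and marginalizing out the variables not in $S$ yields $\sum_{s\setminus\y}\prod_{V_i\in S}P_{\idgen}(v_i\mid v_{\pi_{\Gp}}^{(i-1)})$, where each conditioning set $v_{\pi_{\Gp}}^{(i-1)}$ already contains the realized values of both $\X$ and $\Xp$, since these precede the relevant $V_i\in S$ in $\pi_{\Gp}$. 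By Lemma~\ref{lem:same-dist-graph} the joint distributions agree, so the two factorizations match term by term, giving $P_{ID}(v_i\mid v_{\pi_{G}}^{(i-1)})=P_{\idgen}(v_i\mid v_{\pi_{\Gp}}^{(i-1)})$ for all $V_i\in S$, and hence $\sum_{s\setminus\y}\prod_{V_i\in S}P_{\idgen}(v_i\mid v_{\pi_{\Gp}}^{(i-1)})=P_{\x}(\y)$.

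Third, I would check that the returned $\h$ is a valid sampling network for this distribution. \condgm first adds a bare node for every $V_i\in\X\cup\Xp$, then for each $V_i\in S$ in the order $\pi_{\Gp}$ adds a node carrying a model $M_{V_i}$ trained on $\data$ with $M_{V_i}(v_{\pi_{\Gp}}^{(i-1)})\sim P_{\idgen}(v_i\mid v_{\pi_{\Gp}}^{(i-1)})$ (Assumption~\ref{assum:correct_cond}) and incoming edges from all earlier variables; since all edges point forward in $\pi_{\Gp}$, $\h$ is acyclic, and every node other than those in $\X$ (and, with $\Xp=\xp$ held fixed, the auxiliary $\Xp$-nodes whose values are supplied externally) carries a conditional generative model, establishing the first validity condition. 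Performing ancestral sampling on $\h$ with $\Xp=\xp$ fixed consistently then draws from $\prod_{V_i\in S}P_{\idgen}(v_i\mid v_{\pi_{\Gp}}^{(i-1)})$; discarding the coordinates $\h\setminus\{\X\cup\Y\}$ from each generated sample leaves samples from $\sum_{s\setminus\y}\prod_{V_i\in S}P_{\idgen}(v_i\mid v_{\pi_{\Gp}}^{(i-1)})=P_{\x}(\y)$, which is the second validity condition. The step I expect to require the most care is the bookkeeping of $\pi_{\Gp}$ — verifying that the realized values of both $\X$ and $\Xp$ are always available as inputs to every $M_{V_i}$ — and the clean statement (borrowed from ID's soundness plus positivity) that conditioning on the out-of-component set $\X$ coincides with intervening on it.
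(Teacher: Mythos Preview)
Your proposal is correct and follows essentially the same approach as the paper: you invoke Lemma~\ref{lem:mirror} to align the two algorithms at Step~6, use Lemma~\ref{lem:same-dist-graph} to equate $P_{ID}(\vb)$ with $P_{\idgen}(\vb\mid\xp)$, appeal to ID's soundness for the factorization $\sum_{s\setminus\y}\prod_{V_i\in S}P(v_i\mid v_{\pi}^{(i-1)})$, match the two factorizations term by term (switching from $\pi_G$ to $\pi_{\Gp}$ exactly as the paper does), and then verify that the network built by \condgm is a valid sampling network via Assumption~\ref{assum:correct_cond}. The paper's proof is structurally identical, with your version being slightly more explicit about acyclicity and the role of positivity.
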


	\begin{proof}
		By Lemma \ref{lem:mirror},  both \idgenpar and $ID(\Y,\X, P, G)$ enter the same base case step $6$. By the condition of step $6$, $G\setminus \X$ has only one c-component $\{S\}$, where $S\in C(G)$.

Let $P(\vb)$ be the current distribution of the ID algorithm after performing a series of marginalizations in step 2 and intervention in step 7 on the input observational distribution.	
  Let $\data[\Xp, \V] \sim P'(\xp,\vb)$ be the dataset parameter of \idgen algorithm that went through the same transformations as the ID algorithm in the sample space. $\Xp$ is the set of interventions that are applied on the dataset at step 7s.
   According to Lemma~\ref{lem:same-dist-graph}
for fixed values of $\Xp=\xp$, $P(\vb) = P'(\vb|\xp)$ holds true at any recursion level.

	Since ID algorithm is sound it returns correct output for the query $P_{\x}(\y)$. We prove the soundness of \idgen step 6 by showing that the sampling network \idgen returns, is valid for the output of the ID algorithm.

  %
		%

		With the joint distribution $P(\vb)$, the soundness of ID implies that 
		\begin{equation}
			\label{eq:step6-id-factor}
			P_{\x}(\y)= \sum_{S\setminus Y}\prod_{\{i\mid V_i\in S\}}P(v_i| v_{\pi_G}^{(i-1)})    
		\end{equation}
		where $\pi_G$ is a topological ordering for $G$. 
		
		With the joint distribution $P'(\vb|\xp)$, \idgen can factorize $P'_{\x}(\y|\x)$ in the same manner:
		\begin{equation}
			\label{eq:step6-iddag-factor}
			\begin{split}
				P'_{\x}(\y|\xp)&=
				\sum_{S\setminus Y}\prod_{\{i\mid V_i\in S\}}P'(v_i| v_{\pi_{G}}^{(i-1)}, \xp)  \\
				&= \sum_{S\setminus Y}\prod_{\{i\mid V_i\in S\}}P'(v_i| v_{\pi_{\Gp}}^{(i-1)})  \\
				\text{\hspace{5mm} [Changed the graph from $G$ to $\Gp$} & \text{ since $\Xp \in \Gp$ and only affects the descendants of $\Xp$.]}
			\end{split}
		\end{equation}
		
		Since $P(\vb) = P'(\vb|\xp)$,
		the corresponding conditional distributions in Equation~\ref{eq:step6-id-factor} and ~\ref{eq:step6-iddag-factor} are equal, i.e,  $P(v_i | v_{\pi_{G}}^{(i-1)}) = P'(v_i| v_{\pi_{\Gp}}^{(i-1)}), \forall \{i|V_i\in S\}$. 
		Therefore, $P_{\x}(\y)$ and $P'_{\x}(\y|\xp)$ having the same factorization and the corresponding conditional distributions being equal implies that $P_{\x}(\y) = P'_{\x}(\y|\xp)$.

		\idgen operates in this case by training, from joint samples $\data[\Xp, V]$, a model to correctly sample each $P'(v_i \mid v_{\pi_{\Gp}}^{(i-1)})$ term, i.e., we learn a conditional generative model $M_{V_i}(V_{\pi_{\Gp}}^{(i-1)})$ which produces samples from $P'(v_i \mid v_{\pi_{\Gp}}^{(i-1)})$, which we can do according to Assumption~\ref{assum:correct_cond}. Then we construct a sampling network $\h$ by creating a node $V_i$ with a sampling mechanism $M_{V_i}$ for each $V_i \in S$. We add edges from $V_j\to V_i$ for each $V_j\in V_{\pi_{\Gp}}^{(i-1)}$. Since every vertex in $\Gp$ is either in $S$ or in $\X \cup \Xp$, every edge either connects to a previously constructed node or a variable in $\X \cup \Xp$. 
		Since we already have fixed values for $\Xp=\xp$, when we specify values for $\X$ and sample according to topological order $\pi_{\Gp}$, this sampling graph provides samples from the distribution $\prod_{\{i\mid V_i\in S\}}P'(v_i| v_{\pi_{\Gp}}^{(i-1)})$, i.e. $P'_{\x}(\mbf{s}|\xp)$. 
		
		Since, as shown earlier, $P'_{\x}(\mbf{s}|\xp) = P_{\x}(\mbf{s})$, samples from the sampling network $\h$ is consistent with $P_{\x}(\mbf{s})$ as well.   
		We obtain samples from $P_{\x}(\y)$ by dropping the values of $S\setminus Y$ from the samples obtained from $ P_{\x}(\mbf{s})$. 
		We assert the remaining conditions to show that this sampling network is correct for $P_{\x}(\y)$: certainly this graph is a $DAG$ and every $v\in S$ has a conditional generative model in $\h$. By the conditions to enter step 6, 
		$\Gp=S\cup \X \cup \Xp$ and $S\cap \{\X \cup \Xp \}=\emptyset$. Then every node in $\h$ is either in $S$ or is in $\X \cup \Xp$: hence the only nodes without sampling mechanisms are those in $\X \cup \Xp$ as desired.
		Therefore, when $\Xp$ is fixed as $\xp$, $\h$ is a valid sampling network for $P_{\x}(\y).$
	\end{proof}

\begin{proposition}
		\label{prop:G_Gp_same}
		At any level of the recurison, the graph parameters $G$ and $\Gp$ in \idgenpar have the same topological order excluding $\Xp$.
	\end{proposition}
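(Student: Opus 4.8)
The plan is to reduce the claim to the structural relationship between $G$ and $\Gp$ already established in Lemma~\ref{lem:same-dag-dag}, together with the observation that the variables in $\Xp$ are source nodes of $\Gp$ (they carry no incoming edges). First I would recall from Lemma~\ref{lem:same-dag-dag} that at any recursion level $G = \Gp \setminus \Xp$; that is, $G$ is exactly the vertex-induced subgraph of $\Gp$ on $\V(\Gp)\setminus\Xp$. In particular $G$ and $\Gp$ agree on every directed edge whose two endpoints both lie outside $\Xp$, and $\V(G) = \V(\Gp)\setminus\Xp$.

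Next I would establish, by induction on the recursion level, the invariant that in $\Gp$ every vertex of $\Xp$ has no incoming edge (so, a fortiori, there are no edges among the vertices of $\Xp$). At level $R=0$ this is vacuous since $\Xp=\emptyset$. The only steps that modify $\Xp$ or $\Gp$ are Step~2 and Step~7. At Step~7, $\Xp$ is enlarged to $\Xp\cup\X_Z$ and $\Gp$ is replaced by $\Gp_{\overline{\X_Z}}$, which deletes precisely the incoming edges of $\X_Z$; combined with the inductive hypothesis, every vertex of the new $\Xp$ is again a source. At Step~2 both graphs are restricted to $An(\Y)$, which deletes vertices and edges but never creates an incoming edge at a surviving vertex; moreover $\Xp\subseteq An(\Y)_{\Gp}$ (as used in the proof of Lemma~\ref{lem:same-dist-dist}), so the vertices of $\Xp$ survive. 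Hence the invariant is preserved at every level.

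With these two facts in hand the proposition follows from a short DAG argument. Given any topological order $\pi$ of $\Gp$, deleting the entries that lie in $\Xp$ yields an ordering of $\V(G)$ that respects every edge of $G$, because $G$ is an induced subgraph of $\Gp$ and the restriction of a topological order to an induced subgraph is again a topological order. Conversely, given any topological order $\sigma$ of $G$, prepending the vertices of $\Xp$ in an arbitrary order produces a sequence over $\V(\Gp)$ that is a valid topological order of $\Gp$: the $\Xp$ vertices have no incoming edges (so nothing is required to precede them, and there are no constraints among them), while every other edge of $\Gp$ is an edge of $G$ and is respected by $\sigma$. Thus $G$ and $\Gp$ admit topological orders that coincide once the entries in $\Xp$ are deleted, which is the asserted statement, and this is exactly what justifies the identity $P_{ID}(v_i\mid v_{\pi_G}^{(i-1)}) = P_{\idgen}(v_i\mid v_{\pi_{\Gp}}^{(i-1)})$ invoked in Lemmas~\ref{lem:step1} and~\ref{lem:step6}.

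I expect the only real subtlety to be the bookkeeping for the inductive invariant that $\Xp$ remains a set of sources of $\Gp$ across Steps~2 and~7 — in particular verifying that $\Xp\subseteq An(\Y)_{\Gp}$ so that it is never marginalized away at Step~2. Once that invariant is secured, the topological-order manipulation in the last paragraph is entirely routine.
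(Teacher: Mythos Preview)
Your proposal is correct and follows essentially the same approach as the paper: both reduce to Lemma~\ref{lem:same-dag-dag} to obtain $G = \Gp \setminus \Xp$ and conclude $\pi_G = \pi_{\Gp \setminus \Xp}$. Your argument is in fact more detailed than the paper's (which simply asserts the conclusion directly from $G = \Gp \setminus \Xp$), since you explicitly verify the invariant that the vertices of $\Xp$ are sources in $\Gp$ and spell out the DAG argument for compatibility of topological orders.
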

	\begin{proof}
		
		Let $\pi_G$ be the topological order of $G$ and $\pi_{\Gp}$ be the topological order of $\Gp$.
		At the beginning of the algorithm $G=\Gp$. Thus, $\pi_{G} = \pi_{\Gp}$.
		According to the lemma~\ref{lem:same-dag-dag}, at any level of recursion, $G_{ID} = G_{\idgen}$ and $G_{ID}= \Gp \setminus \Xp$.
		Thus, we have $G = G_{\idgen} = G_{ID} = \Gp \setminus \Xp $.
		Therefore, $\pi_{G} = \pi_{\Gp\setminus \Xp}$.
		

	\end{proof}

\begin{lemma}
    \label{lem:acyclic}
    Let $\h$ be a sampling network produced by \idgen from an identifiable query $P_{\x}(\y)$ over a graph $G$. If $G$ has the topological ordering $\pi$, then every edge in the sampling graph of $\mathcal{H}$ adheres to the ordering $\pi$.
\end{lemma}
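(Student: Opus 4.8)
The plan is to follow the graph parameter $\Gp$ down the recursion and observe that edges are only ever written into $\h$ by \condgm, always pointing forward in $\Gp$'s topological order, and that no step ever modifies $\Gp$ in a way that could introduce an edge going against the original ordering $\pi$.

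First I would establish, by induction on the recursion level $R$, the following invariant: the edge set of $\Gp^R$ (and of $G^R$) is contained in the edge set of the original graph $G$ restricted to $V(\Gp^R)$; consequently the restriction $\pi|_{V(\Gp^R)}$ is a valid topological order of $\Gp^R$, and we may take $\pi_{\Gp^R}$ to be exactly this restriction whenever \condgm is invoked. The base case $R=0$ is immediate since $\Gp^0=G$. For the inductive step, the only steps that change $G$ or $\Gp$ are Step $2$, where both graphs are replaced by their vertex-induced subgraphs on $An(\Y)$, and Step $7$, where (via Algorithm~\ref{alg:step7}:\stepsev) $\Gp$ is replaced by $\Gp_{\{S',\overline{\Xp}\}}$, i.e.\ restricted to the vertices $S'\cup\Xp$ and further stripped of the incoming edges of the newly intervened $\X_Z$ (accumulated into $\Xp$); Steps $1,3,4,5,6$ leave $\Gp$ unchanged on the relevant branch (Step $4$ hands the same $\Gp$ to every c-component sub-call). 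Vertex deletion and edge deletion never create a new edge, so the edge-containment is preserved; this is consistent with Lemma~\ref{lem:same-dag-dag} and Proposition~\ref{prop:G_Gp_same}, which pin down $G^R=\Gp^R\setminus\Xp^R$ and that $G^R,\Gp^R$ share a topological order off $\Xp^R$.

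Next I would account for all edges that can appear in $\h$. Edges enter a sampling network only inside Algorithm~\ref{alg:step1}:\condgm, on the line that adds $V_j\to V_i$ for every $V_j$ preceding $V_i$ in $\pi_{\Gp}$; this occurs at the base cases Step $1$ and Step $6$, and inside \stepsev, but there the constructed network is used only to draw the dataset $\data'$ and is discarded, contributing no edges to the returned network. Algorithm~\ref{alg:construct-dag}:\idmerge adds no new edges: it only identifies an empty node $M_{V_j}=\emptyset$ of one subnetwork with the node $M_{V_k}$ generating the same variable $V_j=V_k$ in another, re-pointing existing edges to the common node. Hence every edge of the final $\h$ is an edge $V_j\to V_i$ produced by some \condgm call at some level $R$; by the invariant $V_j$ precedes $V_i$ in $\pi_{\Gp^R}=\pi|_{V(\Gp^R)}$, hence in $\pi$. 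Merging at Step $4$ cannot break this: all sub-calls spawned there inherit the same $\Gp$, hence the same restriction of $\pi$, and identifying $V_j=V_k$ across subnetworks merely records that they denote the same observed variable, which occupies a single position in $\pi$. Therefore every edge of $\h$ adheres to $\pi$ (so, in particular, $\h$ is acyclic). The hard part will be the bookkeeping in the inductive step, namely pinning down exactly what $\Gp_{\{S',\overline{\Xp}\}}$ denotes at Step $7$ and verifying it is obtained from the previous $\Gp$ only by deleting vertices and edges; once that is settled, the conclusion is a direct consequence of how \condgm orders its edges together with Lemma~\ref{lem:same-dag-dag}.
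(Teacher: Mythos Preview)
Your proposal is correct and follows essentially the same approach as the paper: edges are only added by \condgm at the base cases (Steps $1$ and $6$) according to the topological order of the current graph parameter, and the only graph modifications (Steps $2$ and $7$) produce subgraphs of $G$, so the original ordering $\pi$ remains valid throughout. The paper's proof is considerably terser and does not spell out the bookkeeping you highlight (tracking $\Gp$ explicitly, observing that \idmerge adds no edges, and that the \condgm call inside \stepsev is discarded), but the skeleton of the argument is identical.
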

\begin{proof}
    We consider two factors: which edges are added, and with respect to which graphs. Since the only base cases \idgen enters are steps 1 and 6, the only edges added are consistent with the topological ordering $\pi$ for the graph that was supplied as an argument to these base case calls. The only graph modifications occur in steps 2 and 7, and these yield subgraphs of $G$. 
    Thus the original topological ordering $\pi$ for graph $G$ is a valid topological ordering for each restriction of $G$. Therefore any edge added to $\mathcal{H}$ is consistent with the global topological ordering $\Pi$. 
\end{proof}

\begin{lemma}
\label{lemma:joint-sample-from-net}
	Let $\h$ be a sampling network for random variables $\{V_1, V_2,\hdots V_n\}$ formed by a collection of conditional generative models $M_{V_i}$ relative to $P(\vb)$ for all $V_i$. Then the tuple $(V_1, V_2\hdots V_n)$ obtained by sequentially evaluating each conditional generative model relative to the topological order of the sampling graph is a sample from the joint distribution $\Pi_i P_i(v_i|pa_i)$.
\end{lemma}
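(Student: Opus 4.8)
The plan is to prove this by a straightforward induction on the number of variables, exploiting the chain-rule factorization that underlies ancestral sampling. First I would relabel the variables so that $V_1, V_2, \ldots, V_n$ is a topological ordering of the sampling graph of $\h$; such an ordering exists because, by the definition of a sampling network, that graph is acyclic, and under this relabelling every model's inputs satisfy $\mathcal{P}a_{V_i} = pa_i \subseteq \{V_1, \ldots, V_{i-1}\}$. The sampling procedure in the statement then evaluates $M_{V_1}, M_{V_2}, \ldots, M_{V_n}$ in this order, each time feeding in the already-generated values of that model's parents. I would prove the slightly stronger claim that for every $k \in [n]$, the prefix $(V_1, \ldots, V_k)$ produced after evaluating $M_{V_1}, \ldots, M_{V_k}$ has joint law $\prod_{i=1}^{k} P_i(v_i \mid pa_i)$; the case $k = n$ is the lemma.

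For the base case $k = 1$: since $V_1$ is first in the topological order it has no parents, so $pa_1 = \emptyset$, and by the defining property of a conditional generative model $M_{V_1}()$ outputs a draw from $P_1(v_1)$, which is exactly $\prod_{i=1}^{1} P_i(v_i \mid pa_i)$. For the inductive step, suppose $(V_1, \ldots, V_{k-1})$ has law $\prod_{i=1}^{k-1} P_i(v_i \mid pa_i)$. The model $M_{V_k}$ is evaluated on the realized values of $pa_k$, which are among the already-sampled $\{V_1, \ldots, V_{k-1}\}$. Conditioned on the realized value of $pa_k$, the output $M_{V_k}(pa_k)$ is, by the definition of a conditional generative model, distributed as $P_k(v_k \mid pa_k)$, and — because each $M_{V_i}$ draws using its own independent internal randomness, mirroring the independent exogenous noises in $\mathcal{N}$ — it is conditionally independent of $(V_1, \ldots, V_{k-1})$ given $pa_k$. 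Hence the conditional law of $V_k$ given the whole prefix equals $P_k(v_k \mid pa_k)$, and multiplying this by the inductive hypothesis gives that $(V_1, \ldots, V_k)$ has law $\bigl(\prod_{i=1}^{k-1} P_i(v_i \mid pa_i)\bigr) P_k(v_k \mid pa_k) = \prod_{i=1}^{k} P_i(v_i \mid pa_i)$, completing the induction.

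The only delicate point — the ``hard part,'' such as it is — is the conditional-independence bookkeeping in the inductive step: one must justify that passing the random vector $pa_k$ into $M_{V_k}$ yields, conditionally on $pa_k$, a draw from $P_k(v_k \mid pa_k)$ that does not otherwise depend on the rest of the prefix. This is precisely where one invokes that distinct generative models are run with mutually independent randomness (the analogue, on the sampling side, of the independence of the exogenous variables in the SCM). Everything else is the textbook argument that ancestral sampling along a topological order reproduces the product $\prod_i P_i(v_i \mid pa_i)$, so no further subtleties arise.
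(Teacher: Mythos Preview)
Your proposal is correct and follows essentially the same approach as the paper: relabel into a topological order, observe that each model's parent inputs are already available when it is evaluated, and invoke conditional independence so the joint factorizes as $\prod_i P_i(v_i\mid pa_i)$. Your induction is more explicit than the paper's two-sentence argument, and your careful treatment of the independent internal randomness of the $M_{V_i}$'s actually spells out a point the paper only asserts.
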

\begin{proof}
	Without loss of generality, let $(V_1, V_2, \dots, V_N)$ be a total order that is consistent with the topological ordering over the nodes in $G$. To attain a sample from the joint distribution, sample each $V_i$ in order. When sampling $V_j$, each $V_i$ for all $i < j$ is already sampled, which is a superset of $Pa_j$ (the inputs to $M_{V_j}$) by definition of topological orderings. Thus, all inputs to every conditional generative model $M_{V_j}$ are available during sampling.  Since each $V_j$ is conditionally independent of $V_i\not \in Pa_j$, the joint distribution factorizes as given in the claim.
\end{proof}

\begin{figure}
    \centering
    \centering
        \begin{tikzpicture}[scale=0.85, transform shape]
     \begin{scope}[auto, every node/.style={minimum size=2em,inner sep=4},node distance=2cm]

\node [ align=center, nnh,  fill=green!30] (th8) {Lemma~\ref{lemma:joint-sample-from-net}:\\ $M()$ samples correctly \\ from joint dist.};

\node  [align=center,  nnh, draw,  fill=gray!28, right =0.3cm of th8] (th16) {Lemma~\ref{lem:same-dist-graph}:\\ \idgen data sampled from \\ same distribution as ID};

\node  [align=center,  nnh, draw,  fill=gray!28, above left =0.3cm and  -1.5cm of th16] (th14) {Lemma~\ref{lem:same-dag-dag}:\\
Same graph \\ $G$ as ID};

\node  [align=center,  nnh, draw,  fill=gray!28, above right =0.3cm and  -1.5cm of th16] (th15) {Lemma~\ref{lem:same-dist-dist}:\\ ID and \idgen's relation \\with input $P(V)$};

\node [align=center, nnh, draw, fill=gray!28, right =0.3cm of th16] (th12) {Lemma~\ref{lem:mirror}:\\ Same recursive \\ trace as ID};

\node  [align=center,  nnh, draw, fill=green!30, below =0.3cm of th16] (th18) {Lemma~\ref{lem:step1}:\\ Correctness of \\ Step 1 (Base) };

\node  [align=center,  nnh, draw, fill=green!30, below =0.3cm of th12] (th19) {Lemma~\ref{lem:step6}:\\ Correctness of \\ Step 6 (Base) };

\node  [align=center,  nnh, draw, fill=green!30, right =0.3cm of th19] (th20) {Lemma~\ref{lem:acyclic}:\\ Sampling net \\ follows $G_{\pi}$};

\node  [align=center,  nnh, draw, below =0.3cm of th19] (th22) {Theorem~\ref{th:id-dag-Soundness}:\\ \idgen \\ soundness};

\node  [align=center,  nnh, draw, fill=green!30, right =0.5cm of th20] (th17) {Lemma~\ref{lem:fail}:\\ Correctness of \\ Fail Case};

\node  [align=center,  nnh, draw, above =0.3cm of th17] (th23) {Theorem~\ref{th:id-dag-complete}:\\ \idgen \\ completeness};

\node  [align=center,  nnh, draw, fill=gray!28, above =0.3cm of th23] (th13) {Lemma~\ref{lem:termination}:\\
\idgen \\Termination};

\draw[ thick] (th14) to  (th16); 
\draw[ thick] (th15) to  (th16); \draw[ thick] (th16) to  (th18);
\draw[ thick] (th8) to  (th18);
\draw[ thick] (th12) to  (th14);
\draw[ thick] (th12) to  (th15);
\draw[ thick] (th12) to  (th18);
\draw[ thick] (th12) to  (th19);
\draw[ thick] (th12) to  (th20);
\draw[ thick] (th12) to  (th17);
\draw[ thick] (th17) to  (th23);
\draw[ thick] (th13) to  (th23);
\draw[ thick] (th12) to  (th17);
\draw[ thick] (th18) to  (th22);
\draw[ thick] (th19) to  (th22);
\draw[ thick] (th20) to  (th22);
\draw[ thick] (th17) to  (th22);

  \end{scope}
\end{tikzpicture}
    \caption{Flow Chart of Proofs}
\end{figure}
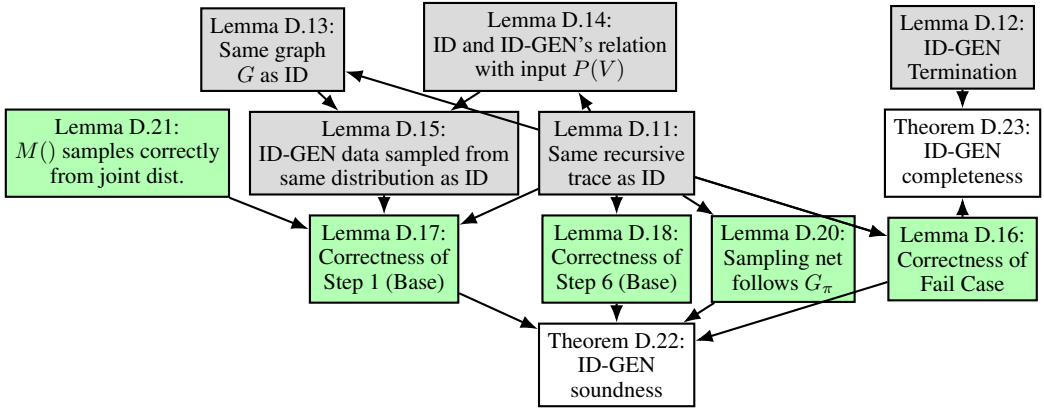

	\begin{theorem}
		\label{th:id-dag-Soundness}
		\textbf{\idgen Soundness:} Let $P_{\x}(\y)$ be an identifiable query given the causal graph $G=(V, E)$ and that we have access to joint samples $\data\sim P(\vb)$. Then the sampling network returned by \idgenpar
		correctly samples from $P_{\x}(\y)$ under Assumption \ref{assum:correct_cond}.
	\end{theorem}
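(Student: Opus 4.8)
The plan is to argue by induction on the (finite, by Lemma~\ref{lem:termination}) recursion tree of \idgenpar, running in parallel with the recursion of the ID algorithm via the correspondence of Lemma~\ref{lem:mirror}. First I would dispose of the failure step: since $P_{\x}(\y)$ is identifiable and ID is complete, ID never reaches step~5 at any recursion level, so by Lemma~\ref{lem:mirror} neither does \idgen, and by Lemma~\ref{lem:termination} every branch of the recursion terminates at step~1 or step~6. These are the base cases, handled respectively by Lemma~\ref{lem:step1} and Lemma~\ref{lem:step6}: combined with Lemmas~\ref{lem:same-dag-dag}, \ref{lem:same-dist-dist} and~\ref{lem:same-dist-graph}, which certify that at every recursion level \idgen's graph parameter equals ID's and its dataset $\data[\Xp,\V]$ is drawn from (the joint extension of) ID's current distribution parameter, they show that the returned sampling network is valid for the query at that level once the step-7 intervened values $\Xp$ are held fixed and consistent.

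For the inductive step I would treat the recursive steps 2, 3, 4 and 7, assuming each inner \idgen call returns a valid sampling network for its subquery. Steps~2 and~3 are immediate from the soundness of the corresponding ID steps: dropping non-ancestors of $\Y$ leaves $P_{\x}(\y)$ unchanged (it equals $P_{\x\cap An(\Y)_G}(\y)$ computed in $G_{An(\Y)}$), and adding $\mathbf{W}=(\V\setminus\X)\setminus An(\Y)_{G_{\overline{\X}}}$ to the intervention set leaves it unchanged by rule~3 of the do-calculus; hence a network valid for the reduced query is valid for $P_{\x}(\y)$ after the newly introduced input nodes are supplied with consistent values. Step~7 rests on the ID identity that evaluating $P_{\x}(\y)$ from $P(\vb)$ coincides with evaluating $P_{\x\cap S'}(\y)$ from $P'(\vb)\coloneqq P_{\x_Z}(\vb)$: \stepsev produces a dataset $\data'\sim P_{\x_Z}(\vb)$ by calling \condgm (the base-case step~6 situation, whose correctness is the Proposition on \condgm inside \stepsev) and hands it, with $\Xp$ updated to $\Xp\cup\X_Z$, to the next recursive call, which by the inductive hypothesis returns a network valid for $P_{\x\cap S'}(\y)$ on $\data'$ and hence for $P_{\x}(\y)$. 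The one delicate point here is that \idgen samples $\X_Z$ from a marginal (or uniform) distribution rather than pinning it to a fixed value; this is legitimate precisely because the Verma constraint makes the restriction of $P_{\x_Z}(\vb)$ to $S'$ independent of the particular value $\x_Z$, so the conditional model trained on $\data'[\Xp,S']$ reproduces $P_{\x_Z}(\cdot)$ for whatever value of $\X_Z$ arrives during ancestral sampling in the assembled network.

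The crux of the argument, and the step I expect to be the main obstacle, is step~4 --- the c-component factorization and the merge. Here I would apply Lemma~\ref{bglemma:c-factorization} to write $P_{\x}(\y)=\sum_{\vb\setminus(\y\cup\x)}\prod_i P_{\vb\setminus s_i}(s_i)$, where the $S_i$ are the c-components of $G\setminus\X$; by the inductive hypothesis each recursive call returns a sampling network $\h_i$ valid for the $i$-th c-factor $P_{\vb\setminus s_i}(s_i)$. Since the $S_i$ partition $\V\setminus\X$, the merge (Algorithm~\ref{alg:construct-dag}) assigns to each variable of $\V\setminus\X$ exactly one generative model --- the one from the $\h_i$ whose component contains it --- and the empty input nodes of each $\h_i$ are filled by the models contributed by the others. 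Acyclicity of the merged network follows from Lemma~\ref{lem:acyclic}: every edge of every $\h_i$ respects the single topological order $\pi$ of $G$, so their union does too. For correctness of the joint law I would argue that, with $\X=\x$ clamped, ancestral sampling on the merged network yields (Lemma~\ref{lemma:joint-sample-from-net}) the product $\prod_{V_j\in\V\setminus\X}M_{V_j}(\cdot)$; grouping the factors by c-component and invoking the defining property of each valid $\h_i$ --- namely that $\prod_{V_j\in S_i}M_{V_j}(\cdot)$ equals $P_{\vb\setminus s_i}(s_i)$ at the realized argument values --- this product equals $\prod_i P_{\vb\setminus s_i}(s_i)$, the summand of the factorization. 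Marginalizing out the coordinates $\V\setminus(\Y\cup\X)$ then recovers $P_{\x}(\y)$ by Lemma~\ref{bglemma:c-factorization}.

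It remains to close the induction at the top level, where $\Xp=\emptyset$: the returned $\h$ is a DAG, each node outside $\X$ carries a conditional generative model, and the $\Y$-marginal of its samples with $\X=\x$ clamped equals $P_{\x}(\y)$, so $\h$ is a valid sampling network for $P_{\x}(\y)$. Finally, invoking Assumption~\ref{assum:correct_cond} that every model trained by \idgen samples exactly from its target conditional, ancestral sampling on $\h$ correctly samples from $P_{\x}(\y)$, which is the claim.
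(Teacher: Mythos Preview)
Your proposal is correct and follows essentially the same approach as the paper's proof: structural induction on the recursion tree, with the same base cases (steps~1 and~6 via Lemmas~\ref{lem:step1} and~\ref{lem:step6}), the same dismissal of step~5 via identifiability, and the same treatment of the inductive steps~2, 3, 4, 7, invoking Lemma~\ref{lem:acyclic} for the merge in step~4 and the $P_{\x_Z}$ dataset-generation argument for step~7. One small point: your phrasing that the Verma constraint makes $P_{\x_Z}(\vb)$ restricted to $S'$ \emph{independent} of $\x_Z$ is slightly stronger than what is needed or what the paper argues; the paper (and the actual mechanism) instead records the sampled $\X_Z$ values in $\Xp$ and $\data'$ so that downstream models are trained \emph{conditionally} on them, with the Verma constraint only guaranteeing that the final $P_{\x}(\y)$ is invariant to which $\x_Z$ is realized at sampling time.
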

	
\emph{Proof sketch:}
Suppose that $P_{\x}(\y)$ is the input causal query and Assumptions~\ref{assum:semi-mark},~\ref{assum:admg},~\ref{assum:correct_cond},~\ref{assum:strict-pos} hold.
The soundness of \idgen implies that if the trained conditional models converge to (near) optimality, \idgen returns the correct samples from $P_{\x}(\y)$. For each step of the ID algorithm that deals with probabilities of discrete variables,  multiple actions are performed in the corresponding step of \idgen to correctly train conditional models to sample from the corresponding distributions. 
\idgen merges these conditional models according to the topological order of $G$, to build the final sampling network $\h$.
Therefore, according to structural induction, when we intervened on $\X$ and perform ancestral sampling in $\h$  each model in the sampling network will contribute correctly to generate samples from $P_{{x}}({y})$.


	\begin{proof}
		We proceed by structural induction. We start from the base cases, i.e., the steps that do not call \idgen again. \idgen only has three base cases: step 1 is the case when no variables are being intervened upon and is covered by Lemma \ref{lem:step1}; step 6 is the other base case and is covered by Lemma \ref{lem:step6}; step 5 is the non-identifiable case and since we assumed that $P_{\x}(\y)$ is identifiable, 
  we can skip \idgen's step 5.
  
  The structure of our proof is as follows. By 
		the assumption that $P_{\x}(\y)$ is identifiable and due to Lemma \ref{lem:termination}, its recursions must terminate in steps 1 or 6. Since we have already proven correctness for these cases, we use these as base cases for a structural induction. We prove that if \idgen enters any of step 2, 3, 4 or 7, under the inductive assumption that we have correct sampling network for the recursive calls, we can produce a correct overall sampling network. The general flavor of these inductive steps adheres to the following recipe: i) determine the corresponding recursive call that ID algorithm makes; ii) argue that we can generate the correct dataset to be analogous to the distribution that ID uses in the recursion; iii) rely on the inductive assumption that the generated DAG from \idgen's recursion is correct.

		We consider each recursive case separately. We start with step 2. Suppose \idgenpar enters step 2, then 
		according to Lemma~\ref{lem:mirror}, $ID(\Y,\X, P, G)$ enters step 2 as well.
		Hence the correct distribution to sample from is provided by ID step 2: 
		\[P_{\x}(\y)= ID(\Y,\X\cap An(Y)_G, \sum_{\V\setminus An(\Y)_G} P(\vb), G_{An(\Y)}).\]

Now, according to Lemma~\ref{lem:same-dist-graph}, at the current step, the dataset $\data$ of \idgen is sampled from a distribution $P'(\vb|\xp)$ such that 
for fixed values of $\Xp=\xp$, $P(\vb) = P'(\vb|\xp)$ holds true.
  Following our recipe, we need to update the dataset $\data$ such that it is sampled from $\sum_{V\setminus An(Y)_G} P(\vb)
  =\sum_{V\setminus An(Y)_G} P'(\vb|\xp)
  $.
		We do this by dropping all non-ancestor variables  of $\Y$ (in the graph $G$) from the dataset $\data$, thereby attaining samples from the joint distribution $\sum_{\V\setminus An(\Y)_G} P(\vb)$. 
		Since $\Gp$ is used at the base case, we update it as $\Gp_{An(\Y)}$, the same as $G_{An(\Y)}$ to propagate the correct graph at the next step. Therefore, we can generate the sampling network from $\idgen(\Y, \X\cap An(\Y)_G, G_{An(\Y)}, \data[An(Y)],  \Xp, 
		\Gp_{An(Y)})$ by the inductive assumption and simply return it.
		
		Next, we consider step 3. Suppose \idgenpar enters step 3. Then by Lemma~\ref{lem:mirror}, $ID(\Y, \X, P, G)$ enters step 3, and the correct distribution to sample from is provided from ID step 3 as
		\[
		P_{\x}(\y) = ID(\Y, \X\cup W, P, G)
		\]
		where $W:= (\V\setminus \X)\setminus An(\Y)_{G_{\overline{\X}}}$. Since the distribution passed to the recursive call is $P$, we can simply return the sampling graph generated by \idgenpar, which we know is correct for $P_{\X\cup \mbf{W}}(\Y)$ by the inductive assumption.
		Thus, the returned sampling network by \idgen can sample from $P_{\x}(\y)$.
		While we do need to specify a sampling mechanism for $\mbf{W}$ to satisfy our definition of a valid sampling network, this can be chosen arbitrarily, say $\mbf{W}\sim P(\mbf{w})$ or uniform the distribution. 
		
		
		Next we consider step 4. Suppose \idgenpar enters step 4. Then by Lemma~\ref{lem:mirror}, $ID(\Y,\X, P, G)$ enters step 4 and the correct distribution to sample from is provided from ID step 4 as:
		\[
		\sum_{V\setminus(y\cup x)}\prod_i ID(s_i, v\setminus s_i, P, G)
		\]
		where $S_i$ are the c-components of $G\setminus \X$, i.e., elements of $C(G\setminus \X)$. By the inductive assumption, we can sample from each term in the product with the sampling network returned by $\idgen(S_{i}, \mathbf{X}=\mathbf{V} \setminus S_{i}, G, \data, {\Xp}, \Gp)$. However, recall the output of \idgen: \idgen returns a `headless' (no conditional models for $\X$) sampling network as follows: 
		
		\idgenpar returns a sampling network, i.e., a collection of conditional generative models where for each variable in $G$ and every variable 
		except those in $\X$ have a specified conditional generative model. To sample from this sampling network, values for $\X$ must first be specified. In the step 4 case, the values $\vb \setminus s_i$ need to be provided to sample values for $S_i$, and similarly for $i\neq j$, values $\vb \setminus s_j$ are needed to sample values for $S_j$. Since $S_i\subseteq (\V \setminus S_j)$ and $S_j\subseteq (\V \setminus S_i)$, it might lead to cycles (as shown in Example~\ref{ex:cyclic})
  if we attempt to generate samples for each c-components sequentially.
  Thus, it does not suffice to sample from each c-component sequentially or separately.
		
Note that, $H_i$ is the correct sampling network corresponding to 
		$\idgen(S_{i}, \mathbf{X}=\mathbf{V} \setminus S_{i}, G,  \data, {\Xp}, \Gp)$
		by definition, for each node $V_i\in S_i$, $V_i$ has a conditional generative model in $H_i$. By Lemma \ref{lem:acyclic}, each edge in $H_i$ adheres to the topological ordering $\pi_G$ (at the current level). Hence, if we apply \idmerge to construct a graph $\mathcal{H}$ from $\{H_i\}_i$, it will also adhere to the original topological ordering $\pi_G$. Thus, $\mathcal{H}$ is a DAG. 
  
  Since every node $V_i$ in $G\setminus \X$ has a conditional generative model in some $H_i$,
  the only nodes in combined $\mathcal{H}$ without conditional generative models are those in $\X$.
  Finally, since each node in $\mathcal{H}$ samples the correct conditional distribution by the inductive assumption, $\mathcal{H}$ samples from the product distribution $P_{\x}(\y)$ corretly. The sum $\sum_{\vb\setminus(\y\cup \x)}$ can be safely ignored now and can be applied later since the sample values of the marginalized variables ($\vb\setminus(\y\cup \x)$) can be dropped from the joint at the end of the algorithm to attain samples values of the remaining variables. Hence $\mathcal{H}$ is correct for $P_{\x}(\y)$.
		
		Step 5 can never happen by the 
		assumption that $P_{\x}(\y)$ is identifiable, and step 6 has already been covered as a base case. The only step remaining is step 7.

	\begin{figure}[t!]
    \centering
    \centering
        \begin{tikzpicture}[scale=0.85, transform shape]
     \begin{scope}[auto, every node/.style={minimum size=2em,inner sep=4},node distance=2cm]

\node [ align=center, nnh, fill=gray!28] (th8) {Lemma~\ref{lemma:joint-sample-from-net}:\\ $M()$ samples correctly \\ from joint dist.};

\node  [align=center,  nnh, draw,  fill=gray!28, right =0.3cm of th8] (th16) {Lemma~\ref{lem:same-dist-graph}:\\ \idgen data sampled from \\ same distribution as ID};

\node  [align=center,  nnh, draw,  fill=gray!28, above left =0.3cm and  -1.5cm of th16] (th14) {Lemma~\ref{lem:same-dag-dag}:\\
Same graph \\ $G$ as ID};

\node  [align=center,  nnh, draw,  fill=gray!28, above right =0.3cm and  -1.5cm of th16] (th15) {Lemma~\ref{lem:same-dist-dist}:\\ ID and \idgen's relation \\with input $P(V)$};

\node [align=center, nnh, draw, fill=gray!28, right =0.3cm of th16] (th12) {Lemma~\ref{lem:mirror}:\\ Same recursive \\ trace as ID};

\node  [align=center,  nnh, draw, fill=gray!28, below =0.3cm of th16] (th18) {Lemma~\ref{lem:step1}:\\ Correctness of \\ Step 1 (Base) };

\node  [align=center,  nnh, draw, fill=gray!28, below =0.3cm of th12] (th19) {Lemma~\ref{lem:step6}:\\ Correctness of \\ Step 6 (Base) };

\node  [align=center,  nnh, draw, fill=gray!28, right =0.3cm of th19] (th20) {Lemma~\ref{lem:acyclic}:\\ Sampling net \\ follows $G_{\pi}$};

\node  [align=center,  nnh, draw, fill=green!30, below =0.3cm of th19] (th22) {Theorem~\ref{th:id-dag-Soundness}:\\ \idgen \\ soundness};

\node  [align=center,  nnh, draw, fill=gray!28, right =0.5cm of th20] (th17) {Lemma~\ref{lem:fail}:\\ Correctness of \\ Fail Case};

\node  [align=center,  nnh, draw, fill=green!30, above =0.3cm of th17] (th23) {Theorem~\ref{th:id-dag-complete}:\\ \idgen \\ completeness};

\node  [align=center,  nnh, draw, fill=gray!28, above =0.3cm of th23] (th13) {Lemma~\ref{lem:termination}:\\
\idgen \\Termination};

\draw[ thick] (th14) to  (th16); 
\draw[ thick] (th15) to  (th16); \draw[ thick] (th16) to  (th18);
\draw[ thick] (th8) to  (th18);
\draw[ thick] (th12) to  (th14);
\draw[ thick] (th12) to  (th15);
\draw[ thick] (th12) to  (th18);
\draw[ thick] (th12) to  (th19);
\draw[ thick] (th12) to  (th20);
\draw[ thick] (th12) to  (th17);
\draw[ thick] (th17) to  (th23);
\draw[ thick] (th13) to  (th23);
\draw[ thick] (th12) to  (th17);
\draw[ thick] (th18) to  (th22);
\draw[ thick] (th19) to  (th22);
\draw[ thick] (th20) to  (th22);
\draw[ thick] (th17) to  (th22);

  \end{scope}
\end{tikzpicture}
    \caption{Flow Chart of Proofs}
\end{figure}
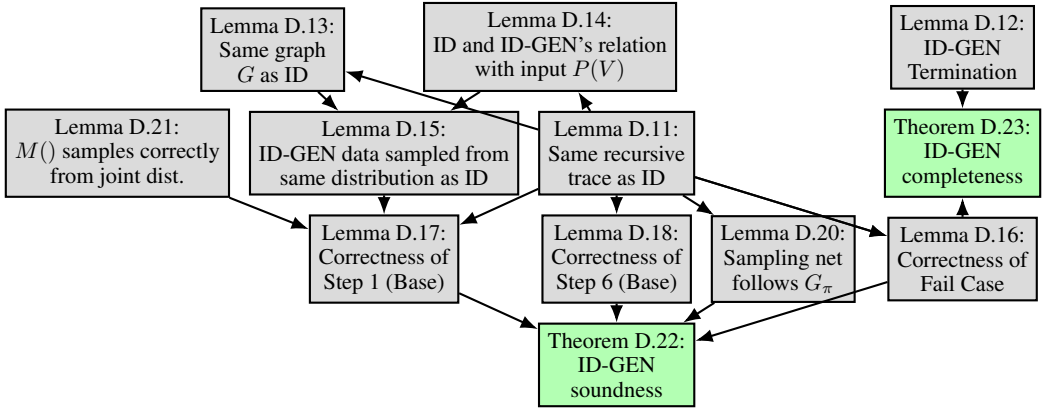

		Lemma~\ref{lem:mirror} says, \idgenpar enters step 7 by the same conditions $ID(\Y,\X, P, G)$ enters step 7. Then by assumption, $C(G\setminus \X)=\{S\}$ and there exists a confounding component $S'\in C(G)$ such that $S\subset S'$. The correct distribution to sample from is provided from ID step 7 as 
		\[
		P_{\x}(\y) = ID(\Y, \X\cap S', P', G_{S'})
		\]
		where 
		\[P' := \prod_{\{i|V_i\in S'\}} P(V_i| V_\pi^{(i-1)}\cap S', v_\pi^{(i-1)}\setminus S').\]
		Examining ID algorithm more closely, if we enter step 7 during ID, the interventional set $\X$ is partitioned into two components: $\X\cap S'$ and $\X_Z:=\X\setminus S'$. From Lemmas 33 and 37 of \citet{shpitser2008complete}, in the event we enter step 7, $P_{\x}(\y)$ is equivalent to $P'_{\x \cap S'}(\y)$ where $P'(\vb)=P_{\x_Z}(\vb)$. 
		ID estimates $P_{\x_Z}(\vb)$ with a similar computation as the step 6 base case.

  
 To sample correctly in \idgen, we consider two cases.\\
 i) $\Xp=\emptyset$: When \idgen visits step 7 for the first time, we have $\Xp=\emptyset$. In that case,  we first update our dataset $\data[\V]\sim P(\vb)$ to samples from $\data'\sim P_{\x_Z}( \vb)$, and then we recurse on the query $P'_{\x\cap S'}(\y)$ over the graph $G_{S'}$.
Also, $\X_Z$ is outside the c-component $S'$. Therefore we generate a dataset from $\data'\sim P_{\x_Z}(S')$ via running directly the $\condgm$ of the \idgen algorithm, Algorithm~\ref{alg:step1}: 
$\texttt{ConditionalGMs}(S', X_Z, G, \data, \Xp, \Gp)$. This is attainable via the inductive assumption and Lemma \ref{lem:termination}. The only divergence from ID during the generation of $\data'$ is that ID presumes pre-specified fixed values for $\X_Z$, where we train a sampling mechanism that is agnostic a priori to the specific choice of $\X_Z$. To sidestep this issue, we generate a dataset with all possible values of $\X_Z$ and be sure to record the values of $\X_Z$ in the dataset $\data'[\X_Z, S']$. 

ii) $\Xp \neq \emptyset$:  When \idgen has visited step 7 already once and thus $\Xp\neq \emptyset$. We consider $\Xp$ along with $\X_Z$ when generating $\data'$ this time. 
More precisely, we update our dataset $\data[\Xp, \V]\sim P(\xp, \vb)$ to samples from $\data'\sim P_{\x_Z \cup \xp}(\x_z, \xp, \vb)$.
Rest of the steps follow similarly as the above case. We record the new $\X_Z$ in $\Xp$ and carry them in $\data'[\Xp, S']$ and $\Gp$.

  
		Next, we need to map the recursive call $ID(\Y, \X\cap S', P', G)$ to \idgen. \idgen sends the same parameters $\Y, \X\cap S'$ and $G$ as ID.  Now, equivalent to passing the distribution $P'$ of ID, we pass the dataset  $\data[\Xp, S']$ sampled from this distribution, including the intervened values for $\Xp$ used to obtain this dataset. According to Lemma~\ref{lem:same-dist-graph}, this dataset is sampled from $P'$ if we fix to a specific value $\Xp= \xp$. Finally, ID algorithm uses specific value of $\X_Z$ and then ignores those variables from $\X$ and $G$ for rest of the recursion. On the other hand, \idgen saves $\X_Z$ in $\Xp$ as $\Xp= \Xp \cup \X_Z$ and keeps $\Xp$ connected to $\Gp$ with incoming edges cut, i.e.,  $G_{S',\overline{\Xp}}$ for the next recursive calls since \idgen utilizes $\Xp$ and the topological order in $\Gp$ at the base cases (step 1 and 6).
		By the inductive assumption, we can generate a correct sampling network from the call $\idgen(\mbf{Y}, \mbf{X} \setminus \X_Z , G_{S'},
		\data[\Xp, S], {\Xp},  \Gp_{\{S', \overline{\Xp}\}} $), and hence the returned sampling graph is correct for $P_{\x}(\y).$
		
		Since we have shown that every recursion of \idgen ultimately terminates in a base case, that all the base cases provide correct sampling graphs, and that correct sampling graphs can be constructed in each step assuming the recursive calls are correct, we conclude that \idgen returns the correct sampling graph for $P_{\x}(\y).$
		
	\end{proof}
	
	\begin{theorem}
 \label{th:id-dag-complete}
		\idgen is complete.
	\end{theorem}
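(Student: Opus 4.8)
The plan is to obtain completeness almost for free from the machinery already assembled. First I would pin down the precise meaning of completeness for an identification procedure: it should output \texttt{FAIL} exactly on the non-identifiable queries, i.e., (i) it never fails on an identifiable query, and (ii) it always fails on a non-identifiable one. Showing both directions, together with the completeness of the ID algorithm of \citet{shpitser2008complete}, yields the theorem.

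For direction (ii), I would invoke completeness of ID: if $P_{\x}(\y)$ is not identifiable from $G$, then $\mathrm{ID}(\Y,\X,P,G)$ reaches its Step 5 and throws \texttt{FAIL}. By the recursive-trace equivalence of Lemma~\ref{lem:mirror}, \idgenpar visits exactly the same sequence of steps, hence also reaches Step 5 and throws \texttt{FAIL}. This is precisely the content of Lemma~\ref{lem:fail}, so this direction is already in place.

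For direction (i), I would appeal to soundness: when $P_{\x}(\y)$ is identifiable, Theorem~\ref{th:id-dag-Soundness} (under Assumption~\ref{assum:correct_cond}) guarantees that \idgenpar returns a sampling network that correctly samples from $P_{\x}(\y)$, and in particular does not output \texttt{FAIL}. A second, self-contained way to see this is via Lemma~\ref{lem:termination}: the recursion must terminate in Step 1, 5, or 6, and by Lemma~\ref{lem:mirror} it terminates in whichever step ID does; for an identifiable query ID never terminates in Step 5, so neither does \idgen. Combining the two directions, \idgen outputs \texttt{FAIL} if and only if the input query is non-identifiable, which is exactly completeness.

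I do not expect a genuine obstacle here, since all the substantive work is carried out in Lemmas~\ref{lem:mirror}, \ref{lem:termination}, \ref{lem:fail} and Theorem~\ref{th:id-dag-Soundness}. The only point that needs care is to state completeness as the two-sided equivalence and to observe that the soundness theorem's guarantee of a \emph{valid} sampling network implicitly rules out a \texttt{FAIL} output on identifiable inputs; with that observation, completeness of ID transfers verbatim to \idgen, and the proof is a short paragraph chaining the cited results.
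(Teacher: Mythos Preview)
Your proposal is correct and essentially matches the paper's approach. The paper's proof establishes only the implication ``\idgen fails $\Rightarrow$ non-identifiable'' via Lemma~\ref{lem:mirror} and the completeness of ID, which is exactly the contrapositive of your direction (i) Route B; your direction (ii) is the paper's Lemma~\ref{lem:fail}, and your two-sided framing simply packages both pieces together.
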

	
	
	\begin{proof}
		Suppose we are given a causal query $P_{\x}(\y)$ as input to sample from.
		We prove that if \idgen fails, then the query $P_{\x}(\y)$ is non identifiable, implying that it is not possible to train conditional models on observational data and correctly sample from the interventional distribution $P_{\mathbf{x}}(\mathbf{y})$.
		
		If \idgen reaches step 5, it returns \texttt{FAIL}. According to the lemma~\ref{lem:mirror}, ID reaches at step 5 if and only if \idgen reaches at step 5. Step 5 is also the \texttt{FAIL} case of the ID algorithm. Since ID is complete and returns \texttt{FAIL} at Step 5, the query $P_{x}(y)$ is not identifiable.
		
		
	\end{proof}

	\section{Conditional interventional sampling}
	
	\label{appex:cond-intv-samp}
	\textbf{Conditional sampling:}
	Given a conditional causal query $P_{x}({y}|{z})$,
	we sample from this conditional interventional query by calling Algorithm~\ref{alg:IDC-DAG}: \idcdag. 
	This function finds the maximal set ${\alpha }\subset {Z}$ such that we can apply rule-2 and move ${\alpha}$ from conditioning set ${Z}$ and add it to intervention set ${X}$. Precisely, $P_{{x}}({y}|{z})= P_{{x\cup \alpha}}({y}|{z\setminus \alpha}) =  \frac{P_{{x\cup \alpha}}({y},{z\setminus \alpha})}{P_{{x\cup \alpha}}({z\setminus \alpha})} $. 
	Next, Algorithm~\ref{alg:id-dag}: $\mathrm{\idgen(.)}$ is called to obtain the sampling network that can sample from the interventional joint distribution $P_{{x\cup \alpha}}({y},{z\setminus \alpha})$. 
	We use the sampling network to generate samples $\data'$ through feed-forward.
	A new conditional model $M_{{Y}}$ is trained on $\data'$ that takes ${Z\setminus \alpha}$ and $X\cup \alpha$ as input and outputs ${Y}$. Finally, we generate new samples with $M_{{Y}}$ by feeding input values such that $Y\sim P_{{x\cup \alpha}}({y},{z\setminus \alpha})$ i.e,  $Y\sim P_{x}(y|z)$.

	\begin{theorem}[\citet{shpitser2008complete}]
		\label{theorem20}
		For any $G$ and any conditional effect $P_{{X}}({Y}|{W})$ there exists a unique maximal set
		${Z}= \{Z \in {W}|P_{{X}}({Y}|{W}) = P_{{X,Z}}({Y|W} \setminus {Z})\}$ such that rule 2 applies to ${Z}$ in $G$ for $P_{{X}}({Y}|{W})$. In other words, $P_{{X}}({Y}|{W}) = P_{{X,Z}}({Y|W \setminus Z)}$.
	\end{theorem}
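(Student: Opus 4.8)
\emph{Proof proposal.} The plan is to reduce the statement to a single closure property, from which existence, uniqueness and maximality all follow at once. Throughout I read ``rule~2 applies to $Z\subseteq W$'' as the assertion that the d-separation $(Y\indep Z \mid X, W\setminus Z)$ holds in the mutilated graph $G_{\overline{X}\,\underline{Z}}$; by soundness of do-calculus this already delivers the equality $P_{X}(Y\mid W)=P_{X,Z}(Y\mid W\setminus Z)$ in every distribution compatible with $G$. Let $\mathcal{Z}$ be the collection of all such $Z$. Then $\emptyset\in\mathcal{Z}$ trivially, and since $W$ is finite it suffices to prove $\mathcal{Z}$ is closed under pairwise union: the union $Z^{\star}:=\bigcup_{Z\in\mathcal{Z}}Z$ then lies in $\mathcal{Z}$ and contains every member of $\mathcal{Z}$, hence is the unique maximal element, and rule~2 applied to $Z^{\star}$ yields the displayed equality. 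So the whole theorem comes down to showing $Z_1,Z_2\in\mathcal{Z}\implies Z_1\cup Z_2\in\mathcal{Z}$.

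For the closure step set $A:=W\setminus(Z_1\cup Z_2)$ and $G^{\star}:=G_{\overline{X}\,\underline{Z_1\cup Z_2}}$, so that $Z_1\cup Z_2\in\mathcal{Z}$ is precisely $(Y\indep Z_1\cup Z_2\mid X,A)_{G^{\star}}$. I would assemble this from three elementary facts about d-separation: \textbf{(i)} deleting edges from a graph preserves every d-separation statement (a surviving path keeps its blocking non-colliders and can only lose descendants at a collider); \textbf{(ii)} in $G^{\star}$ every node of $Z_1\cup Z_2$ has lost its outgoing edges, so any such node that is internal to a path must be a collider there, and therefore adding a subset $S\subseteq Z_1\cup Z_2$ to the conditioning set can only \emph{unblock} paths (by activating such colliders), never block them — hence for any $B$, $(Y\indep B\mid X,A,S)_{G^{\star}}$ implies $(Y\indep B\mid X,A)_{G^{\star}}$; \textbf{(iii)} d-separation satisfies composition, $(Y\indep B_1\mid C)$ and $(Y\indep B_2\mid C)$ imply $(Y\indep B_1\cup B_2\mid C)$. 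Now $Z_1\in\mathcal{Z}$ means $(Y\indep Z_1\mid X,W\setminus Z_1)_{G_{\overline{X}\,\underline{Z_1}}}$; since $W\setminus Z_1$ is the disjoint union of $A$ and $Z_2\setminus Z_1$ and $G^{\star}$ is an edge-subgraph of $G_{\overline{X}\,\underline{Z_1}}$, fact~(i) gives $(Y\indep Z_1\mid X,A,Z_2\setminus Z_1)_{G^{\star}}$ and then fact~(ii) (with $S=Z_2\setminus Z_1$) gives $(Y\indep Z_1\mid X,A)_{G^{\star}}$. Symmetrically $(Y\indep Z_2\mid X,A)_{G^{\star}}$, and fact~(iii) yields $(Y\indep Z_1\cup Z_2\mid X,A)_{G^{\star}}$, i.e.\ $Z_1\cup Z_2\in\mathcal{Z}$.

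I expect the real work to be in fact~(ii): one must check, by a careful path analysis that distinguishes endpoints from internal vertices and uses that $X,Y,W$ are pairwise disjoint, that after the $\underline{Z_1\cup Z_2}$-mutilation a vertex of $Z_2\setminus Z_1$ is never a non-collider on a path and, when not conditioned on, is a childless collider that blocks the path outright — so the status of every path routed through $Z_2\setminus Z_1$ is genuinely independent of whether $Z_2\setminus Z_1$ lies in the conditioning set. This is exactly the place where the ``cut outgoing edges'' feature of rule~2 is essential; without it the argument fails. A secondary, lighter obstacle is to reconcile $\mathcal{Z}$ with the set literally written in the theorem: the inclusion $\mathcal{Z}\subseteq\{Z: P_X(Y\mid W)=P_{X,Z}(Y\mid W\setminus Z)\}$ is the soundness remark above, and the reverse inclusion for the maximal such $Z$ \emph{witnessed by rule~2} is precisely the maximality of $Z^{\star}$ just proved; I would state this reconciliation explicitly so the object produced is the one the theorem names.
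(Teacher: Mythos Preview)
The paper does not prove this theorem at all; it is quoted from \citet{shpitser2008complete} and invoked as a black box in the \idcdag\ soundness argument. So there is no in-paper proof to compare your proposal against.

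On its own merits your union-closure strategy is correct and is essentially the standard route. Facts (i) and (iii) are textbook, and your path analysis behind fact (ii) is right: once outgoing edges from $Z_1\cup Z_2$ are deleted, any internal occurrence of such a vertex on a path is forced to be a collider, so dropping $S\subseteq Z_1\cup Z_2$ from the conditioning set can never unblock a path (your disjointness remarks about endpoints are exactly what is needed here). This yields a unique maximal $Z^\star\in\mathcal{Z}$, which is the substantive content of the theorem.

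The one place I would push back is your final ``lighter obstacle.'' The theorem's set-builder describes the maximal set as $\{z\in W:\text{rule 2 applies to the singleton }\{z\}\}$. Identifying this with your $Z^\star$ is not just soundness plus maximality: you also need that $\mathcal{Z}$ is downward closed to singletons, i.e.\ $Z\in\mathcal{Z}$ and $z\in Z$ imply $\{z\}\in\mathcal{Z}$. This does \emph{not} fall out of (i)--(iii), because moving from $G_{\overline{X}\,\underline{Z}}$ to $G_{\overline{X}\,\underline{z}}$ \emph{restores} edges and (i) goes the wrong way. It is still true --- from $(Y\indep Z\mid X,W\setminus Z)_{G_{\overline X\,\underline Z}}$ weak union gives $(Y\indep z\mid X,W\setminus\{z\})_{G_{\overline X\,\underline Z}}$, and then any putative active $Y$--$z$ path in $G_{\overline X\,\underline z}$ cannot visit $Z\setminus\{z\}$ internally (those vertices are conditioned-on non-colliders there), so the path and its collider-activating descendant chains (truncated at the first $Z\setminus\{z\}$ vertex) survive in $G_{\overline X\,\underline Z}$, a contradiction --- but this is a genuine extra step, not a throwaway remark. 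Flag it explicitly rather than folding it into the reconciliation sentence.
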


	\begin{theorem}[\citet{shpitser2008complete}]
		\label{theorem21}
		Let $P_{{X}}({Y}|{W})$ be such that every $W \in  {W} $ has a back-door path to $Y$ in $G\setminus {X}$ given
		${W}\setminus \{W\}$. Then $P_{{X}}({Y|W})$ is identifiable in $G$ if and only if $P_{{X}}({Y, W})$ is identifiable in $G$.
	\end{theorem}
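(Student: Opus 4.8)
The plan is to prove the two implications separately; the ``if'' direction is routine, and the ``only if'' direction carries all the weight.

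For the ``if'' direction, suppose $P_{X}(Y,W)$ is identifiable in $G$. Then $P_{X}(W)=\sum_{y}P_{X}(y,W)$ is identifiable as well, being a marginal of an identified functional. By Assumption~\ref{assum:strict-pos} the observational distribution is strictly positive, and since every interventional law factorizes as a truncated product of conditionals of $P(V)$, $P_{X}(\cdot)$ is strictly positive too, so $P_{X}(W)>0$ pointwise. Hence $P_{X}(Y|W)=P_{X}(Y,W)/P_{X}(W)$ is a ratio of two identified functionals with a positive denominator, and is therefore identifiable. Note that this direction does not use the back-door hypothesis at all.

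For the ``only if'' direction I would argue the contrapositive: assuming $P_{X}(Y,W)$ is \emph{not} identifiable in $G$, produce two SCMs compatible with $G$ that agree on $P(V)$ but disagree on $P_{X}(Y|W)$. First, invoke completeness of the ID algorithm (the hedge criterion of \citet{shpitser2008complete}): non-identifiability of $P_{X}(Y\cup W)$ yields a hedge $\langle F,F'\rangle$ for $P_{X}(R)$ in $G$ with $R\subseteq An(Y\cup W)$, and the standard hedge construction gives SCMs $M_{1},M_{2}$ --- identity-type mechanisms inside $F$, parity-type mechanisms on $F\setminus F'$, a fair shared exogenous variable at the root of the hedge --- with $P^{M_{1}}(V)=P^{M_{2}}(V)$ but $P^{M_{1}}_{X}(Y\cup W)\neq P^{M_{2}}_{X}(Y\cup W)$.

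The main obstacle, and the only place the back-door hypothesis enters, is upgrading this joint-level disagreement to a conditional one, i.e.\ ruling out that the discrepancy cancels in the ratio defining $P_{X}(Y|W)$. The hypothesis that every $W\in\mathbf{W}$ has an active back-door path to $Y$ in $G\setminus X$ given $\mathbf{W}\setminus\{W\}$ is exactly the statement that rule~2 of do-calculus cannot move any part of $W$ into the intervention set --- the maximal set $Z$ of Theorem~\ref{theorem20} is empty --- which informally forces each $W$ to be confounded with, or an ancestor of, $Y$ rather than a pure downstream effect whose probability mass could be divided out. Concretely, the plan is to show the hedge construction can be arranged so that the two models differ only in how randomness from the hedge root propagates into $Y$, leaving $P^{M_{1}}_{X}(W)=P^{M_{2}}_{X}(W)$ while $P^{M_{1}}_{X}(Y,W)\neq P^{M_{2}}_{X}(Y,W)$; a common positive $W$-marginal then immediately forces $P^{M_{1}}_{X}(Y|W)\neq P^{M_{2}}_{X}(Y|W)$. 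The delicate part is the bookkeeping --- tracking which hedge nodes lie in $An(W)$ versus only in $An(Y)\setminus An(W)$ and choosing the structural functions accordingly so that the $W$-marginal is provably invariant --- and it is precisely the back-door condition that guarantees this can be done.
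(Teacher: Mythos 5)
This statement is not proved in the paper at all: it is imported verbatim from \citet{shpitser2008complete} and used as a black box to justify the soundness of \idcdag, so there is no in-paper argument to compare against and your proposal must stand on its own. Your ``if'' direction is fine (the ratio argument, with positivity supplied by Assumption~\ref{assum:strict-pos} and the fact that identifiability is quantified over all positive models compatible with $G$). The ``only if'' direction, however, is a plan rather than a proof, and the plan has a genuine gap at exactly the point you flag as ``delicate bookkeeping.'' You assert that the hedge counterexample for $P_{X}(Y,W)$ ``can be arranged'' so that the two models share the interventional $W$-marginal, $P^{M_1}_{X}(W)=P^{M_2}_{X}(W)$, while the joints differ. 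That is not a routine arrangement: when the non-identifiability of $P_{X}(Y,W)$ originates in the marginal itself --- i.e.\ the hedge witnesses non-identifiability of $P_{X}(R)$ with the forest rooted in ancestors of $W$ --- forcing the two models to agree on $P_{X}(W)$ can destroy the disagreement you need, and nothing in your sketch rules this case out or explains how the back-door hypothesis would rescue it.

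The standard proof (Shpitser and Pearl's) handles precisely this by a case split that your proposal is missing. If $P_{X}(W)$ is identifiable, no model surgery is needed at all: were $P_{X}(Y|W)$ identifiable, the product $P_{X}(Y|W)\,P_{X}(W)=P_{X}(Y,W)$ would be identifiable, contradicting the assumption. If $P_{X}(W)$ is not identifiable, the back-door hypothesis is used in the opposite way from your plan: rather than freezing the $W$-marginal, one uses the active back-door path from each $W$ to $Y$ (given the rest of $\mathbf{W}$) to extend the counterexample models so that $Y$ picks up, along that path, enough of the information in which the two models disagree about $P_{X}(W)$ that the disagreement necessarily reappears in the conditional $P_{X}(Y|W)$ and cannot cancel in the ratio. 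Your appeal to Theorem~\ref{theorem20} (emptiness of the maximal rule-2 set) correctly identifies why the back-door condition matters, but the construction you outline --- identical $W$-marginals, differing joints --- is neither shown to be achievable nor is it the mechanism that actually makes the theorem true; repairing the argument means carrying out the two-case analysis above rather than the single uniform hedge modification you propose.
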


	
	\begin{theorem}
		\textbf{\idcdag Soundness:} Let $P_X(Y|Z)$ be an identifiable query given the causal graph $G=(V, E)$ and that we have access to joint samples $\data\sim P(v)$. Then the sampling network returned by \idcdag$(Y, X,Z,\data, G)$ correctly samples from $P_X(Y|Z)$ under Assumptions~\ref{assum:semi-mark},~\ref{assum:admg},~\ref{assum:correct_cond},~\ref{assum:strict-pos}.
	\end{theorem}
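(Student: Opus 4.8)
\emph{Proof proposal.} The plan is to reduce the conditional interventional query $P_{\x}(\y\mid\mbf{z})$ to an \emph{unconditional} interventional query over a joint distribution, invoke the already-established soundness of \idgen for that joint, and then fit a single extra conditional model to recover the desired conditional. This mirrors exactly the three operations \idcdag performs in Algorithm~\ref{alg:IDC-DAG}.

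First I would apply Theorem~\ref{theorem20}: there is a unique maximal set $\alpha\subseteq\mbf{Z}$ to which do-calculus rule~2 applies, giving the identity $P_{\x}(\y\mid\mbf{z}) = P_{\x\cup\alpha}(\y\mid\mbf{z}\setminus\alpha) = P_{\x\cup\alpha}(\y,\mbf{z}\setminus\alpha)\,/\,P_{\x\cup\alpha}(\mbf{z}\setminus\alpha)$; this is precisely the $\alpha$ that \idcdag computes, and Assumption~\ref{assum:strict-pos} guarantees the denominator is strictly positive so the conditional is well-defined everywhere. Next I would show that the joint query $P_{\x\cup\alpha}(\y,\mbf{z}\setminus\alpha)$ is identifiable whenever $P_{\x}(\y\mid\mbf{z})$ is: by maximality of $\alpha$, no further variable of $\mbf{Z}\setminus\alpha$ can be moved by rule~2, which is exactly the back-door hypothesis of Theorem~\ref{theorem21} (every $W\in\mbf{Z}\setminus\alpha$ retains a back-door path to $\Y$ in $G\setminus(\X\cup\alpha)$ given the remaining conditioning variables), so Theorem~\ref{theorem21} yields identifiability of $P_{\x\cup\alpha}(\y,\mbf{z}\setminus\alpha)$ in $G$.

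With identifiability of the joint in hand, \idcdag calls \idgen on $P_{\x\cup\alpha}(\y,\mbf{z}\setminus\alpha)$; by Theorem~\ref{th:id-dag-Soundness} this returns a sampling network $\h$ that correctly samples from it, and ancestral sampling on $\h$ (Lemma~\ref{lemma:joint-sample-from-net}) produces a dataset $\data[\X\cup\alpha,\Y,\mbf{Z}\setminus\alpha]\sim P_{\x\cup\alpha}(\y,\mbf{z}\setminus\alpha)$. Finally I would train a fresh conditional model $M_{\Y}$ on this dataset with inputs $\mbf{Z}\setminus\alpha$ and $\X\cup\alpha$; by Assumption~\ref{assum:correct_cond} it samples from $P_{\x\cup\alpha}(\y\mid\mbf{z}\setminus\alpha)$, which by the rule~2 identity of the first step equals $P_{\x}(\y\mid\mbf{z})$. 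Feeding the target values of $\X$ and $\mbf{Z}$ (with $\alpha$ routed into the intervention slot) into $M_{\Y}$ then yields samples from $P_{\x}(\y\mid\mbf{z})$, as claimed.

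The hard part will be the second step: carefully deducing the back-door precondition of Theorem~\ref{theorem21} from the maximality of $\alpha$, so that identifiability of the conditional query genuinely transfers to identifiability of the joint query that \idgen is actually handed. Everything after that is a direct composition of Theorem~\ref{th:id-dag-Soundness}, Lemma~\ref{lemma:joint-sample-from-net}, and Assumption~\ref{assum:correct_cond}, together with the elementary observation that the finite-sample training of the final model $M_{\Y}$ is absorbed into Assumption~\ref{assum:correct_cond} exactly as for every other model \idgen produces.
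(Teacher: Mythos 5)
Your proposal is correct and follows essentially the same route as the paper's proof, which simply invokes Theorem~\ref{theorem20} and Theorem~\ref{theorem21} for the rule-2 reduction and the identifiability transfer, then appeals to the soundness of \idgen as the sub-procedure plus one final conditional model trained per Assumption~\ref{assum:correct_cond}. In fact you spell out the back-door/maximality argument behind Theorem~\ref{theorem21} more explicitly than the paper does; no gap to report.
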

	\begin{proof}
		The IDC algorithm is sound and complete based on Theorem~\ref{theorem20} and Theorem~\ref{theorem21}. For sampling from the conditional interventional query, we follow the same steps as the IDC algorithm in Algorithm~\ref{alg:IDC-DAG}: \idcdag and call the sound and complete Algorithm~\ref{alg:id-dag}:\idgen as sub-procedure. Therefore, \idcdag is sound and complete.
	\end{proof}

 \section{Experimental details}
	\label{appex:exp-details}

\subsection{{Training details and compute}}
\label{appex-comput-resource}
We performed some of our experiments on a machine with an RTX-3090 GPU. We also performed some training on 2 A100 GPU's  which took roughly 9 hours for 1000 epochs. Training for baseline NCM took more than 50 hours to complete 1000 epochs. The baseline DCM took around 10 hours. When variables were low-dimensional discrete, it was quite fast and took 10-20 minutes to finish all training and get convergence. We discuss more specifics about each experiment in their individual sections.

 \subsubsection{Reproducibility }
 \label{appex-reproduce}
 For reproducibility purposes, we provide our anonimized source codes with instructions. Besides, we provided explanations of each experiment along with model settings and hyperparameters. We provide the code to generate the Colored MNIST dataset.

	\subsection{Napkin-MNIST dataset}
	\label{appex:napkin-mnist}
	
	\textbf{Data Generation:} First we consider a synthetic dataset imbued over the napkin graph. 
	We consider variables $W_1, W_2, X,Y$, where $W_1, X,Y$ are images derived from MNIST and $W_2$ is a paired discrete variable. We introduce latent confounders $C, T$, denoting color and thickness, where $C$ can be any of $\{red, greed, blue, yellow, magenta, cyan\}$, and $T$ can be any of $\{thin, regular, thick\}$. Data generation proceeds as follows: first we sample latent $C, T$ from the uniform distribution. We color and reweight a random digit from MNIST to form $W_1$. $W_2$ only keeps the digit value in $\{0\dots 9\}$ of $W_1$ and a restriction of its color: if the color of $W_1$ is $red$, $green$, or $blue$, $W_2$'s color value is 0, and it is 1 otherwise. $X$ then picks a random MNIST image of the same digit as $W_2$'s digit value (0-9), is colored according to $W_2$'s color value (0-1), and is reweighted according to the latent $T$. Then $Y$ is the same original MNIST image as $X$, with the same thickness but colored according to the latent $C$. Further, for every edge in the graph, we include a random noising process: with probability $0.1$, the information passed along is chosen in a uniformly random manner from the valid range.

	Here we describe the data-generation procedure, and training setup for the Napkin-MNIST experiment in full detail.
	\subsubsection{Data generation procedure: discrete case}
	As a warm-up, we outline the generation for the Napkin-MNIST dataset in a low-dimensional setting. When we consider in the next section the high-dimensional case, we simply replace some of these discrete variables with MNIST images which can be mapped back into this low-dimensional case. 
	
	We start by enumerating the joint distribution and the support of each marginal variable. First lets define the sets 
	\begin{itemize}
		\item $\texttt{COLORS}:= \{\texttt{red, green, blue, yellow, magenta, cyan}\}$.
		\item $\texttt{RG\_COLORS}:= \{\texttt{red, green}\}$.
		\item $\texttt{THICKNESSES}:= \{\texttt{thin, regular, thick}\}$.
		\item $\texttt{DIGITS}:= \{0,\dots, 9\}$.
	\end{itemize}
	And then the definitions and support of each of the variables in our distribution:
	\begin{itemize}
		\item (Latent) $\texttt{Color} \in \texttt{COLORS}$.
		\item (Latent) $\texttt{Thickness} \in \texttt{THICKNESSES}$.
		\item $W_1 \in \texttt{DIGITS}\times \texttt{COLORS}\times \texttt{THICKNESSES}$
		\item $W_2 \in \texttt{DIGITS}\times \texttt{RG\_COLORS}$.
		\item $X \in \texttt{DIGITS}\times \texttt{COLORS}\times \texttt{THICKNESSES}$
		\item $Y \in \texttt{DIGITS}\times \texttt{COLORS}\times \texttt{THICKNESSES}$
	\end{itemize}
	
	Now we describe the full data generation procedure. A key hyperparameter is a noise-probability $p$. This defines the probability that any variable flips to a uniform probability. To ease notation, we define the function $\eta_p(v, S)$ defined as 
	\[  
	\eta_p(v, S) :=     \begin{cases}
		v & \text{with probability  } 1-p\\
		U(S) & \text{otherwise }
	\end{cases}
	\]
	and we define the mapping $R: \texttt{COLORS}\to \texttt{RESTRICTED\_COLORS}$ as 
	\[
	R(c) := \begin{cases}
		\texttt{red} & \text{if  } c\in \{\texttt{red, green, blue}\}\\
		\texttt{green} & \text{otherwise }
	\end{cases}
	\]
	Where $U(S)$ means a uniformly random choice of $S$.
	Then our data generation procedure follows the following steps:
	\begin{itemize}
		\item $\texttt{Color} := U(\texttt{COLORS})$
		\item $\texttt{Thickness}:= U(\texttt{THICKNESSES})$
		\item $W_1 := \big(U(\texttt{DIGITS}),\quad \eta_p(\texttt{Color}, \texttt{COLORS}),\quad \eta_p(\texttt{Thickness}, \texttt{THICKNESSES})\big)$
		\item $W_2 := \big(\eta_p(W_{1}._{digit}, \texttt{DIGITS}),\quad \eta_p(R(W_{1}._{color}, \texttt{RG\_COLORS})\big)$
		\item $X := \big(\eta_p(W_{2}._{digit}, \texttt{DIGITS}),\quad \eta_p(W_{2}._{color}, \texttt{RG\_COLORS}),\quad \eta_p(\texttt{Thickness}, \texttt{THICKNESSES})\big)$
		\item $Y := \big(\eta_p(X._{digit}, \texttt{DIGITS}),\quad \eta_p(\texttt{Color}, \texttt{COLORS}),\quad \eta_p(X._{thickness}, \texttt{THICKNESSES})\big)$
	\end{itemize}
	
	It is easy to verify that this describes the Napkin graph, as each only $\texttt{Color, Thickness}$ are latent and each variable only depends on its parents in the SCM. 
	
	Secondly, observe that this structural causal model is separable with respect to digits, colors, and thicknesses. Since each digit only depends on parent \emph{digits}, each color only depends on parent \emph{colors}, and each thickness depends only on parent \emph{thicknesses}, these can all be considered separately. 
	
	Further, because this distribution is only supported over discrete variables, exact likelihoods can be computed for any conditional query. This is much more easily done programmatically, however, and we provide code in the attached codebase to do just that. We will claim without proof that in the case of thicknesses and digits, $P_Y(X)=P(Y|X)$. However in the case of colors, $P_Y(X) \neq P(Y|X)$. Hence we consider this case in the evaluations in the experiments section.

	\begin{figure}[t!]
		\centering
		\includegraphics[width=0.8\linewidth]{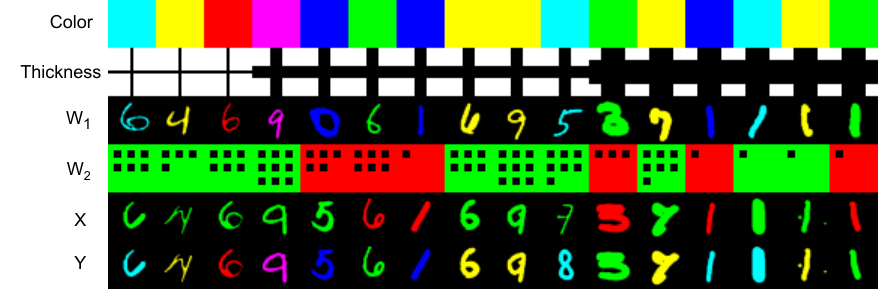}
		\caption{\textbf{Joint samples from the Napkin-MNIST dataset:} Samples from the Napkin-MNIST dataset are visualized as columns above. The first row indicates the latent variable \texttt{color}, the second row indicates the latent variable \texttt{thickness}, and the row labeled $W_2$ is a discrete variable holding a (\texttt{color}, \texttt{digit}), where digit is represented as the number of dots. Notice that the noising process sometimes causes information to not be passed to children. }
		\label{fig:mnist-joint-samples}
	\end{figure}

	\subsubsection{Data generation procedure: high-dimensional case}
	The high-dimensional case follows the discrete case of the Napkin-MNIST dataset, with a few key changes. Namely, $W_1, X,$ and $Y$ are MNIST images that have been colored and thickened. We explicitly outline these changes:
	\begin{itemize}
		\item $W_1$: A random MNIST image of the provided digit is used, then colored and thickened accordingly (noisy from latents).
		\item $W_2:$ This is a discrete variable, only encoding the (noised) digit and (noised) restricted color of $W_1$.
		\item $X$: This is a random MNIST image of the (noised) digit obtained from $W_2$, then colored with the (noised) restricted color from $W_2$ and thickened according to the (noised) latent thickness.
		\item $Y$: This is the \emph{same} base image of $X$, unless the noising procedure calls for a change in digit, then a random MNIST image of the specified image is used. The (noisy) color is obtained from the latent distribution, and the (noisy) thickness is obtained from $X$.
	\end{itemize}
	To color the images, we convert each 1-channel MNIST image into a 3-channel MNIST image, and populate the necessary channels to generate these colors. Note that in RGB images: if only the RG channels are active, the image is yellow; if only the RB channels are active, the image is magenta; if only the BG channels are active, the image is cyan. To thicken the images, we use the MorphoMNIST~\citet{castro2019morpho} package\footnote{\href{https://github.com/dccastro/Morpho-MNIST/}{https://github.com/dccastro/Morpho-MNIST/}}.
	Operationally, we generate a base dataset for our experiments of size equivalent to the original MNIST dataset. That is, the training set has a size of 60K, and the test set has a size of 10K. Because we have access to the latents during the data generation procedure, we are able to train classifiers for each variable to identify their digit, color and thickness. We use a simple convolutional network architecture for each of these cases and achieve accuracy upwards of 95\% in each case.

	\subsubsection{Diffusion training details}
	We train two diffusion models during our sampling procedure, and we discuss each of them in turn.
	
	To train a model to sample from $P_(X,Y| W_1, W_2)$, we train a \emph{single} diffusion model over the joint $(X,Y)$ distribution, i.e., 6 channels. We train a standard UNet architecture where we follow the conditioning scheme of classifier-free guidance. That is, we insert at every layer an embedding of the $W_1$ (image) and $W_2$ (2-dimensional discrete variable). To embed the $W_1$ image, we use the base of a 2-layer convolutional neural network for MNIST images, and to embed the $W_1$ we use a standard one-hot embedding for each of the variables. All three embeddings are concatenated and mixed through a 2-layer fully connected network to reach a final embedding dimension of 64. Batch sizes of 256 are used everywhere. Training is performed for 1000 epochs, which takes roughly 9 hours on 2 A100 GPU's. Sampling is performed using DDIM over 100 timesteps, with a conditioning weight of $w=1$ (true conditional sampling) and noise $\sigma=0.3$. 
	
	To train a model to sample $Y$ from the generated dataset $(W_2, X,Y)$, we follow an identical scheme. An option is to train a single diffusion model for each choice of $W_2$ in our synthetic dataset, however, we argue our schema still produces correct samples because: 1) $W_2$ can be arbitrarily chosen, and thus should not affect $Y$, 2) we argue that the model fidelity benefits from weight sharing across multiple choices of $W_2$, 3) the model is only ever called with a specified value of $W_2$ so we always condition on this $W_2$.
	
	\subsubsection{Extra evaluations}
	
	As such, our evaluations include verifying the quality of the trained component neural networks and, in the case of MNIST, a surrogate ground truth for a discrete version of the dataset.
	In each experiment, for conditional sampling with high-dimensional data, we train diffusion models using classifier-free guidance \citep{ho2022classifier}. For conditional sampling of categorical data, we train a classifier using cross-entropy loss. 
	In addition to the evaluations presented in the main paper, we can further perform evaluations on the component models necessary to sample $P_X(Y)$. 
	
	\paragraph{$P(X,Y|W_1, W_2)$:}
	We can evaluate the model approximating samples from $P(X,Y|W_1, W_2)$ on a deeper level than just visual inspection as provided in the main paper. In particular, assuming access to good classifiers that can predict the digit, color, and thickness of an MNIST image, we can compare properties of the generated images with respect to the ground truth in the discrete case. For example, assuming we have hyperparameter of random noise equal to $p$, we can compute the following quantities analytically on the discrete dataset as:
	\begin{itemize}
		\item $P[X_{d} = {W_2}_{d}] = 1-p + \frac{p}{10}$
		\item $P[X._{c} = {W_2}_{c}] = 1-p + \frac{p}{10}$
		\item $P[X_{t}={W_1}_{t}] = (1-p + \frac{p}{3})^2 + \big(\frac{p}{3}\big)^2 * 2$
		\item $P[Y_{d} = X_{d}] = 1-p + \frac{p}{10}$
		\item $P[Y_{c} = {W_1}_{c}] = (1-p + \frac{p}{6})^2 + big(\frac{p}{6}\big)^2 * 5$
		\item $P[Y_{t}=X_{t}] = 1-p + \frac{p}{3}$    
	\end{itemize}
	where $V_d, V_c, V_t$ refer to the digit, color, and thickness attributes respectively.
	These calculations follow from two formulas. In a discrete distribution with support $S$ and $|S|=K$:
	\begin{itemize}
		\item $P[\eta_p(z, S) = z] = 1-p + \frac{p}{K}$
		\item $P[\eta_p(z, S) = \eta_p(z, S)] = (1-p+\frac{p}{K})^2 + \Big(\frac{p}{K}\Big)^2 * (K-1)$
	\end{itemize}
	where in the second equation, it is assumed that $\eta_p(\cdot, \cdot)$ are two independent noising procedures.
	
	Then to evaluate, we can 1) consider a large corpus of joint data, 2) run each of $W_1, X, Y$ through a classifier for digit, color, and thickness, 3) evaluate the empirical estimate of each desired probability. We present these results for the synthetic dataset $D_{synth}$ sampled from the diffusion model approximating $P(X,Y|W_1, W_2)$, a dataset $D_{orig}$ generated according to the data generation procedure, and $P_{true}$ the true analytical probabilities. These results are displayed in the Table \ref{tab:dataset_eval}.
	
	\begin{table}[H]
		\centering
		\caption{\textbf{Evaluations on the Napkin-MNIST generated dataset}. $V.d, V.c, V.t$ refer to the digit, color and thickness respectively of variable $V$. The first column is with respect to samples generated from diffusion model $\hat{P}(X,Y\mid W_1, W_2)$, Image Data is the dataset used to train $\hat{P}$, Discrete Data is the empirical distribution according to a discrete Napkin-MNIST, and the ground truth is analytically computed. Ideally all values should be equal across a row. While our synthetic dataset generated from $\hat{P}$ is not a perfect representation, it is quite close in all attributes except thickness. This is because the classifier for thickness has some inherent error in it, as evidenced by the mismatch between the base data and ground truth in the thickness rows.}
		\begin{tabular}{@{}lcccc@{}}
			\toprule
			& $\hat{P}(Y, X\mid W_1, W_2)$ & Image Data & Discrete Data & Ground Truth \\ \midrule
			$P[X.d=W_2.d]$ & 0.931 & 0.895 & 0.909 & 0.910 \\ 
			$P[X.c=W_2.c]$ & 0.964 & 0.950 & 0.950 & 0.950 \\ 
			$P[X.t=W_1.t]$ & 0.683 & 0.776 & 0.879 & 0.873 \\ 
			$P[Y.d=X.d]$ & 0.927 & 0.895 & 0.909 & 0.910 \\ 
			$P[Y.c=W_1.c]$ & 0.847 & 0.841 & 0.841 & 0.842 \\ 
			$P[Y.t=X.t]$ & 0.830 & 0.851 & 0.933 & 0.933 \\ 
			\bottomrule
		\end{tabular}
		\label{tab:dataset_eval}
	\end{table}

	\subsubsection{MNIST baseline comparison}
 Here we provide some additional information about the baseline comparison we showed in Section~\ref{sec:napkin-mnist}.
	Although the implementation of the baseline methods is different,
	we considered the performance of each method after running them for 300 epochs (until good image quality is obtained). The NCM algorithm took around 50 hours
	to complete while our algorithm took approximately 16 hours to complete.
	We observe the probabilities in Table~\ref{ap:baseline_prob}.

	In the generated samples from the conditional model, 
	the color of digit image $X$  and digit image $Y$ are correlated due to confounding through backdoor paths. 
	For example, for a digit image $X$ with color as red, $Y$
	takes a value from [Red, Green, Blue] with high probability. Thus, the conditional model does not have the ability to
	generate interventional samples.
	On the other hand, our algorithm generates samples from the interventional distribution $P(y|\Do(x))$ and the generated
	samples choose different colors:[Red, Green, Blue, Yellow, Magenta, Cyan] with almost the same probability.
	Therefore, our algorithm shows superior performance compared to baselines and illustrates the high-dimensional interventional sampling capability.

	\begin{table}[t!]
		\centering
		\caption{Color probability distribution of the sampled images. $P(y.c|x.c=red)$ is biased towards [R, G, B] while $P(y.c|\Do(x.c=red))$ is not.}
   \scalebox{0.9}{
		\begin{tabular}{|c|c|c|c|c|c|c|}
			\hline
			Predicted  color probabilities  & Red    & Green  & Blue   & Yellow & Magenta & Cyan   \\ \hline
			Conditional model: $P(y.c | x.c=red)$ & 0.1889 & 0.4448 & 0.1612 & 0.1021 & 0.0232  & 0.0798 \\ \hline
			Ours: $P(y.c |\Do(x.c=red))$          & 0.1278 & 0.2288 & 0.2097 & 0.1445 & 0.1177  & 0.1715 \\ \hline
		\end{tabular}
  }
		\label{ap:baseline_prob}
	\end{table}
	
	\begin{figure}[t!]
     \centering
		
        \includegraphics[width=0.70\linewidth]{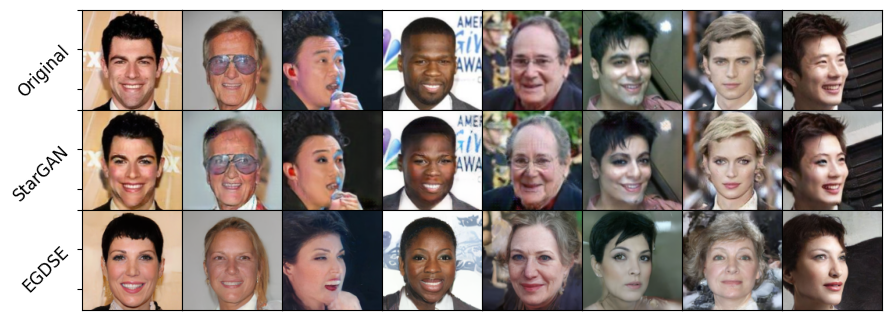}
		\caption{
			Samples from $P(I_2|\Do(Male=0)$.
			Row 1:  original $I_1$, Row 2: translated $I_2$  by StarGAN and  Row3: translated $I_2$  by EGSDE.
		}
		\label{fig:celeba-images}
 \end{figure}
	
	\begin{figure}[t!]
		\centering
		\begin{subfigure}{0.9\linewidth}
			\centering
			\includegraphics[width=\linewidth]{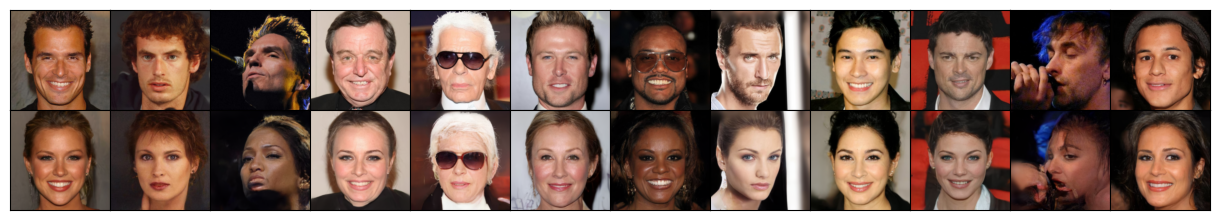}
		\end{subfigure}
		\hfill
		\begin{subfigure}{0.9\linewidth}
			\centering
			\includegraphics[width=\linewidth]{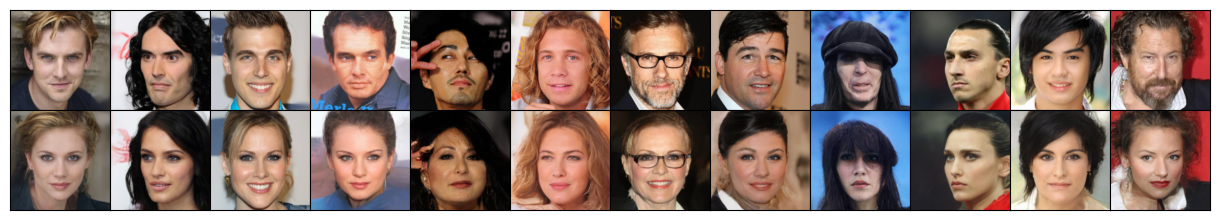}
		\end{subfigure}
		
		\caption{Multi-domain image translation by EGSDE. Images in rows 1 and 3 are the original images (male domain) and images in rows 2 and 4 are translated images (female domain). Table~\ref{appex-table:add-attr} shows, 29.16\% of the total images are translated as a young person. }
		\label{fig:appex-celeba-examples}
	\end{figure}
	
	\begin{figure}[t!]
		\centering
		\begin{subfigure}{0.8\linewidth}
			\centering
			\includegraphics[width=\linewidth]{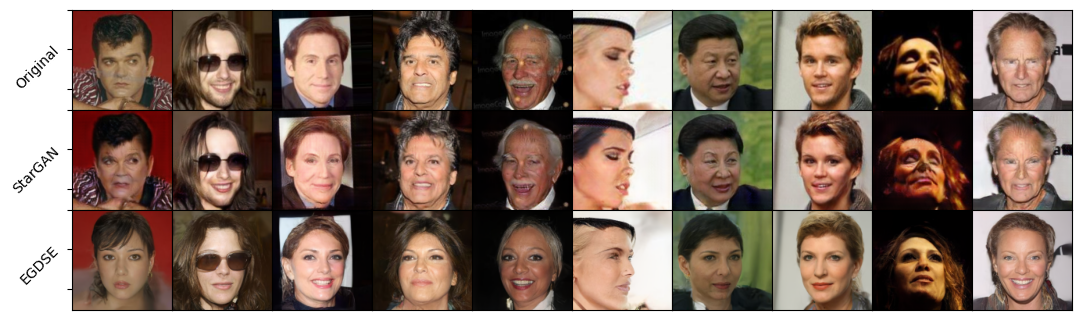}
		\end{subfigure}
		\hfill
		\begin{subfigure}{0.8\linewidth}
			\centering
			\includegraphics[width=\linewidth]{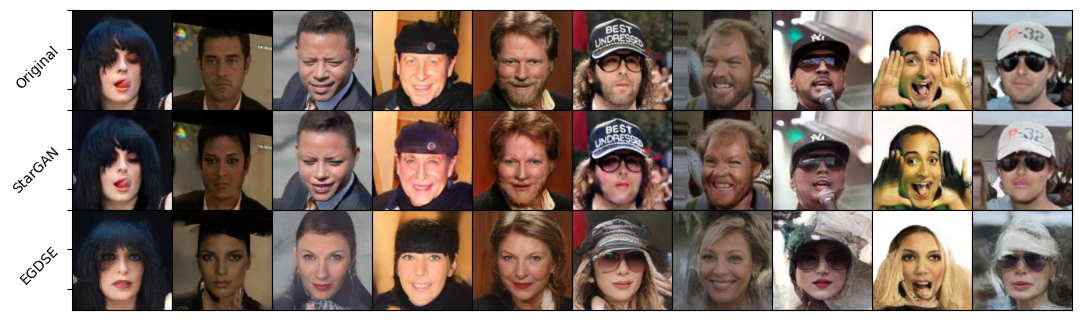}
		\end{subfigure}
		\hfill
		\begin{subfigure}{0.8\linewidth}
			\centering
			\includegraphics[width=\linewidth]{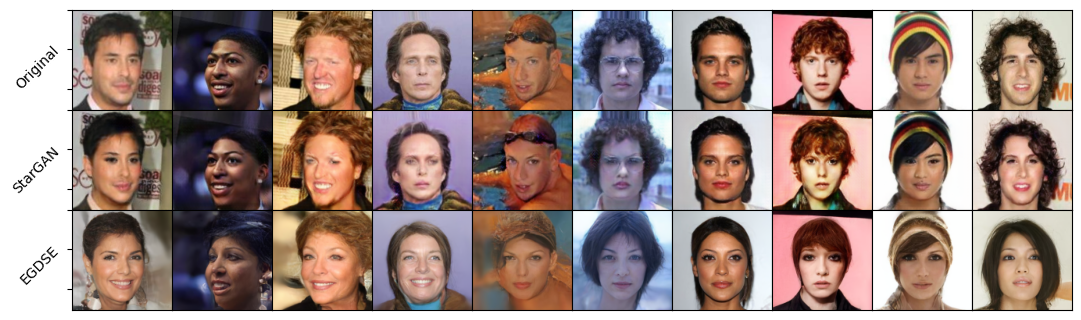}
		\end{subfigure}
		\caption{
			Performance comparison of 
			multi-domain image translation between StarGAN and EGSDE.}
		\label{fig:appex-celeba-perf-comparison-1}
	\end{figure}

\subsection{CelebA experiment}
\label{appex:celeba}
%


Here we provide some additional information about the CelebA image to image translation experiment  we showed in Section~\ref{sec:celeba-img-img}.
We used a pre-trained classifier from this repository: \url{https://github.com/clementapa/CelebFaces_Attributes_Classification}.
	We use the pre-trained model of EGSDE from this repository:\url{https://github.com/ML-GSAI/EGSDE}
We use the pre-trained model of StarGAN from this repository:\url{https://github.com/yunjey/stargan}.
We generated 850 samples withe these models.
 Note that although EGSDE generates better quality (blur-free) images compared to StarGAN (blurry), it makes its samples more realistic by adding different non-causal attributes such as $Young$ or $Attractive$. As a result, it will generate $Female$ samples from a specific part of the image manifold. As a result, it might lack the ability to generate a different variety of images (for example: old females). The reason behind this is possibly the correlation in the dataset it is trained on. For example, in the CelebA dataset, we observe a high correlation between $Female$ and $Young$ while also a high correlation between $Male$ and $Old$. As a result, when EGSDE converts the images whether the Male is young or old, it converts them to a young female. On the other hand, starGAN does not change noncausal attributes, but it also does not properly change the causal attributes as well.

\textbf{Baselines}:
Existing algorithms such as ~\citet{xia2023neural, chao2023interventional} do not utilize the causal effect expression of $P(A|\Do(\MA=0))$, rather they train neural models for each variable in the causal graphs which will be costly in this scenario. Even if they utilize pre-trained models, 
they have to train models for the remaining variables to match the whole joint and 
perform sampling-based methods to evaluate  $P(A|\YO=1,\Do(\MA=0))$.
%
%
Note that the ground truth if our considered attributes are causally related with an image of a $\FE$ is unknown. We assume that they are causal in the CelebA dataset based on expert knowledge~\cite{kocaoglu2018causalgan}.

	\begin{table}[t!]
		\caption{Additional attributes added (in percentage) in the translated images.}
		\label{appex-table:add-attr}
  \scalebox{0.85}{
		\begin{tabular}{|l|l|l|l|l|l|l|}
			\hline
			& WearingLipstick & HeavyMakeup & ArchedEyebrows & OvalFace & Attractive & Young \\ \hline
			Added Attribute (\%) & 87.5            & 79.16       & 66.66           & 54.16    & 37.5       & 29.16 \\ \hline
		\end{tabular}}
	\end{table}
	
	\subsubsection{Conditional image generation}
	\begin{figure}[t!]
		\centering
		\begin{subfigure}{0.85\linewidth}
			\centering
			\includegraphics[width=\linewidth]{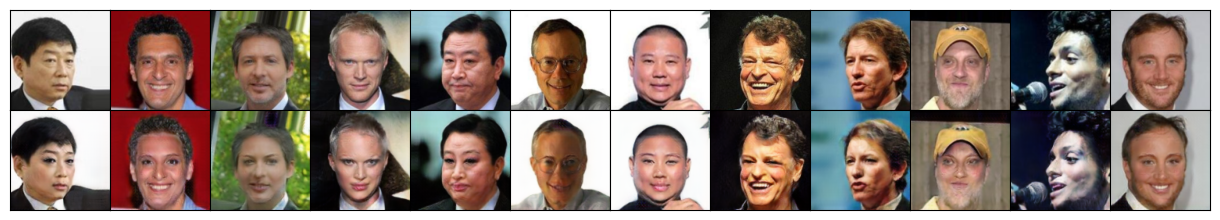}
		\end{subfigure}
		\hfill
		\begin{subfigure}{0.85\linewidth}
			\centering
			\includegraphics[width=\linewidth]{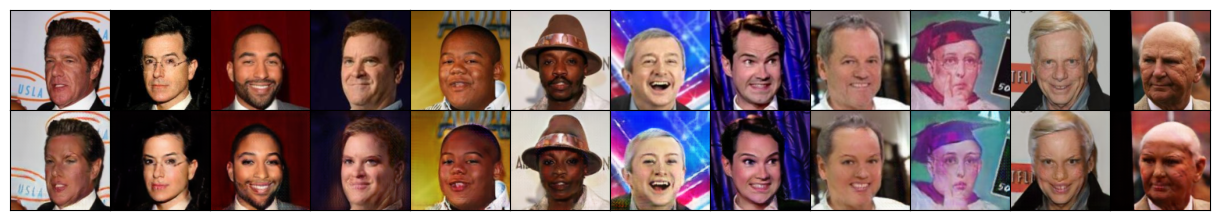}
		\end{subfigure}
		\hfill
		\begin{subfigure}{0.85\linewidth}
			\centering
			\includegraphics[width=\linewidth]{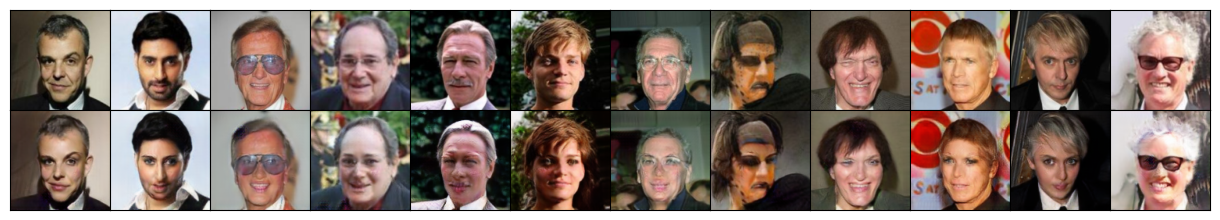}
		\end{subfigure}
		\caption{
			Samples from $P(I_2|Young=1, \Do(Male=0))$ generated by StarGAN. } 
		\label{fig:appex-celeba-perf-comparison-2}
	\end{figure}
 In Figure~\ref{fig:appex-celeba-perf-comparison-2}, we show images generated from conditional interventional distribution.

\subsection{Explaining foundation model output for chest X-ray generation}
\label{appex:xray-llm}

\begin{figure}[t!]
    \centering
\begin{subfigure}{1\linewidth}
\centering
\scriptsize
\begin{tabular}{|p{0.8cm}|p{1.5cm}|p{1cm}|}
\hline
\begin{tabular}[c]{@{}c@{}}Models\end{tabular} & \begin{tabular}[c]{@{}c@{}}Distribution\\ Matched\end{tabular} & \begin{tabular}[c]{@{}c@{}}TVD $\downarrow$\end{tabular} \\ \hline
$M_N$                                                       & $P(N)$                                                         & 0.0595                                                                               \\ \hline
$M_E$                                                       & $P(E|N, R)$                                                     & 0.0512                                                                               \\ \hline
$M_A$                                                       & $P(A|N, E, R)$                                                   & 0.0451                                                                               \\ \hline
$M_L$                                                       & $P(L|N, E, A)$                                                     & 0.042                                                                                \\ \hline
$M_X$                                                       & $P(X|N, E, A, L)$                                                         & Pre-trained                                                                               \\ \hline
\end{tabular}
\end{subfigure}
    \caption{CXR conditional models in the sampling network.}
    \label{fig:enter-label}
\end{figure}

\subsubsection{Causal graph}
Below we shortly describe the reasoning for our causal graph assumption.

\textbf{Nodes and Edges:}
\begin{itemize}

\item Prompt/Report ($R$) is variable that represents textual input. In CXR datasets such as MIMIC-CXR, it is generally doctor written text report indicating the existence of different attributes. In our experiment, we consider it as an input prompt feed by some user who is interested to  understand how the prompt affects other attributes and the CXR image generation. We assume that the LLM labeler can extract effusion ($E$) and atelectasis ($A$). Thus, we consider $R\rightarrow E$ and $R\rightarrow A$.

    \item Pneumonia($N$) is an infection that inflames the air sacs in one or both lungs~\citep{mayoclinic}.
\item Pleural effusion ($E$) happens when fluid builds up in the space between the lung and the chest wall. This can occur for reasons such as pneumonia or complications from heart, liver, or kidney disease.
  Pleural effusion also involves fluid in the lung area.
  It is common in patients who develop pneumonia. At least 40-60\% of patients with bacterial pneumonia will develop a pleural effusion of varying severity. Thus, we consider $N\rightarrow E$ .
    
    \item Atelectasis ($A$) is one of the most common breathing complications after surgery. Here, complication is a medical problem that occurs during a disease, or after a procedure or treatment~\citep{clevelandclinic}. 
External pulmonary compression by pleural fluid or air (i.e, pleural effusion, pneumothorax, etc.) may cause atelectasis~\citep{medscape}.
Thus, we consider the edge $E \rightarrow A$.
Also, various types of pneumonia, which is a lung infection, can cause atelectasis~\citep{mayoclinic_atelectasis}. Therefore, we consider the edge $N\rightarrow A$.

\item Lung opacity ($L$) is a lack of transparency, i.e., an opaque or non-transparent area on an x-ray/radiograph.~\citep{healthline_lung_opacity}.
We consider pneumonia, effusion and atelectasis to be causes for lung opacity. Thus we consider edges $N\rightarrow L, E\rightarrow L, A\rightarrow L$.

\item Xray Image ($X$) represents different attributes such as atelectasis, effusion and lung opacity in visible radiography form. Thus, we consider edges $E\rightarrow X, A\rightarrow X, L\rightarrow X$.

\item $N\leftrightarrow R$ represents the presence of a hidden confounder between pneumonia and the input prompt/report. We followed ~\citep{catala2021bias} to consider the hospital location as the unobserved variable from where the chest x-ray datasets where collected. They discuss that when we merge/mix different datasets and train models on that different types of biases are introduced. For example, in many scenarios, most patients are screened in certain health services and highly suspicious patients are derived to a different area.
Another example of introducing bias is, when we aim to increase the number of controls or cases, we expand the dataset with samples coming from significantly different origins and labeled with unbalanced class identifiers.  Since we utilized foundation models such as \cite{gu2024chex,chambon2022roentgen} which are trained in multiple chest radiograph datasets, they are prone to above kind of bias. 
In some cases, to discriminate between cases and controls, these models might depend on the  origins/location based details rather than finding actual features related to the disease.
  Thus, we consider that hospital locations have some effect on the text report on which the models were trained on and also how likely people were affected by pneumonia infection in those areas. Therefore, we consider the presence of a confounder between the text report ($R$) and pneumonia ($N$).

\end{itemize}

Note that all our results are based on the MIMIC-CXR dataset and our assumption on the causal graph. Thus, our results should not be used to make medical inferences without an expert opinion. In the real world, the domain-specific causal graph might change. Our algorithm can be applied on the domain specific causal graph and the dataset to obtain correct results.

\begin{figure}[t!]
\centering
        \includegraphics[width=1\linewidth]{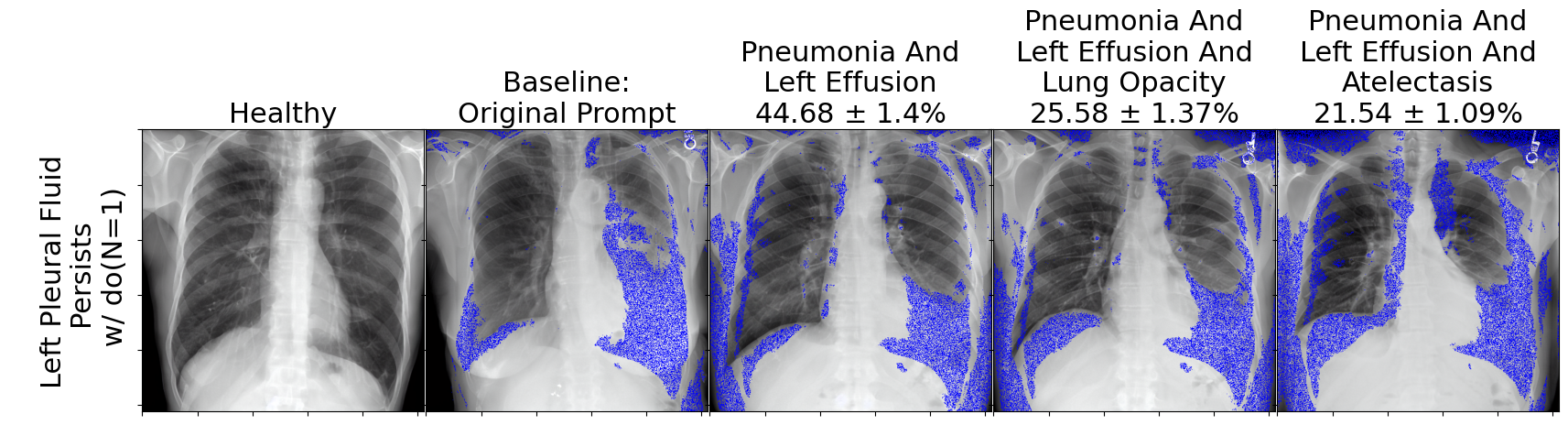}
        \includegraphics[width=1\linewidth]
        {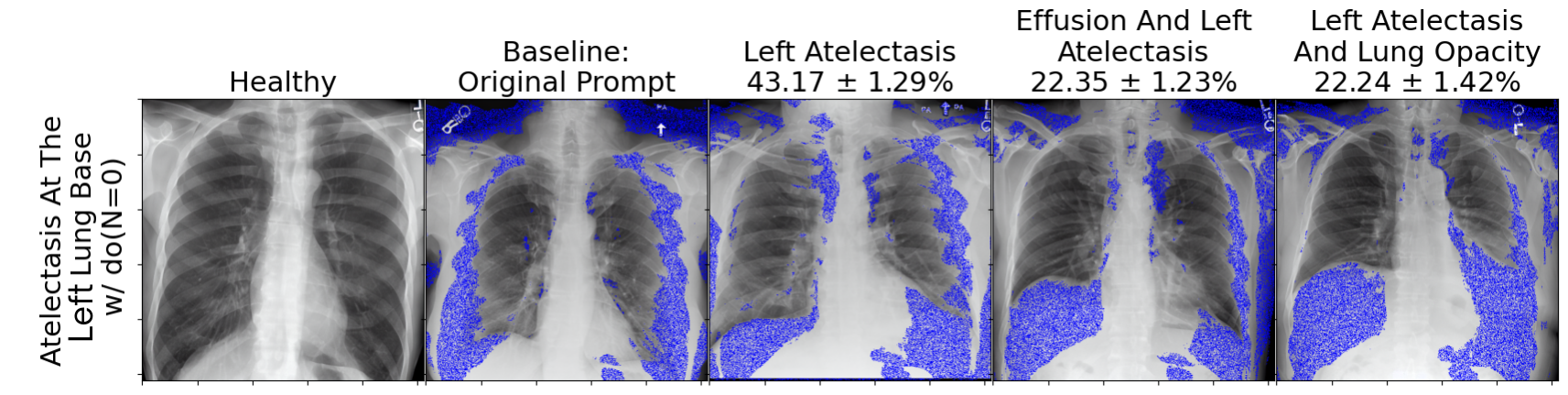}
            \caption{
    Two prompts, their CXR images, and inferred attributes with the likelihood to appear. Blue regions indicate changes compared to the healthy X-ray.
    }
    \label{fig:appex-cxr-exp}
\end{figure}



  \subsection{Covid X-Ray dataset}
	\subsubsection{Data preprocessing}
	
	Note that a full pipeline of data preparation is contained in \texttt{cxray/prep\_cxray\_dataset.sh} in the provided codebase. We start by downloading the corpus approximately 30K Covid X-Ray images\footnote{\href{https://www.kaggle.com/datasets/andyczhao/covidx-cxr2}{https://www.kaggle.com/datasets/andyczhao/covidx-cxr2}}. Then we download Covid-19 labels and Pneumonia labels\footnote{\href{https://github.com/giocoal/CXR-ACGAN-chest-xray-generator-covid19-pneumonia}{https://github.com/giocoal/CXR-ACGAN-chest-xray-generator-covid19-pneumonia}} and attach labels to each image. Then we convert each image to black-white one-channel images and rescale each to a size of $(128\times128)$ pixels. Finally, a random split of the 30K images is performed: keeping 20K to be used during training, and 10K to be used as validation images. Note that the labels come with a set of 400 test images, but 400 images is too small to be an effective test set (say for FID computations). We will be explicit about where we use each data set.
	
	\subsubsection{Diffusion training details}
	We train a diffusion model to approximate $P(X|C)$. To do this, we use a standard UNet and classifier-free guidance scheme. We train for 100K training steps over the 20K-sized training set, using a batch size of 16. This takes roughly 10 hours on a single A100. The same classifier-free guidance parameters are used as in the NapkinMNIST diffusion training.
	
	\subsubsection{Calibrated classifier training details}
	To train a classifier to sample from $P(N|C, X)$, we note that our inputs are a $(1, 128, 128)$ image and a binary variable. Our architecture is as follows: we create an embedding of $X$ by modifying the final linear layer of a ResNet18 to have output dimension 64 (vs 1000), and modify the input channels to be 1. We create an embedding for $N$ by using a standard embedding layer for binary variables, with embedding dimension 64. These embeddings are then concatenated and pushed through 3 fully connected layers with ReLU nonlinearities and dimension 128. A final fully-connected layer with 2 outputs is used to generate logits.
	
	Training follows a standard supervised learning setup using cross entropy loss. We train for 100 epochs using a batch size of 256 and a standard warmup to decaying learning rate (see code for full details). 
	
	We note the deep literature suggesting that even though classifiers seek to learn $P(N|X, C)$, neural networks trained using cross entropy loss do not actually do a very good job of estimating this distribution. Training attains an accuracy of $91.2\%$ on the test set. Calibrated classification seeks to solve this problem by modifying the network in a way such that it more accurately reflects this distribution. We follow the standard approach using temperature scaling of the logits, where a temperature is learned over the validation set, using LBFGS with a learning rate of 0.0001 and a maximum of 10K iterations. This does not affect the test accuracy at all, but drastically improves the ECE and MCE reliability metrics. See \citet{guo2017calibration} for a further discussion of temperature scaling.

	\begin{figure}[t!]
		\centering
		\begin{subfigure}{0.4\linewidth}
			\centering
			\includegraphics[width=\linewidth]{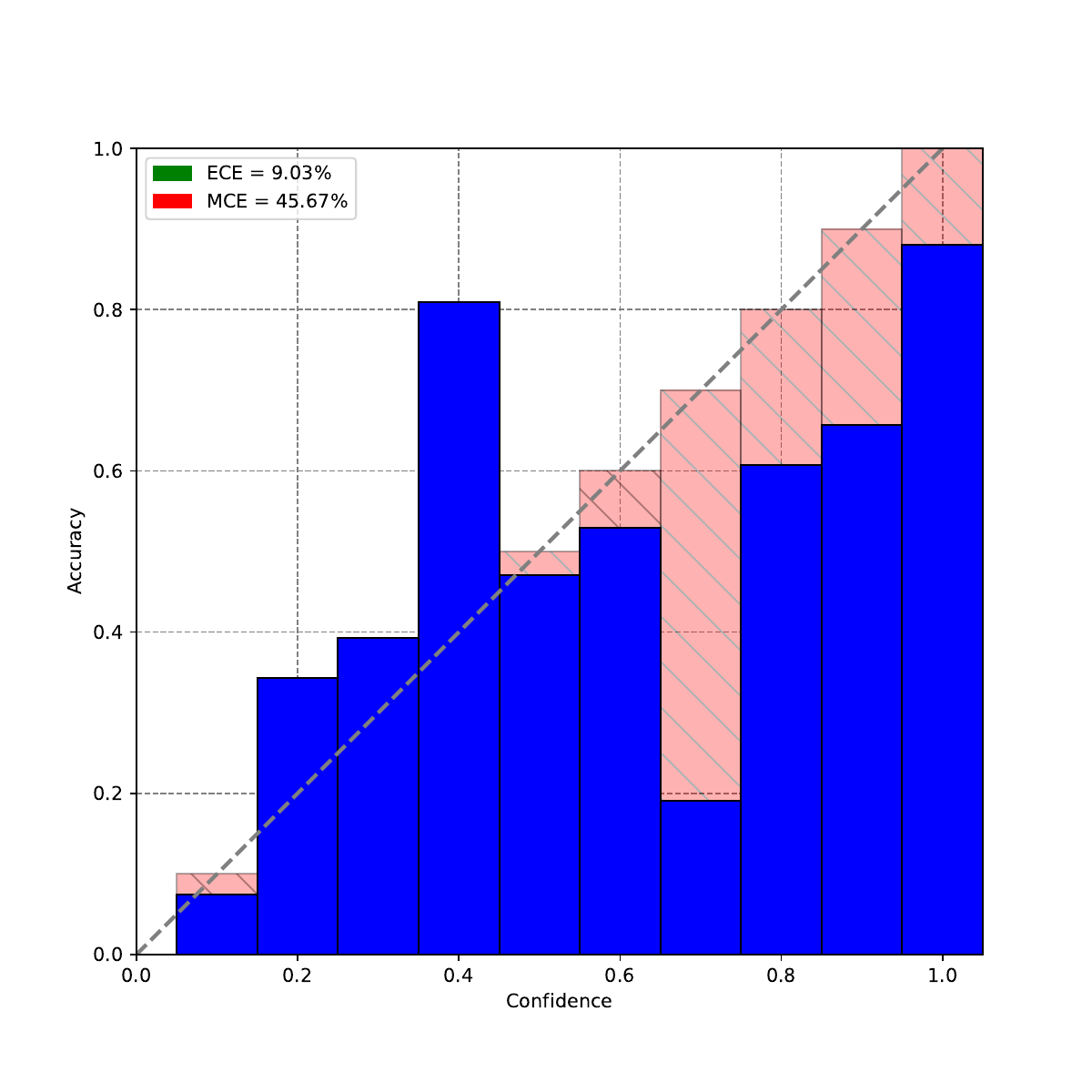}
		\end{subfigure}
		\hfill
		\begin{subfigure}{0.4\linewidth}
			\centering
			\includegraphics[width=\linewidth]{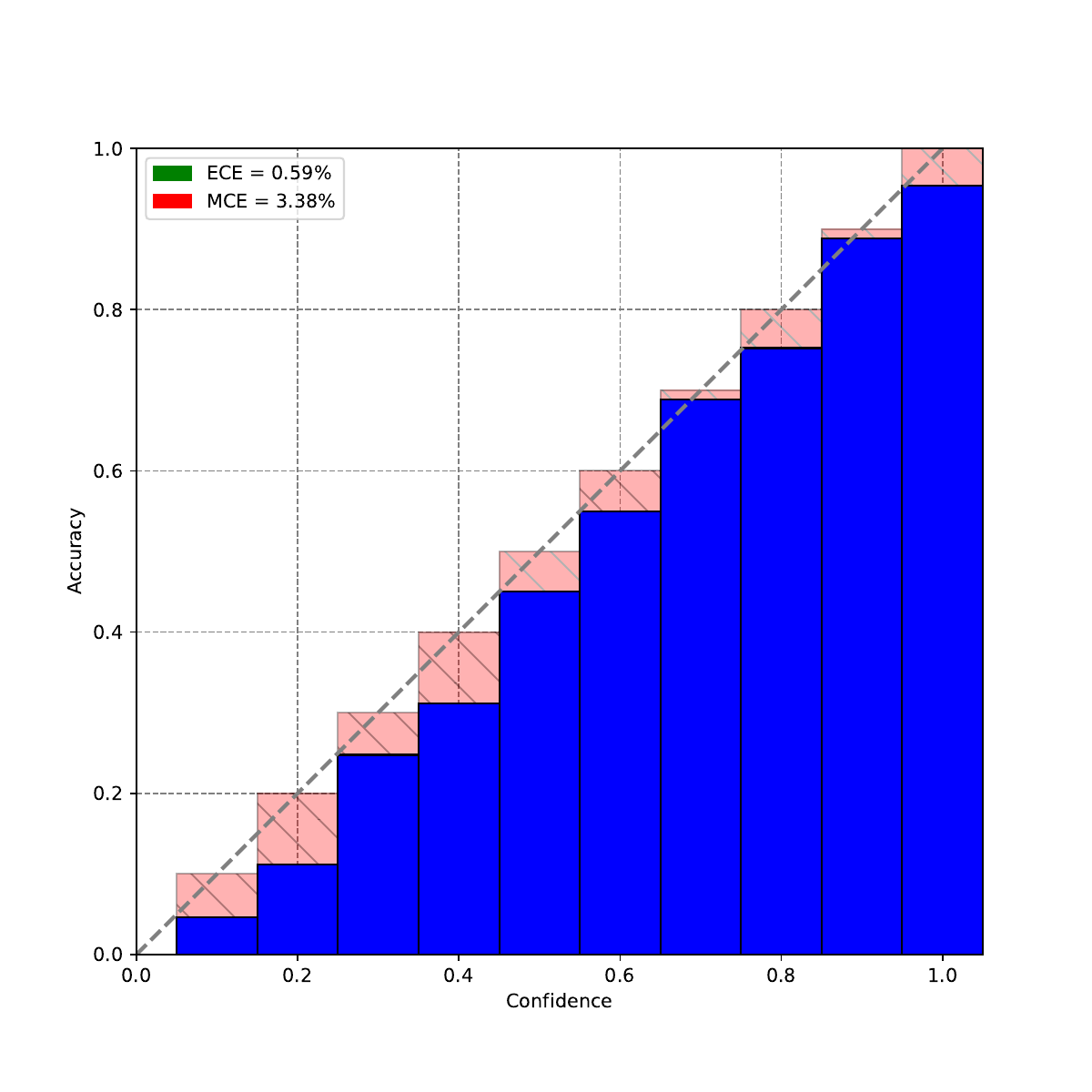}
		\end{subfigure}
		
		\caption{\textbf{Reliability plots for $\mathbf{P(N|C, X)}$:}. Reliability plots which overlay accuracy versus classifier confidence. (Left) Reliability plot for $P(N|C, X)$ without calibration. (Right) Reliability plot for $P(N|C, X)$ with temperature scaling calibration applied.}
		\label{fig:reliability}
	\end{figure}

\subsection{Covid X-Ray dataset}
		\begin{figure}[t!]
		\centering
		\includegraphics[width=0.8\linewidth]{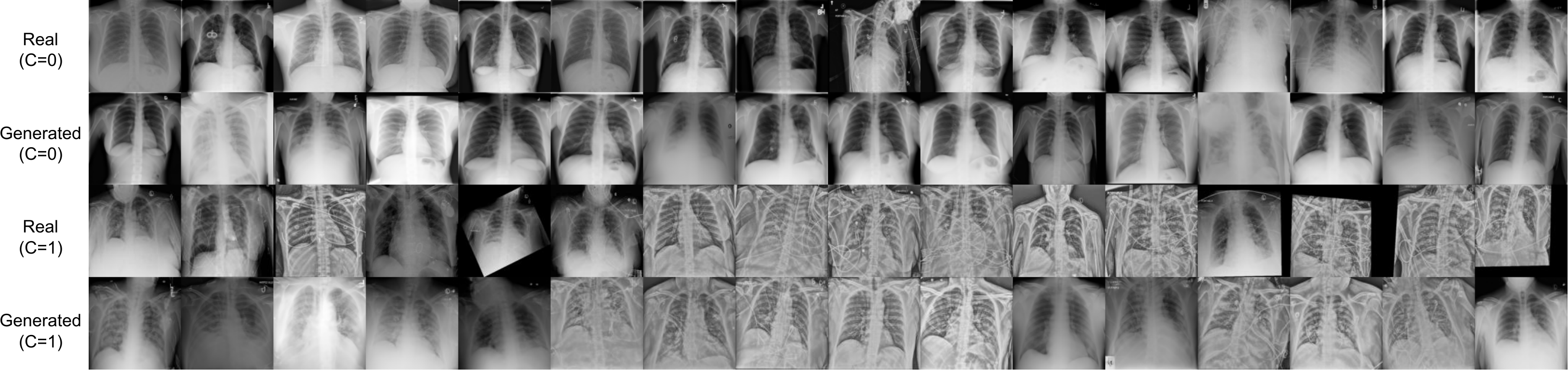}
		\caption{\textbf{Generated Covid XRay Images:} Generated chest XRay images from our diffusion model, separated by class and compared against real data.}
		\label{fig:cxray}
	\end{figure}
	\begin{table}[!t]
		\small
		\centering
		\caption{\textbf{(Left) Class-conditional FID scores for generated Covid XRAY images} (lower is better). Generated $C=c$, means we sample from the diffusion model conditioned on $c$. Real $(C=c)$ refers to a held out test set of approximately 5k images, partitioned based on $C$-value. Low values on the diagonal and high values on the off-diagonal imply we are sampling correctly from conditional distributions. \textbf{(Right) Evalution of Interventional Distribution $P_c(n)$.} We evaluate the distributions $P_c(n=1)$ for three cases for the Covid-XRAY dataset. Diffusion uses a learned diffusion model for $P(x|c)$, No Diffusion samples $P(x|c)$ empirically from a held out validation set, and no latent evaluates the conditional query assuming no latent confounders in the causal model.}
		\begin{minipage}{.5\linewidth}
			\centering
			\scriptsize
			\begin{tabular}{@{}lcc@{}}
				\toprule
				FID $(\downarrow)$  & Real: $C=0$   & Real: $C=1$           \\ \midrule
				Generated: $C=0$ & 15.77 & 61.29  \\
				Generated $C=1$ & 101.76 & 23.34  \\ \bottomrule
			\end{tabular}
		\end{minipage}%
		\begin{minipage}{.5\linewidth}
			\centering
			\scriptsize
			\begin{tabular}{@{}lcc@{}}
				\toprule
				$P_c(n=1)$        & $c=0$ & $c=1$           \\ \midrule
				Diffusion & 0.622 & 0.834  \\
				No Diffusion & 0.623 & 0.860  \\
				No Latent & 0.406 & 0.951  \\ \bottomrule
			\end{tabular}
		\end{minipage} 
		\label{tab:cxray}
	\end{table}
	\textbf{Data generation:} Next we apply our algorithm to a real dataset using chest X-rays on COVID-19 patients. Specifically, we download a collection of chest X-rays (X) where each image has binary labels for the presence/absence of COVID-19 (C), and pneumonia (N) \citep{Wang2020covid} \footnote{Labels are from \href{https://github.com/giocoal/CXR-ACGAN-chest-xray-generator-covid19-pneumonia/}{https://github.com/giocoal/CXR-ACGAN-chest-xray-generator-covid19-pneumonia/}}. We imbue the causal structure of the backdoor graph, where $C\to X$, $X\to N$, and there is a latent confounder affecting both $C$ and $N$ but not $X$. This may capture patient location, which might affect the chance of getting COVID-19, and quality of healthcare affecting the right diagnosis. 
	Since medical devices are standardized, location would not affect the X-ray image given COVID-19. We are interested in the interventional query $P_c(n)$: the treatment effect of COVID-19 on the presence of pneumonia.
	
	\textbf{Component Models:} Applying \idgen to this graph requires access to two conditional distributions: $P(x|c)$ and $P(n|x,c)$. Since $X$ is a high-dimensional image, we train a conditional diffusion model to approximate the former. Since $N$ is a binary variable, we train a classifier that accepts $X, C$ and returns a Bernoulli distribution for $N$. The generated sampling network operates by sampling an $X$ given the interventional $C$, and then sampling an auxiliary $C'\sim P(c')$ and feeding $X, C'$ to the classifier for $P(n|x,c)$, finally sampling from this distribution.

	\textbf{Evaluation:}
	Again we do not have access to the ground truth. Instead, we focus 
	on the evaluation of each component model, and we also perform an ablation on our diffusion model. We first evaluate the image quality of the diffusion model approximating $P(x|c)$. We evaluate the FID of generated samples versus a held-out validation set of 10K X-ray images. When samples are generated with $C$ taken from the 
	training distribution, we attain an FID score of $16.17$. We then evaluate the conditional generation by comparing class-separated FID evaluations and display these results in Table \ref{tab:cxray} (left). The classifier estimating $P(n|x,c)$ has an accuracy of 91.9\% over  validation set. We note that we apply temperature scaling \citep{guo2017calibration} to calibrate our classifier, where the temperature parameter is trained over a random half of the validation set. Temperature scaling does not change the accuracy, but it does vastly improve the reliability metrics; see Appendix. 
	
	Finally we evaluate the query of interest $P_c(n)$. Since we cannot evaluate the ground truth, we consider our evaluated $P_c(n)$ versus an ablated version where we replace the diffusion sampling mechanism with $\hat P(x|c)$, where we randomly select an X-ray image from the held-out validation set. We also consider the query $P_c(n)$ if there were no latent confounders in the graph, in which case, the interventional query $P_c(n)$ is equal to $P(n|c)$. We display the results in Table \ref{tab:cxray} (right).
	
	


\section{Pseudo-codes}
\label{sec:pseudo-code}

\begin{algorithm}[H]
	\footnotesize
	\caption{\footnotesize \idcdag($\mbf{Y}, \mbf{X}, \mbf{Z}, \data, G$) }
	\begin{algorithmic}[1]
		\STATE \textbf{Input:} target ${Y}$, intervention ${X}$ ,
		conditioning set ${Z}$.
		training data $\data$, $G$.
		\STATE {\bfseries Output:} A DAG of trained model to sample from $P_{\mbf{X}}(\mbf{Y|Z})$.
		\IF{ $\exists \alpha \in Z$ such that $({Y} \indep \alpha| {X}, {Z}\setminus \{\alpha\})_{G_{\overline{{X}}, \underline{\alpha}}}$}
		\STATE \textbf{return} \idcdag(${Y}, {X} \cup \{\alpha\}, {Z}\setminus \{\alpha\}, \data, G$)
		\ELSE
		\STATE $H_1$= \idgen(${Y\cup Z}, {X}, \data, \Xp=\emptyset, \Gp=G$)
		\STATE $\data' \sim H_1({X})$
		\STATE $H_2 = \emptyset$
		\STATE Add node $(X, \emptyset)$ and $(Z, \emptyset)$ to $H_2$
		\STATE Let $M_{Y}$ be a model trained on $\{Y, X, Z\}\sim \data'$
		such that $M_{Y}(X,Z)\sim P'(Y|X,Z)$ i.e., $ M_{Y}(X,Z)\sim P_{X}(Y|Z)$
		\STATE Add node $(Y, M_Y)$ to $H_2$
		\STATE Add edge $X\rightarrow Y$ and $Z\rightarrow Y$ to $H_2$
		\STATE \textbf{return} $H_2$
		\ENDIF
	\end{algorithmic}
	\label{alg:IDC-DAG}
\end{algorithm}


\begin{algorithm}[H]
	\footnotesize
	\caption{\footnotesize ID($\mathbf{y}, \mathbf{x}, P, G$) }
	\begin{algorithmic}[1]
		\STATE \textbf{Input:} $\mathbf{y}$, $\mathbf{x}$ value assignments , distribution $P$, $G$.
		\STATE {\bfseries Output:} Expression for $P_{\mathbf{x}}(\mathbf{y})$ in terms of $P$ or Fail$(F, F')$.
		\IF[\blue{Step:1}]{$\mathbf{x}=\emptyset$ }   
		\STATE \textbf{Return } $\sum_{\mbf{v}\setminus \mbf{y}}P(\mbf{v})$
		\ENDIF
		\IF[\blue{Step:2}]{$\mathbf{V}\setminus An(\mathbf{Y})_{G} 
			\neq \emptyset$}
		\STATE \textbf{Return} \textbf{ID}($\mathbf{y}, \mathbf{x}\cap An(\mathbf{Y})_G, \sum_{\mathbf{V}\setminus An(\mathbf{Y})_G}P,  G_{An(\mathbf{Y})}$)
		\ENDIF
		\STATE Let $\mathbf{W} = (\mathbf{V} \setminus \mathbf{X}) \setminus An(\mathbf{Y})_{G_{\overline{\mathbf{X}}}}$ \COMMENT{\blue{Step:3}}
		\IF{$\mathbf{W} \ne \emptyset$}
		\STATE \textbf{Return ID}$(\mathbf{y}, \mathbf{x}\cup \mathbf{w}, P,   G)$
		\ENDIF
		\IF[\blue{Step:4}]{$C(G \setminus \mathbf{X}) = \{S_1, \ldots, S_{k}\}$}
		\STATE \textbf{Return} $\sum_{\mbf{v\setminus  (y \cup x)} } \prod_i ID(s_i, \mbf{v}\setminus s_i, P, G)$
		\ENDIF
		\IF{$C(G \setminus \mathbf{X}) = \{S\}$}
		\IF[\blue{Step:5}]{$C(G) = \{G\}$}
		\STATE \textbf{Return FAIL}$(G, G \cap S)$
		\ENDIF
		\IF[\blue{Step:6}]{$S \in C(G)$}
		\STATE \textbf{Return} $\sum_{s\setminus \mbf{y}}\prod_{\{i | V_{i} \in S\}} P(v_{i} | v_{\pi}^{i-1})$
		\ENDIF
		\IF[\blue{Step:7}]{$\exists S' $ s.t. $S\subset S'\in C(G)$}  
		\STATE \textbf{Return ID}$(\mathbf{y}, \mathbf{x}\cap S', {P}=
		\prod_{\{i | V_{i} \in S'\}}P(V_{i}| V_{\pi}^{(i-1)}\cap S', v_{\pi}^{(i-1)}\setminus S'  ), G_{S'})$
		\ENDIF
		\ENDIF
	\end{algorithmic}
	\label{alg:ID}
\end{algorithm}

\begin{algorithm}[H]
	\footnotesize
	\caption{\footnotesize IDC(${Y}, {X}, {Z}, P, G$) }
	\begin{algorithmic}[1]
		\STATE \textbf{Input:} $x,y,z$ value assignments, $P$ a probability distribution, $G$ a causal diagram (an I-map of $P$).
		\STATE {\bfseries Output:} Expression for $P_{{X}}({Y|Z})$ in terms of $P$ or Fail(F, F').
		\IF{ $\exists \alpha \in Z$ such that $({Y} \indep \alpha| {X}, {Z}\setminus \{\alpha\})_{G_{\overline{{X}}, \underline{\alpha}}}$}
		\STATE \textbf{return} IDC(${Y}, {X} \cup \{\alpha\}, {Z}\setminus \{\alpha\}, \data, G$)
		\ELSE
		\STATE let $P'$ = ID($\mbf{y \cup z,x}, P, G$)
		\STATE return $P'/ \sum_{\mbf{y}} P'$
		\ENDIF
	\end{algorithmic}
	\label{alg:IDC}
\end{algorithm}

\clearpage
\section*{NeurIPS Paper Checklist}

\begin{enumerate}

\item {\bf Claims}
    \item[] Question: Do the main claims made in the abstract and introduction accurately reflect the paper's contributions and scope?
    \item[] Answer: \answerYes{} 
    \item[] Justification: We clearly enlist our claims and contributions in the abstract and in the introduction.
    \item[] Guidelines:
    \begin{itemize}
        \item The answer NA means that the abstract and introduction do not include the claims made in the paper.
        \item The abstract and/or introduction should clearly state the claims made, including the contributions made in the paper and important assumptions and limitations. A No or NA answer to this question will not be perceived well by the reviewers. 
        \item The claims made should match theoretical and experimental results, and reflect how much the results can be expected to generalize to other settings. 
        \item It is fine to include aspirational goals as motivation as long as it is clear that these goals are not attained by the paper. 
    \end{itemize}

\item {\bf Limitations}
    \item[] Question: Does the paper discuss the limitations of the work performed by the authors?
    \item[] Answer: \answerYes{} 
    \item[] Justification: We properly acknowledged our limitations in Appendix~\ref{appex-limitations}.
    
    \item[] Guidelines:
    \begin{itemize}
        \item The answer NA means that the paper has no limitation while the answer No means that the paper has limitations, but those are not discussed in the paper. 
        \item The authors are encouraged to create a separate "Limitations" section in their paper.
        \item The paper should point out any strong assumptions and how robust the results are to violations of these assumptions (e.g., independence assumptions, noiseless settings, model well-specification, asymptotic approximations only holding locally). The authors should reflect on how these assumptions might be violated in practice and what the implications would be.
        \item The authors should reflect on the scope of the claims made, e.g., if the approach was only tested on a few datasets or with a few runs. In general, empirical results often depend on implicit assumptions, which should be articulated.
        \item The authors should reflect on the factors that influence the performance of the approach. For example, a facial recognition algorithm may perform poorly when image resolution is low or images are taken in low lighting. Or a speech-to-text system might not be used reliably to provide closed captions for online lectures because it fails to handle technical jargon.
        \item The authors should discuss the computational efficiency of the proposed algorithms and how they scale with dataset size.
        \item If applicable, the authors should discuss possible limitations of their approach to address problems of privacy and fairness.
        \item While the authors might fear that complete honesty about limitations might be used by reviewers as grounds for rejection, a worse outcome might be that reviewers discover limitations that aren't acknowledged in the paper. The authors should use their best judgment and recognize that individual actions in favor of transparency play an important role in developing norms that preserve the integrity of the community. Reviewers will be specifically instructed to not penalize honesty concerning limitations.
    \end{itemize}

\item {\bf Theory Assumptions and Proofs}
    \item[] Question: For each theoretical result, does the paper provide the full set of assumptions and a complete (and correct) proof?
    \item[] Answer: \answerYes{} 
    \item[] Justification: We provide all our theoretical claims and their proofs in Appendix~\ref{appex:theoretical-proofs}.
    \item[] Guidelines:
    \begin{itemize}
        \item The answer NA means that the paper does not include theoretical results. 
        \item All the theorems, formulas, and proofs in the paper should be numbered and cross-referenced.
        \item All assumptions should be clearly stated or referenced in the statement of any theorems.
        \item The proofs can either appear in the main paper or the supplemental material, but if they appear in the supplemental material, the authors are encouraged to provide a short proof sketch to provide intuition. 
        \item Inversely, any informal proof provided in the core of the paper should be complemented by formal proofs provided in appendix or supplemental material.
        \item Theorems and Lemmas that the proof relies upon should be properly referenced. 
    \end{itemize}

    \item {\bf Experimental Result Reproducibility}
    \item[] Question: Does the paper fully disclose all the information needed to reproduce the main experimental results of the paper to the extent that it affects the main claims and/or conclusions of the paper (regardless of whether the code and data are provided or not)?
    \item[] Answer:\answerYes{} 
    \item[] Justification: We discuss reproducibility in Appendix~\ref{appex-reproduce}.
    \item[] Guidelines:
    \begin{itemize}
        \item The answer NA means that the paper does not include experiments.
        \item If the paper includes experiments, a No answer to this question will not be perceived well by the reviewers: Making the paper reproducible is important, regardless of whether the code and data are provided or not.
        \item If the contribution is a dataset and/or model, the authors should describe the steps taken to make their results reproducible or verifiable. 
        \item Depending on the contribution, reproducibility can be accomplished in various ways. For example, if the contribution is a novel architecture, describing the architecture fully might suffice, or if the contribution is a specific model and empirical evaluation, it may be necessary to either make it possible for others to replicate the model with the same dataset, or provide access to the model. In general. releasing code and data is often one good way to accomplish this, but reproducibility can also be provided via detailed instructions for how to replicate the results, access to a hosted model (e.g., in the case of a large language model), releasing of a model checkpoint, or other means that are appropriate to the research performed.
        \item While NeurIPS does not require releasing code, the conference does require all submissions to provide some reasonable avenue for reproducibility, which may depend on the nature of the contribution. For example
        \begin{enumerate}
            \item If the contribution is primarily a new algorithm, the paper should make it clear how to reproduce that algorithm.
            \item If the contribution is primarily a new model architecture, the paper should describe the architecture clearly and fully.
            \item If the contribution is a new model (e.g., a large language model), then there should either be a way to access this model for reproducing the results or a way to reproduce the model (e.g., with an open-source dataset or instructions for how to construct the dataset).
            \item We recognize that reproducibility may be tricky in some cases, in which case authors are welcome to describe the particular way they provide for reproducibility. In the case of closed-source models, it may be that access to the model is limited in some way (e.g., to registered users), but it should be possible for other researchers to have some path to reproducing or verifying the results.
        \end{enumerate}
    \end{itemize}

\item {\bf Open access to data and code}
    \item[] Question: Does the paper provide open access to the data and code, with sufficient instructions to faithfully reproduce the main experimental results, as described in supplemental material?
    \item[] Answer: \answerYes{} 
    \item[] Justification: 
     We provide our source code as supplementary material and also as a github repo.
    \item[] Guidelines:
    \begin{itemize}
        \item The answer NA means that paper does not include experiments requiring code.
        \item Please see the NeurIPS code and data submission guidelines (\url{https://nips.cc/public/guides/CodeSubmissionPolicy}) for more details.
        \item While we encourage the release of code and data, we understand that this might not be possible, so “No” is an acceptable answer. Papers cannot be rejected simply for not including code, unless this is central to the contribution (e.g., for a new open-source benchmark).
        \item The instructions should contain the exact command and environment needed to run to reproduce the results. See the NeurIPS code and data submission guidelines (\url{https://nips.cc/public/guides/CodeSubmissionPolicy}) for more details.
        \item The authors should provide instructions on data access and preparation, including how to access the raw data, preprocessed data, intermediate data, and generated data, etc.
        \item The authors should provide scripts to reproduce all experimental results for the new proposed method and baselines. If only a subset of experiments are reproducible, they should state which ones are omitted from the script and why.
        \item At submission time, to preserve anonymity, the authors should release anonymized versions (if applicable).
        \item Providing as much information as possible in supplemental material (appended to the paper) is recommended, but including URLs to data and code is permitted.
    \end{itemize}

\item {\bf Experimental Setting/Details}
    \item[] Question: Does the paper specify all the training and test details (e.g., data splits, hyperparameters, how they were chosen, type of optimizer, etc.) necessary to understand the results?
    \item[] Answer: \answerYes{} 
    \item[] Justification: We provide the experimental details in Appendix~\ref{appex:exp-details}.
    \item[] Guidelines:
    \begin{itemize}
        \item The answer NA means that the paper does not include experiments.
        \item The experimental setting should be presented in the core of the paper to a level of detail that is necessary to appreciate the results and make sense of them.
        \item The full details can be provided either with the code, in appendix, or as supplemental material.
    \end{itemize}

\item {\bf Experiment Statistical Significance}
    \item[] Question: Does the paper report error bars suitably and correctly defined or other appropriate information about the statistical significance of the experiments?
    \item[] Answer: \answerYes{} 
    \item[] Justification: 
    We provided appropriate information about the statistical significance of the MIMIC-CXR experiment in Section~\ref{sec:mimic}. We did not provide such evaluations for the Colored-MNIST or CelebA datasets, as we obtained consistent results in multiple runs. We plan to run our algorithm on larger datasets and will include error bars in those experiments.
    \item[] Guidelines:
    \begin{itemize}
        \item The answer NA means that the paper does not include experiments.
        \item The authors should answer "Yes" if the results are accompanied by error bars, confidence intervals, or statistical significance tests, at least for the experiments that support the main claims of the paper.
        \item The factors of variability that the error bars are capturing should be clearly stated (for example, train/test split, initialization, random drawing of some parameter, or overall run with given experimental conditions).
        \item The method for calculating the error bars should be explained (closed form formula, call to a library function, bootstrap, etc.)
        \item The assumptions made should be given (e.g., Normally distributed errors).
        \item It should be clear whether the error bar is the standard deviation or the standard error of the mean.
        \item It is OK to report 1-sigma error bars, but one should state it. The authors should preferably report a 2-sigma error bar than state that they have a 96\% CI, if the hypothesis of Normality of errors is not verified.
        \item For asymmetric distributions, the authors should be careful not to show in tables or figures symmetric error bars that would yield results that are out of range (e.g. negative error rates).
        \item If error bars are reported in tables or plots, The authors should explain in the text how they were calculated and reference the corresponding figures or tables in the text.
    \end{itemize}

\item {\bf Experiments Compute Resources}
    \item[] Question: For each experiment, does the paper provide sufficient information on the computer resources (type of compute workers, memory, time of execution) needed to reproduce the experiments?
    \item[] Answer: \answerYes{} 
    \item[] Justification:  We discuss about the computing resources for each experiment in Appendix~\ref{appex-comput-resource}.
    \item[] Guidelines:
    \begin{itemize}
        \item The answer NA means that the paper does not include experiments.
        \item The paper should indicate the type of compute workers CPU or GPU, internal cluster, or cloud provider, including relevant memory and storage.
        \item The paper should provide the amount of compute required for each of the individual experimental runs as well as estimate the total compute. 
        \item The paper should disclose whether the full research project required more compute than the experiments reported in the paper (e.g., preliminary or failed experiments that didn't make it into the paper). 
    \end{itemize}
    
\item {\bf Code Of Ethics}
    \item[] Question: Does the research conducted in the paper conform, in every respect, with the NeurIPS Code of Ethics \url{https://neurips.cc/public/EthicsGuidelines}?
    \item[] Answer: \answerYes{} 
    \item[] Justification: 
Yes, our research in the paper conform, in every respect, with the NeurIPS Code of Ethics 
    \item[] Guidelines:
    \begin{itemize}
        \item The answer NA means that the authors have not reviewed the NeurIPS Code of Ethics.
        \item If the authors answer No, they should explain the special circumstances that require a deviation from the Code of Ethics.
        \item The authors should make sure to preserve anonymity (e.g., if there is a special consideration due to laws or regulations in their jurisdiction).
    \end{itemize}

\item {\bf Broader Impacts}
    \item[] Question: Does the paper discuss both potential positive societal impacts and negative societal impacts of the work performed?
    \item[] Answer: \answerYes{} 
    \item[] Justification: We discuss the broader impact of our work in Appendix~\ref{appex-broader-impact}.
    \item[] Guidelines:
    \begin{itemize}
        \item The answer NA means that there is no societal impact of the work performed.
        \item If the authors answer NA or No, they should explain why their work has no societal impact or why the paper does not address societal impact.
        \item Examples of negative societal impacts include potential malicious or unintended uses (e.g., disinformation, generating fake profiles, surveillance), fairness considerations (e.g., deployment of technologies that could make decisions that unfairly impact specific groups), privacy considerations, and security considerations.
        \item The conference expects that many papers will be foundational research and not tied to particular applications, let alone deployments. However, if there is a direct path to any negative applications, the authors should point it out. For example, it is legitimate to point out that an improvement in the quality of generative models could be used to generate deepfakes for disinformation. On the other hand, it is not needed to point out that a generic algorithm for optimizing neural networks could enable people to train models that generate Deepfakes faster.
        \item The authors should consider possible harms that could arise when the technology is being used as intended and functioning correctly, harms that could arise when the technology is being used as intended but gives incorrect results, and harms following from (intentional or unintentional) misuse of the technology.
        \item If there are negative societal impacts, the authors could also discuss possible mitigation strategies (e.g., gated release of models, providing defenses in addition to attacks, mechanisms for monitoring misuse, mechanisms to monitor how a system learns from feedback over time, improving the efficiency and accessibility of ML).
    \end{itemize}
    
\item {\bf Safeguards}
    \item[] Question: Does the paper describe safeguards that have been put in place for responsible release of data or models that have a high risk for misuse (e.g., pretrained language models, image generators, or scraped datasets)?
    \item[] Answer: \answerNA{} 
    \item[] Justification: Our paper does not pose such risks.
    \item[] Guidelines:
    \begin{itemize}
        \item The answer NA means that the paper poses no such risks.
        \item Released models that have a high risk for misuse or dual-use should be released with necessary safeguards to allow for controlled use of the model, for example by requiring that users adhere to usage guidelines or restrictions to access the model or implementing safety filters. 
        \item Datasets that have been scraped from the Internet could pose safety risks. The authors should describe how they avoided releasing unsafe images.
        \item We recognize that providing effective safeguards is challenging, and many papers do not require this, but we encourage authors to take this into account and make a best faith effort.
    \end{itemize}

\item {\bf Licenses for existing assets}
    \item[] Question: Are the creators or original owners of assets (e.g., code, data, models), used in the paper, properly credited and are the license and terms of use explicitly mentioned and properly respected?
    \item[] Answer: \answerYes{} 
    \item[] Justification: We properly cited papers of all models, datasets we used.
    \item[] Guidelines:
    \begin{itemize}
        \item The answer NA means that the paper does not use existing assets.
        \item The authors should cite the original paper that produced the code package or dataset.
        \item The authors should state which version of the asset is used and, if possible, include a URL.
        \item The name of the license (e.g., CC-BY 4.0) should be included for each asset.
        \item For scraped data from a particular source (e.g., website), the copyright and terms of service of that source should be provided.
        \item If assets are released, the license, copyright information, and terms of use in the package should be provided. For popular datasets, \url{paperswithcode.com/datasets} has curated licenses for some datasets. Their licensing guide can help determine the license of a dataset.
        \item For existing datasets that are re-packaged, both the original license and the license of the derived asset (if it has changed) should be provided.
        \item If this information is not available online, the authors are encouraged to reach out to the asset's creators.
    \end{itemize}

\item {\bf New Assets}
    \item[] Question: Are new assets introduced in the paper well documented and is the documentation provided alongside the assets?
    \item[] Answer: \answerNA{} 
    \item[] Justification: The paper does not release new assets
    \item[] Guidelines:
    \begin{itemize}
        \item The answer NA means that the paper does not release new assets.
        \item Researchers should communicate the details of the dataset/code/model as part of their submissions via structured templates. This includes details about training, license, limitations, etc. 
        \item The paper should discuss whether and how consent was obtained from people whose asset is used.
        \item At submission time, remember to anonymize your assets (if applicable). You can either create an anonymized URL or include an anonymized zip file.
    \end{itemize}

\item {\bf Crowdsourcing and Research with Human Subjects}
    \item[] Question: For crowdsourcing experiments and research with human subjects, does the paper include the full text of instructions given to participants and screenshots, if applicable, as well as details about compensation (if any)? 
    \item[] Answer: \answerNA{} 
    \item[] Justification: Our paper does not involve crowdsourcing nor research with human subjects.
    \item[] Guidelines:
    \begin{itemize}
        \item The answer NA means that the paper does not involve crowdsourcing nor research with human subjects.
        \item Including this information in the supplemental material is fine, but if the main contribution of the paper involves human subjects, then as much detail as possible should be included in the main paper. 
        \item According to the NeurIPS Code of Ethics, workers involved in data collection, curation, or other labor should be paid at least the minimum wage in the country of the data collector. 
    \end{itemize}

\item {\bf Institutional Review Board (IRB) Approvals or Equivalent for Research with Human Subjects}
    \item[] Question: Does the paper describe potential risks incurred by study participants, whether such risks were disclosed to the subjects, and whether Institutional Review Board (IRB) approvals (or an equivalent approval/review based on the requirements of your country or institution) were obtained?
    \item[] Answer: \answerNA{} 
    \item[] Justification: Our paper does not involve crowdsourcing nor research with human subjects.
    \item[] Guidelines:
    \begin{itemize}
        \item The answer NA means that the paper does not involve crowdsourcing nor research with human subjects.
        \item Depending on the country in which research is conducted, IRB approval (or equivalent) may be required for any human subjects research. If you obtained IRB approval, you should clearly state this in the paper. 
        \item We recognize that the procedures for this may vary significantly between institutions and locations, and we expect authors to adhere to the NeurIPS Code of Ethics and the guidelines for their institution. 
        \item For initial submissions, do not include any information that would break anonymity (if applicable), such as the institution conducting the review.
    \end{itemize}

\end{enumerate}

\clearpage



\end{document}